\newtheoremstyle{mytheoremstyle} %
    {}                    %
    {}                    %
    {\itshape}                   %
    {}                           %
    {\scshape}                   %
    {.}                          %
    {.5em}                       %
    {}                           %
\theoremstyle{mytheoremstyle}
\newtheorem{theorem}{Theorem}
\newenvironment{thmbox}{%
    \vspace{0.75em}
    \begin{mdframed}[style=mdframedthmbox]%
}{%
    \end{mdframed}%
    \ignorespacesafterend%
}
\let\ts\textstyle
\let\ds\displaystyle
\def\m!{\mkern-2mu}
\def\n!{\mkern-1mu}
\def\p!{\mkern-.5mu}
\def\rm!{\mkern2mu}
\def\rn!{\mkern1mu}
\let\nablasymbol\nabla
\RenewDocumentCommand{\nabla}{e{_^}}{%
    \mathop{}\!%
    \nablasymbol%
    \IfValueT{#1}{_{\mspace{-4mu}#1}}%
    \IfValueT{#2}{^{#2}}%
}
\newcommand{\acro}[1]{{\smaller #1}}
\newcommand{\EM}{\acro{EM}\xspace}
\newcommand{\aGD}{\acro{GD}\xspace}
\newcommand{\aKL}{\acro{KL}\xspace}
\newcommand{\aNLL}{\acro{NLL}\xspace}
\newcommand{\aM}{\acro{M}\xspace}
\newcommand{\aE}{\acro{E}\xspace}
\newcommand{\aMLE}{\acro{MLE}\xspace}
\newcommand{\aMAP}{\acro{MAP}\xspace}
\newcommand{\aEM}{\acro{EM}\xspace}
\newcommand{\aFIM}{\acro{FIM}\xspace}
\newcommand{\aECM}{\acro{ECM}\xspace}
\newcommand{\Estep}{\aE-step\xspace}
\newcommand{\Mstep}{\aM-step\xspace}
\newcommand{\data}{x}
\newcommand{\lat}{z}
\newcommand{\iter}{t}
\newcommand{\iterstart}{1}
\newcommand{\maxiter}{T}
\newcommand{\fullsum}{\sum_{\iter=\iterstart}^\maxiter}
\newcommand{\fullmin}{\min_{\iter\leq\maxiter}}
\newcommand{\np}{\theta}
\newcommand{\nptm}{\np_{\iter-1}}
\newcommand{\npt}{\np_\iter}
\newcommand{\nptt}{\np_{\iter+1}}
\newcommand{\phit}{\phi_\iter}
\newcommand{\phitt}{\phi_{\iter+1}}
\newcommand{\npT}{\np_{\maxiter}}
\newcommand{\npStart}{\np_{\iterstart}}
\renewcommand{\mp}{\mu}
\newcommand{\mpt}{\mp_\iter}
\newcommand{\mptt}{\mp_{\iter+1}}
\newcommand{\A}{\m!A}
\newcommand{\AS}{\m!A^{\n!*}\n!}
\newcommand{\Breg}[1]{D_{#1}}
\newcommand{\D}{\Breg{\n!\A\n!}}
\newcommand{\DS}{\Breg{\n!\AS\n!}}
\newcommand{\stats}{S}
\newcommand{\statso}{s}
\newcommand{\s}{s}
\newcommand{\thetat}{\npt}
\newcommand{\thetatt}{\nptt}
\newcommand{\Aug}{\Loss^{\m!+}}
\NewDocumentCommand{\Q}{mg}{Q(\IfNoValueTF{#2}{\cdot}{#2} \cond #1)}
\NewDocumentCommand{\Hent}{mg}{H(\IfNoValueTF{#2}{\cdot}{#2}m\cond #1)}
\RenewDocumentCommand{\Q}{mg}{Q_{\p!#1\p!}(\IfNoValueTF{#2}{\cdot}{#2})}
\RenewDocumentCommand{\Hent}{mg}{H_{\p!#1\p!}(\IfNoValueTF{#2}{\cdot}{#2})}
\NewDocumentCommand{\Qprime}{mg}{Q'_{\p!#1\p!}(\IfNoValueTF{#2}{\cdot}{#2})}
\NewDocumentCommand{\Qt}{g}{\Q{\npt\n!}{#1}}
\NewDocumentCommand{\Ht}{g}{\Hent{\npt\n!}{#1}}
\newcommand{\QtStar}{Q_{\p!\npt\n!\p!}^{*}}
\newcommand{\const}{\text{const}}
\DeclareMathOperator{\jac}{J\!}
\DeclareMathOperator{\Jac}{J\!}
\NewDocumentCommand{\expectation}{mm}{%
    \int \tl_if_empty:nTF{#1}{}{#1\,\,} #2 \dif{z}
}
\newcommand{\appendixtitle}[1]{
    ~\vspace{-1em}
    \hsize\textwidth\linewidth\hsize\toptitlebar%
    \vspace{-.6em}{\centering\Large\bfseries #1 \par}%
    \bottomtitlebar%
    \vskip 0.2in plus 1fil minus 0.1in%
}
\NewDocumentCommand{\bigOT}{g}{\mathop{\tilde{\mathcal{O}}}\IfNoValueF{#1}{\nleft(#1\nright)}}
\RenewDocumentCommand{\KL}{smm}{%
    \mathrm{KL}{%
        \IfBooleanTF{#1}{%
            [\rn!#2\rn!\Vert\rn!#3\rn!]%
        }{%
            \nleft[\rn!#2\rn!\middle\Vert\rn!#3\rn!\nright]%
        }%
    }%
}
\newcommand{\someset}{\Theta}
\newcommand{\eqnumberref}[1]{\raisebox{.05em}{$\scriptstyle{(#1)}$}}%
\newcommand{\nablaLoss}{\nabla\m!\Loss}
\renewcommand{\medint}{\raisebox{.1em}{$\n!\ds\scaleobj{0.8}{\int}\n!$}}
\newcommand{\um}[1]{\scalebox{0.75}[1.0]{$#1$}}
\def\mneg{\um{\,-\m!\n!}}
\RenewDocumentCommand{\Expect}{som}{%
    \E_{\IfNoValueTF{#2}{}{#2}}%
    \IfBooleanTF{#1}{%
        [#3]%
    }{%
        \nleft[#3\nright]%
    }%
}%
\newcommand{\Dh}{\Breg{h}}
\renewcommand{\dh}{\nabla h}
\newcommand{\dA}{\nabla\A}
\newcommand{\dAS}{\nabla\AS}
\newcommand{\inleq}{\m!=\m!}
\newcolumntype{L}[1]{>{\raggedright\let\newline\\\arraybackslash\hspace{0pt}}m{#1}}
\newcolumntype{C}[1]{>{\centering\let\newline\\\arraybackslash\hspace{0pt}}m{#1}}
\newcolumntype{R}[1]{>{\raggedleft\let\newline\\\arraybackslash\hspace{0pt}}m{#1}}
\newcommand{\peqref}[1]{(\hyperref[#1]{Equation~\ref{#1}})}
\newcommand{\makesection}[1]{\section{\MakeUppercase{#1}}}
\newcommand{\figscale}[1]{\scalebox{1.0}{#1}}
\newcommand{\thetitle}{%
Homeomorphic-Invariance of \aEM{}: Non-Asymptotic Convergence\\
in \aKL Divergence for Exponential Families via Mirror Descent}
\begin{document}

\runningtitle{Homeomorphic-Invariant Analysis of EM}
\twocolumn[
    \aistatstitle{\thetitle}%
    \aistatsauthor{ Frederik Kunstner \And Raunak Kumar \And Mark Schmidt}%
    \aistatsaddress{University of British Columbia \And Cornell University \And University of British Columbia\\ 
    Canada \acro{CIFAR} \acro{AI} Chair ({Amii})
    }%
]

\makeatletter 
\renewcommand{\Notice@String}{%
ArXiv version. 
Accepted at AISTATS \@conferenceyear.\\
{\url{proceedings.mlr.press/v130/kunstner21a.html}}}
\makeatother

\begin{abstract}\noindent\vskip-.1em
Expectation maximization (\EM{}) is the default algorithm 
for fitting probabilistic models with missing or latent variables, 
yet we lack a full understanding of its non-asymptotic convergence properties.
Previous works show results along the lines of
``\EM{} converges at least as fast as gradient descent''
by assuming the conditions for the convergence of gradient descent 
apply to \EM{}.
This approach is not only loose, 
in that it does not capture that \EM{} can make more progress than a gradient step, 
but the assumptions fail to hold for textbook examples of \EM{}
like Gaussian mixtures.
In this work we first show that for the common setting of exponential
family distributions,
viewing \EM{} as a mirror descent algorithm leads to convergence 
rates in Kullback-Leibler (\aKL) divergence.
Then, we show how the \aKL divergence 
is related to first-order stationarity via Bregman divergences.
In contrast to previous works, the analysis is invariant to the choice of parametrization and holds with minimal assumptions.
We also show applications of these ideas to local linear (and superlinear) convergence rates, generalized \aEM,  and non-exponential family distributions.
\end{abstract}

\makesection{Introduction}

Expectation maximization (\aEM) is the most common approach 
to fitting probabilistic models with missing data or latent variables. 
\aEM was formalized by \citet{dempster1977maximum}, 
who discussed a wide variety of earlier works that independently discovered the algorithm
and domains where \aEM is used.
They
already listed
multivariate sampling,
normal linear models, 
finite mixtures, 
variance components, 
hyperparameter estimation, 
iteratively reweighted least squares, and factor analysis. 
To this day, \aEM continues to be used for these applications and others, 
like semi-supervised learning \citep{ghahramani1995learning},
hidden Markov models \citep{rabiner1989tutorial},
continuous mixtures %
\citep{caron2008sparse}, 
mixture of experts \citep{jordan1995convergence}, 
image reconstruction \citep{figueiredo2003algorithm}, 
and graphical models \citep{lauritzen1995algorithm}. 
The many applications of \aEM have made the work of \citeauthor{dempster1977maximum}
one of the most influential in the field.

\begin{figure}[t]
	\vspace{-.5em}
	\centering
	\adjustbox{trim={.0\width} {.165\height} {.0\width} {.08\height}, clip}{%
		\includegraphics[width=.8\columnwidth]{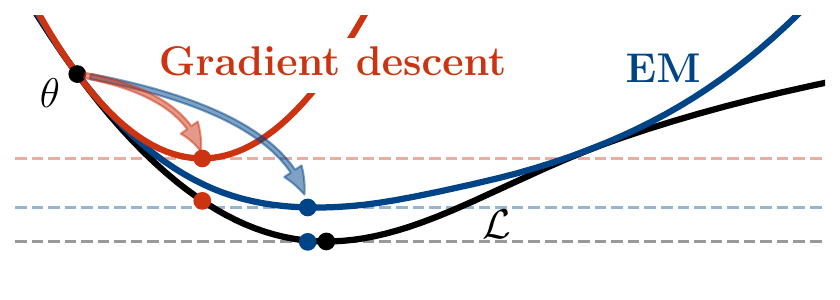}%
	}
	\caption{
		The surrogate optimized by \aEM is a tighter bound 
		on the objective $\Loss$ than the
		 quadratic bound implied by smoothness,
		optimized by gradient descent.
	}
	\vskip-.5em
	\label{fig:double-surrogate}
\end{figure}

Since the development of \aEM 
and subsequent clarifications on the necessary conditions for convergence
\citep{boyles1983convergence,wu1983convergence}, 
a large number of works have shown convergence results for \aEM
and its many extensions, leading to a variety of insights about the algorithm,
such as the effect of the ratio of missing information
\citep{xu1996convergence,ma2000asymptotic}
and the sample size 
\citep{wang2015high,yi2015regularized,daskalakis2016ten,balakrishnan2017statistical}.
However, existing results on the global, non-asymptotic convergence of \aEM\ 
rely on proof techniques developed for gradient descent on smooth functions, 
which rely on quadratic upper-bounds on the objective.\footnotemark
\footnotetext{
As \aEM is a maximization algorithm,
we should say ``gradient ascent'' and ``lower-bound''. 
But we use the language of minimization 
to make connections to ideas from the 
optimization
literature more explicit.
}
Informally, this approach argues that the maximization step of 
the surrogate constructed by \aEM\
does at least as well as gradient descent on a quadratic surrogate 
with a constant step-size, as illustrated in \cref{fig:double-surrogate}.

\begin{figure}[t]
\centering
	\centering
	\adjustbox{height=2.25cm}{%
		\adjustbox{trim={.02\width} {.07\height} {.02\width} {.06\height},clip}{%
			\includegraphics[width=.76\columnwidth]{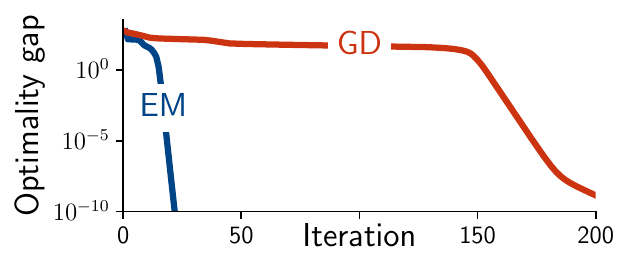}%
		}%
	}
	\caption{
		Performance of \aEM and gradient descent (\aGD)
		with constant step-size, selected by grid-search, for a Gaussian mixture model on the Old Faithful dataset.
		The large gap between the two methods suggests 
		that existing theory for gradient descent 
		is insufficient to explain the performance of \aEM.
	}
	\label{fig:em-vs-gd}
\end{figure}

The use of smoothness as a starting assumption leads to results that imply
that \aEM\ behaves as a gradient method with a constant step-size.
If true, there would be no difference between \aEM\
and its gradient-based variants \citep[e.g.][]{lange2000optimization}.
This does not hold, however, and the resulting convergence rates are inevitably loose;
\aEM\ makes more progress %
than this worst-case bound
even on simple problems, as shown in \cref{fig:em-vs-gd}.

Another issue is that, 
similarly to how Newton's method is invariant to affine reparametrizations,
\aEM is invariant to any homeomorphism \citep{varadhan2004squared};
the steps taken by \aEM are the same for any continuous, invertible reparametrization.
This is not reflected by current analyses
because the parametrization of the problem influences
the smoothness of the function and the resulting convergence rate.
For these reasons, the general frameworks proposed in the optimization literature
\citep{xu2013block,mairal2013optimization,razaviyayn2014successive,paquette2018catalyst}
where \aEM\ is a special case, 
do not reflect that \aEM\ is faster than typical members of these frameworks
and yield loose analyses.

Most importantly, the assumption that the objective function 
is bounded by a quadratic does not hold in general.
Results relying on smoothness do not apply, for example,
to the standard textbook illustration of \aEM:
Gaussian mixtures with learned covariance matrices
\citep{bishop2007prml, murphy2012probabilistic}.
This is shown in \cref{fig:non-smooth-gaussian}.
The smoothness assumption might be a reasonable simplification for local analyses, 
as it only needs to hold over a small subspace of the parameter space.
In this setting, it does not detract from the main contribution of works 
investigating statistical properties or large-sample behavior.
It does not hold, however, for global convergence analyses with arbitrary initializations.
Our focus in this work is analyzing the classic \aEM algorithm when run for a finite number of iterations on a finite dataset, the setting in which people have been using \aEM for over 40 years and continue to use today.

We focus on the application of \aEM to exponential family models, 
of which Gaussian mixtures are a special case. 
Exponential families are by far the most common setting 
and an important special case as the \Mstep has a closed form solution.
Modern stochastic and online extension of \aEM
also rely on the form of exponential families to efficiently 
summarize past data 
\citep{neal1998view,sato1999fastlearning,delyon1999convergence}.

The main tool for the analysis is the 
Kullback-Leibler (\aKL) divergence to describe distances between parameters.
This approach was initially used to derive asymptotic convergence results
\citep{csiszar1984information,chretien2000kullback,tseng2004analysis}
and to describe extensions of \aEM or \aEM-like algorithms
\citep[e.g.][]{banerjee2005clustering,brookes2020viewofem,amid2020divergence}.
But it has not yet been applied to non-asymptotic convergence analyses.
By using the \aKL divergence between the distributions
rather than the Euclidean distance between their parameters,
the results do not rely on invalid smoothness assumptions
and are invariant to the choice of parametrization.

Focusing on convergence to a stationary point, 
as the \aEM objective $\Loss$ is non-convex, 
an informal summary of 
the main difference between previous analyses using smoothness and 
our results is that, after $T$ iterations,

\vspace{.1em}
\begin{tabular}{rllcr}
	{\bf \smaller Smoothness:}      & $\!\!\fullmin \norm{\nablaLoss(\npt)}^2$ 
	& $\!\!\!\!\!\!\leq\!\!\!\!\!\!$ & $L \frac{\Loss(\npStart) - \Loss^*}{\maxiter}$
	\\[.4em]
	{\bf \smaller \aKL divergence:} & $\!\!\fullmin \KL{\nptt}{\npt}$			  
	& $\!\!\!\!\!\!\leq\!\!\!\!\!\!$ & $\hphantom{L} \frac{\Loss(\npStart) - \Loss^*}{\maxiter}$
\end{tabular}
\vspace{.1em}

where
$\Loss^*$ is the optimal value of the objective,
$\Loss(\npStart) - \Loss^*$ is the initial optimality gap
and $L$ is the smoothness constant.
For non-smooth models, such as Gaussians with learned
covariances 
(Fig.~\ref{fig:non-smooth-gaussian}),
$L = \infty$ and the bound is vacuous, 
whereas bounds in \aKL divergence
do not depend on problem-specific constants.
We show how the \aKL divergence relates to stationarity conditions 
for non-degenerate problems in \cref{sec:convergence-results}.

\begin{figure}[t]
	\centering
	\adjustbox{height=2.25cm}{%
			\adjustbox{trim={.01\width} {.005\height} {.03\width} {.02\height},clip}{%
				\includegraphics[width=0.9\columnwidth]{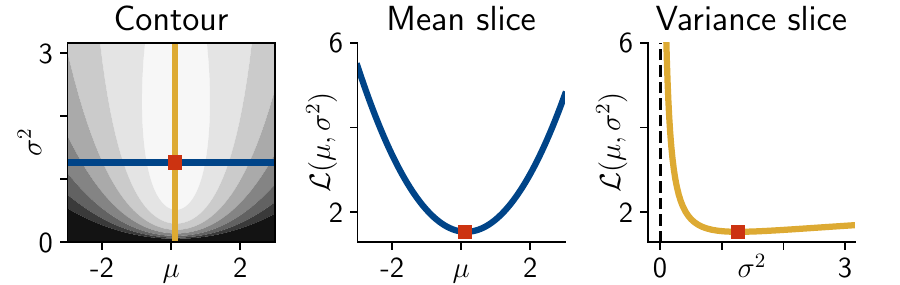}%
			}%
	}
	\caption{
		An exponential family distribution that cannot be smooth;
		fitting a Gaussian $\Normal{\mu, \sigma^2}$, 
		including its variance.
		As the loss diverges to $\infty$, 
		the objective cannot be upper-bounded by a quadratic function.
	}
	\label{fig:non-smooth-gaussian}
\end{figure}

The key observation for exponential families is that \Mstep
iterations match the moments 
of the model to the sufficient statistics of the data.
We show that in this setting, \aEM can be interpreted as a mirror descent update, 
where each iteration minimizes the linearization of the objective 
and a \aKL divergence penalization term 
(rather than the gradient descent update 
which uses the Euclidean distance between parameters instead).
While the connection between \aEM and exponential families is far 
from new, as it predates the codification of \aEM by \citet{dempster1977maximum}
\citep[eg,][]{blight70estimation},
the further connection to mirror descent to describe its behavior is, 
to the best of our knowledge, not acknowledged in the literature.
More closely related to general optimization, 
our work can be seen as an application of the recent perspective of mirror descent 
as defining smoothness relative to a reference function,
as presented by \citet{bauschke2017descent,lu2018relatively}.

Our main results are that we:
\begin{itemize}[leftmargin=1.5em]
	\item Show that \aEM for the exponential family is a mirror descent algorithm, 
	and that the \aEM objective is relatively smooth
	in \aKL divergence.
	\item Show the first homeomorphic-invariant non-asymptotic \aEM convergence rate, and how the \aKL divergence between iterates is related to stationary points and the natural gradient.
	\item Show how the ratio of missing information affects the non-asymptotic linear (or superlinear) convergence rate of \aEM around minimizers.
	\item Extend the results to generalized \aEM, 
	where the \Mstep is only solved approximately. 
	\item Discuss how to handle cases where the \Mstep 
	is not in the exponential family
	(and might be non-differentiable) by analyzing %
	the \Estep.
\end{itemize}

\newcommand{\sectionNameEmAndExpFam}{Expectation-Maximization and Exponential Families}%
\makesection{\sectionNameEmAndExpFam{}}
\label{sec:em-and-expfam}
Before stating our results,
we introduce the \aEM algorithm and necessary background on exponential families.
For completeness, we provide additional details in \cref{app:recap} and 
refer the reader to \citet{wainwright2008graphical} for a full treatment of the subject.

\aEM applies when we want to maximize the likelihood $p(x \cond \np)$ 
of data $x$ given parameters $\np$, 
where the likelihood depends on unobserved variables $z$.
By marginalizing over $z$, we obtain the 
negative log-likelihood (\aNLL), that we want to minimize (to maximize the likelihood),
\alignn{%
	\Loss(\np) 
	= - \log p(\data \cond \np)
	= - \log \medint p(\data, \lat \cond \np) \dif{z},
	\label{eq:objective}
}
where $p(\data, \lat \cond \np)$ is the complete-data likelihood.
The integral here is multi-dimensional if $z$ is, 
and a summation for discrete values, 
but we write all cases as a single integral for simplicity.
\aEM is most useful when the complete-data \aNLL, $-\log p(\data, \lat \cond \np)$,
is a convex function of $\np$
and solvable in closed form if $z$ were known.
\aEM defines the surrogate $\Q{\np}{\phi}$,
which estimates $\Loss(\phi)$ using the expected values 
for the latent variables at $\np$,
\aligns{
	\Q{\np}{\phi} = - \medint \log p(\data, \lat \cond \phi) \, p(\lat \cond \data, \np) \dif{\lat},
}
and iteratively updates $\np_{t+1} \in \argmin_\phi \Q{\np_t}{\phi}$.
The computation of the surrogate $\Q{\np}$ and its minimization 
are typically referred to as the \Estep and \Mstep.

A useful decomposition of the surrogate, 
shown by \citet{dempster1977maximum}, is the equality
\alignn{\label{eq:em-as-loss-and-entropy}\begin{aligned}
	&\Q{\np}{\phi} = \Loss(\phi) + \Hent{\np}{\phi},
	\\
	&\text{where }
	\Hent{\np}{\phi} = - \medint \log p(\lat \cond \data, \phi) \, p(\lat \cond \data, \np) \dif{\lat}
\end{aligned}}
is an entropy-like term minimized at $\phi=\np$. That is,
\aligns{%
	\Hent{\np}{\np} \leq \Hent{\np}{\phi}
	&&\text{and}&&
	\nabla \Hent{\np}{\np} = 0.
}
This gives two fundamental results about \aEM.
Up to a constant, the surrogate is an upper-bound on the objective 
and improvement on $Q_{\np}$ translates to improvement on $\Loss$,
and the gradients of the loss and the surrogate
match at the point it is formed, 
$\nabla \Q{\np}{\np} = \nablaLoss(\np)$.

\subsection{Exponential families}

\begin{figure}[t]%
	\centering%
		\adjustbox{trim={.0\width} {.075\height} {.0\width} {.075\height},clip}{%
			\includegraphics[width=.99\columnwidth]{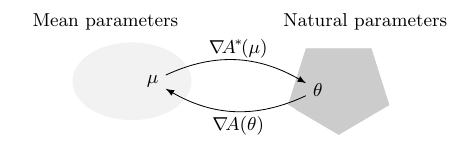}%
		}%
	\caption{
		The gradient of the log-partition function and its dual, $(\nabla A, \nabla \AS)$,
		form a bijection between the natural and mean parameters $\np, \mu$.
	}
	\label{fig:duality-draft}
\end{figure}

Many canonical applications of \aEM,
including mixture of Gaussians, are special cases 
where the complete-data distribution, given a value for the latent variable $z$, 
is an exponential family distribution;
\alignn{\label{eq:primer-exp-fam}\begin{aligned}
	p(\data, \lat \cond \np) &\propto 
	\exp\paren{\lin{\stats(\data, \lat), \np} - \A(\np)},
\end{aligned}}
where
$\stats$, $\np$, and $\A$ are the 
sufficient statistics, 
natural parameters, 
and 
log-partition function
of the distribution.
Exponential family models are an important special case 
as the \Mstep has a closed form solution, 
and the update depends on the data only through the sufficient statistics.
The solution for the maximum likelihood estimate (\aMLE) given $x$ and $z$
can be found from the stationary point of the complete log-likelihood, 
\alignn{\label{eq:stationary-point}
	\nabla \log p(\data, \lat \cond \np) = 
	\stats(\data, \lat) - \nabla \A(\np) = 0.
}
The gradient $\nabla \A$
yields the expected sufficient statistics,
$\nabla \A(\np) = 
\mathbb{E}\,
\raisebox{-.1em}{$\scriptstyle p(\data, \lat \cond \np)$} 
[\stats(\data, \lat)]$,
also called mean parameters and denoted by $\mu$.
The log-partition function defines
a bijection between 
the natural and mean parameters.
Its inverse is given by $\nabla \AS$, 
the gradient of the convex conjugate of $\A$, %
$
	\AS(\mu) = \sup_\np \braces{\lin{\np, \mu} - \A(\np)},
$
such that $\mp = \nabla \A(\np)$ and $\np = \nabla \AS(\mp)$,
as illustrated in \cref{fig:duality-draft}.
The solution to \cref{eq:stationary-point}, given by 
\aligns{
	\nabla \A(\np) = \stats(\data, \lat) && \implies &&
	\np = \nabla \AS(\stats(\data, \lat)),
}
is called moment matching %
as its finds the parameter that, 
in expectation, generates the observed statistics.

To connect \aEM\ and mirror descent, we use the Bregman divergence induced 
by a convex function $h$; the difference between the function and its linearization,
\alignn{
	\label{eq:primer-bregman-def}
	\Breg{h}(\phi, \np) = h(\phi) - h(\np) - \lin{\nabla h(\np), \phi - \np}.
}
For exponential families, the Bregman divergence induced 
by the log-partition $\A$ is the \aKL divergence
\aligns{
	\D(\phi, \np) = \KL{p(\data, \lat \cond \np)}{p(\data, \lat \cond \phi)}.	
}
The Bregman divergences induced by $\A$ 
and its conjugate $\AS$ have the following relation (note the ordering)
\alignn{\label{eq:dual-bregman}
	\D(\phi, \np) = \DS(\nabla \A(\np), \nabla \A(\phi)).
}
Both expressions are the same \aKL divergence, but differ 
in the parametrization used to express the distributions.

\newcommand{\sectionNameEmAndMD}{EM and Mirror Descent}
\makesection{\sectionNameEmAndMD{}}
\label{sec:em-and-mirror-descent}

Although \aEM iterations strictly decrease in the objective function
if such decrease is possible locally, this does not directly imply convergence
to stationary points, even asymptotically \citep{boyles1983convergence,wu1983convergence}.
The progress at each step could decrease faster than the objective.
Characterizing the progress to ensure convergence requires additional assumptions.

Local analyses typically assume that the \aEM 
update contracts 
the distance to a 
local minima $\np^*\!$,
\aligns{
	\norm{\nptt - \np^*} \leq c \norm{\npt - \np^*},
}
for some $c < 1$.
On the other hand, global analyses typically assume the surrogate is \emph{smooth},
meaning that
\aligns{
	\norm{\nabla \Q{\boldsymbol\cdot}{\np} - \nabla \Q{\boldsymbol\cdot}{\phi}}
	\leq 
	L \norm{\np - \phi}.
}
for all $\theta$ and $\phi$,
and some fixed constant $L$. This is equivalent
to assuming the following upper bound holds,
\aligns{
	\Loss(\phi) \leq \Loss(\np) + \lin{\nablaLoss(\np), \phi - \np} 
	+ \frac{L}{2}\norm{\np - \phi}^2.
}
While the local analyses assumptions are reasonable, 
the worst-case value of $L$ for global results can be infinite, 
as in the simple example of \cref{fig:non-smooth-gaussian}.
Instead, we show that the following upper-bound
in \aKL divergence
holds without additional assumptions.

\begin{restatable}{proposition}{restateEmAsMd}
\label{prop:equivalence}
For exponential family distributions, 
the \Mstep update in Expectation-Maximization 
is equivalent to the minimization of the following upper-bound;
\alignn{\label{eq:mirror-descent-upper-bound}
	\Loss(\phi)
	\leq
	\Loss(\np) 
	+ \lin{\nablaLoss(\np), \phi - \np}
	+ \D(\phi, \np),
}
where $\A$ is the log-partition of the complete-data distribution,
and $\D(\phi,\np) = \KL{p(\data,\lat\cond\np)}{p(\data,\lat\cond\phi)}$.
\end{restatable}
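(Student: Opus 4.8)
The statement bundles two claims: the displayed inequality \cref{eq:mirror-descent-upper-bound} holds, and its right-hand side is minimized at the same $\phi$ as the \Mstep surrogate $\Q{\np}{\cdot}$. My plan is to dispatch both at once by showing that, for exponential families, the right-hand side — call it $g(\phi)$ — agrees with $\Q{\np}{\phi}$ up to an additive constant independent of $\phi$. Once $g(\phi) = \Q{\np}{\phi} + \const$ is in hand, equality of minimizers is immediate, and the inequality collapses to the entropy property already recorded in \cref{eq:em-as-loss-and-entropy}.

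First I would make the surrogate explicit. Substituting the exponential-family form \peqref{eq:primer-exp-fam} into the definition of $\Q{\np}{\phi}$ and integrating $z$ out against the posterior $p(\lat\cond\data,\np)$, every term not involving $\phi$ collects into a constant, leaving $\Q{\np}{\phi} = \const - \lin{\bar{\stats}, \phi} + \A(\phi)$, where $\bar{\stats}$ denotes the expectation of $\stats(\data,\lat)$ under $p(\lat\cond\data,\np)$. The $\phi$-dependence lives entirely in the linear term and in $\A(\phi)$. Differentiating this expression at $\phi=\np$ and invoking $\nabla\Q{\np}{\np}=\nablaLoss(\np)$ from \cref{eq:em-as-loss-and-entropy} yields the key identity $\nablaLoss(\np) = \nabla\A(\np) - \bar{\stats}$.

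I would then expand $g(\phi)$ using the Bregman form $\D(\phi,\np) = \A(\phi) - \A(\np) - \lin{\nabla\A(\np),\phi-\np}$ together with this identity. The two occurrences of $\nabla\A(\np)$ cancel, leaving $g(\phi) = \const - \lin{\bar{\stats},\phi} + \A(\phi)$ — the same $\phi$-dependent part as $\Q{\np}{\phi}$. Hence the two functions differ only by a constant and share the stationary point $\nabla\A(\phi)=\bar{\stats}$, i.e. $\phi=\nabla\AS(\bar{\stats})$, which is exactly the moment-matching \Mstep. Evaluating at $\phi=\np$, where $g(\np)=\Loss(\np)$ and $\Q{\np}{\np}=\Loss(\np)+\Hent{\np}{\np}$, pins the constant: $g(\phi) = \Q{\np}{\phi} - \Hent{\np}{\np} = \Loss(\phi) + \Hent{\np}{\phi} - \Hent{\np}{\np}$. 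The claimed inequality $\Loss(\phi)\le g(\phi)$ is then verbatim the statement $\Hent{\np}{\np}\le\Hent{\np}{\phi}$ that the entropy term is minimized at $\phi=\np$.

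None of these steps is hard in isolation; the work is bookkeeping, and the one place to be careful is the clean cancellation of $\nabla\A(\np)$ between the linearized loss and the Bregman penalty — this is precisely what forces the reference divergence to be the \aKL divergence $\D$ rather than any other. The only genuine subtlety is the differentiation under the integral behind the gradient identity (sidestepped here by quoting $\nabla\Q{\np}{\np}=\nablaLoss(\np)$), together with the implicit regularity assumption that the natural parameter space is open, so that $\nabla\A$ is a bijection and $\nabla\AS(\bar{\stats})$ is well defined.
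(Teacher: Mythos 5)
Your proof is correct and follows essentially the same route as the paper's: both expand the exponential-family surrogate to $\Q{\np}{\phi} = \const - \lin{\s(\np),\phi} + \A(\phi)$, use the gradient-matching identity $\nablaLoss(\np) = \nabla\A(\np) - \s(\np)$ to cancel/complete the Bregman term, and invoke $\Hent{\np}{\np} \leq \Hent{\np}{\phi}$ for the inequality. The only difference is cosmetic bookkeeping — you show the right-hand side equals $\Q{\np}{\cdot}$ up to a constant pinned at $\phi=\np$, while the paper writes the same identity as $\Q{\np}{\phi} - \Q{\np}{\np} = \lin{\nablaLoss(\np),\phi-\np} + \D(\phi,\np)$ and plugs it into the loss--entropy decomposition.
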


While the upper bound is still expressed 
in a specific parametrization to describe the distributions, 
the \aKL divergence is a property of the distributions,
independent of their representation.
As this upper-bound is the one minimized by the \Mstep, 
it is a direct description of the algorithm rather than 
an additional surrogate used for convenience, 
as was illustrated in \cref{fig:double-surrogate}.

This gives an interpretation of \aEM in terms of the mirror descent algorithm 
\citep{nemirovski1983,beck2003mirror},
the minimization of a first-order Taylor expansion 
and Bregman divergence as in \cref{eq:mirror-descent-upper-bound},
with step-size $\alpha \n!=\n! 1$.
In the recent perspective of mirror descent framed as relative smoothness
\citep{bauschke2017descent,lu2018relatively},
the objective function is 1-smooth relative to $\A$.
Existing results \citep[e.g.][Theorem 3.1]{lu2018relatively}
then directly imply the following local result, 
up to non-degeneracy assumptions \ref{ass:1}--\ref{ass:3}
discussed in the next section.
\begin{restatable}{corollary}{restateCorConvex}
\label{cor:convex}
For exponential families, %
 if \aEM is initialized in a locally-convex region with minimum $\np^*\!$, 
\alignn{
	\Loss(\npT) - \Loss(\np^*)
	\leq 
	\frac{1}{\maxiter} \, \KL{p(\data,\lat\cond\npStart)}{p(\data,\lat\cond\np^*)}.
}
\end{restatable}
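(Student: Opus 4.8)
The plan is to read the corollary as an immediate consequence of \cref{prop:equivalence} combined with the standard convergence guarantee for mirror descent under relative smoothness. \cref{prop:equivalence} shows that the \Mstep is exactly the mirror descent update with mirror map $\A$ and unit step-size, and that $\Loss$ satisfies the upper bound \cref{eq:mirror-descent-upper-bound}; equivalently, $\Loss$ is $1$-smooth relative to $\A$. If, in addition, $\Loss$ is convex on the region containing the iterates, the hypotheses of the relatively-smooth mirror descent theorem of \citet[Theorem 3.1]{lu2018relatively} hold with constant $L=1$ and reference function $h=\A$, and its conclusion reads $\Loss(\npT) - \Loss(\np^*) \leq \D(\np^*, \npStart)/\maxiter$. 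The remaining step is purely notational: by the identification of the Bregman divergence $\D$ with the \aKL divergence for exponential families, $\D(\np^*, \npStart) = \KL{p(\data,\lat\cond\npStart)}{p(\data,\lat\cond\np^*)}$ (note the order of arguments), which is exactly the claimed numerator.

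To make the argument self-contained I would reproduce the short telescoping proof. First, evaluating the upper bound \cref{eq:mirror-descent-upper-bound} at the \Mstep minimizer $\phi=\nptt$ gives a one-step decrease
\[ \Loss(\nptt) \leq \Loss(\npt) + \lin{\nablaLoss(\npt), \nptt - \npt} + \D(\nptt, \npt). \]
Second, the first-order optimality of the \Mstep yields $\nabla\A(\nptt) = \nabla\A(\npt) - \nablaLoss(\npt)$; substituting this into the Bregman three-point identity for $\A$ converts the linear term into a difference of divergences,
\[ \lin{\nablaLoss(\npt), \nptt - \np^*} = \D(\np^*, \npt) - \D(\np^*, \nptt) - \D(\nptt, \npt). \]
Combining these two with the convexity inequality $\lin{\nablaLoss(\npt), \np^* - \npt} \leq \Loss(\np^*) - \Loss(\npt)$, valid because $\npt$ lies in the locally-convex region, the $\Loss(\npt)$ and $\D(\nptt,\npt)$ terms cancel and leave the clean per-step bound
\[ \Loss(\nptt) - \Loss(\np^*) \leq \D(\np^*, \npt) - \D(\np^*, \nptt). \]

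Summing this over the iterations telescopes the right-hand side to $\D(\np^*, \npStart) - \D(\np^*, \npT) \leq \D(\np^*, \npStart)$, using $\D \geq 0$ to discard the final term. Because \aEM decreases $\Loss$ monotonically, $\Loss(\npT)$ is the smallest of the iterate values, so $\maxiter\,(\Loss(\npT) - \Loss(\np^*))$ is at most the telescoped sum, which produces the $1/\maxiter$ rate; the exact constant multiplying $\maxiter$ depends only on the convention for counting updates.

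The one genuinely non-routine point is the \emph{locality} hypothesis. The convexity inequality is invoked at every iterate, so I must verify that all $\npt$ remain in the region where $\Loss$ is convex and that $\np^*$ is a valid comparator there. This is precisely what ``initialized in a locally-convex region with minimum $\np^*$'' provides, as long as the region is taken to be a sublevel set (or is otherwise forward-invariant): \aEM's monotone descent then keeps the iterates inside. A secondary technicality is ensuring that $\A$ is differentiable and that the natural/mean parametrization is a bijection on the relevant domain, which is the role of the non-degeneracy assumptions referenced just before the statement; these guarantee that $\nabla\A$, and hence the three-point identity, are well defined.
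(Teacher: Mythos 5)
Your proof is correct and follows essentially the same route as the paper's: the paper's appendix proof likewise derives the per-step bound $\Loss(\nptt) - \Loss(\np^*) \leq \D(\np^*,\npt) - \D(\np^*,\nptt)$ from relative $1$-smoothness, convexity at $\npt$, the update identity $\dA(\nptt) = \dA(\npt) - \nablaLoss(\npt)$, and the three-point property, then telescopes and invokes monotone descent, exactly as you do (and, like you, it notes the result also follows directly from Theorem 3.1 of Lu et al.\ with $L=1$). Your explicit discussion of forward-invariance of the locally-convex region is a point the paper leaves implicit, but it does not constitute a different argument.
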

\vspace{-.5em}
This is the first non-asymptotic convergence rate for \aEM 
that does not depend on problem-specific constants.
\vspace{-2em}
\begin{proof}[Proof of \cref{prop:equivalence}]
Recall the decomposition of the surrogate
in terms of the objective and entropy term,
$\Q{\np}{\phi} = \Loss(\phi) + \Hent{\np}{\phi}$
in \cref{eq:em-as-loss-and-entropy}. It gives
\aligns{%
	\Loss(\phi) - \Loss(\np)
	= \Q{\np}{\phi} - \Q{\np}{\np} + 
	\Hent{\np}{\np} - \Hent{\np}{\phi},
}
where $\Hent{\np}{\np} - \Hent{\np}{\phi} \leq 0$
as $\Hent{\np}{\phi}$ is minimized at $\phi=\np$.
We will show that for exponential families, 
\aligns{%
	\Q{\np}{\phi} - \Q{\np}{\np}
	=
	\lin{\nablaLoss(\np), \phi - \np} + \D(\phi, \np),
}
which implies the upper-bound in 
\cref{eq:mirror-descent-upper-bound}
and that its minima matches that of $\Q{\np}{\phi}$.

If the complete-data distribution is in the exponential family,
the surrogate in natural parameters is 
\alignn{\label{eq:def-q-expfam}\begin{aligned}
	\Q{\np}{\phi}
	&=
	- \medint\log p(\data, \lat \cond \phi) \, p(\lat \cond \data, \np) \dif{z},
	\\[-.5em]
	&=
	- \medint\brackets{\lin{\stats(\data, \lat), \phi} - \A(\phi) } \, p(\lat \cond \data, \np) \dif{z},
	\\ 
	&=
	- \lin{\textstyle\Expect[p(\lat \cond \data, \np)]{\stats(\data, \lat)}, \phi} + \A(\phi).
\end{aligned}}
Using $\s(\np) = \Expect[p(\lat \cond \data, \np)]{\stats(\data, \lat)}$ for the expected sufficient statistics\footnote{The sufficient statistics $\s(\np)$ also depend on $\data$. We do not write $\s(\np, \data)$ as $\data$ is fixed and the same at each iteration.}
and expanding 
$\Q{\np}{\phi} - \Q{\np}{\np}$ yields
\aligns{
	\Q{\np}{\phi} - \Q{\np}{\np}
	&=%
	- \lin{\s(\np), \phi - \np}
	+ \A(\phi) - \A(\np),
	\\
	&\stackrel{{(1)}}{=} 
	- \lin{s(\np) - \nabla \A(\np), \phi - \np}
	+ \D(\phi, \np),
	\\
	&\stackrel{{(2)}}{=} 
	\lin{\nablaLoss(\np), \phi - \np}
	+ \D(\phi, \np),
}
where \eqnumberref{1} adds and subtracts 
$\lin{\nabla \A(\np), \phi - \np}$
to complete the Bregman divergence 
and \eqnumberref{2} uses that the gradient of the surrogate 
and the objective match at $\np$, 
\aligns{%
	\nablaLoss(\np) = \nabla \Q{\np}{\np} = \nabla \n! \A(\np) - \s(\np).
	\tag*{\qedhere}
}
\end{proof}
\vspace{-.5em}
This perspective extends to 
stochastic approximation \citep{robbins1951stochastic}
variants of \aEM, which are becoming increasingly relevant 
as they scale to large datasets.
Algorithms such as incremental, stochastic and online \aEM
\citep{neal1998view,sato1999fastlearning,cappe2009online}
average the observed sufficient statistics to update the parameters.
This can be cast as stochastic mirror descent \citep{nemirovski2009robust}
with step-sizes decreasing as $\nicefrac{1}{t}$.
For brevity, we leave the derivation to \cref{app:proof-equivalence-em-md}.

\newcommand{\sectionNameAssumptions}{Assumptions and Open Constraints}
\makesection{\sectionNameAssumptions{}}
\label{sec:assumptions}
\newcommand{\shortspace}{\vspace{.3em}}
Before diving into convergence results, 
we discuss the assumptions needed for the method 
to be well defined.
\begin{enumerate}[label=\textbf{A\arabic*}]
	\item \label{ass:1}
	\newcommand{\assumptionOne}{%
	The complete-data distribution $p(\data, \lat \cond \np)$ 
	is a steep,
	minimal exponential family distribution. %
	}
	The complete-data distribution $p(\data, \lat \cond \np)$ 
	is a steep,$\!$\footnotemark{}$\mkern.15mu$ minimal exponential family distribution.%
	\footnotetext{%
		The family is steep if its log-partition function satisfies
		$\lim_{i\to\infty}\norm{\nabla A(\theta_i)} \to \infty$ for any sequence 
		$\theta_1, \theta_2, ... \in \Omega = \text{int}(\text{dom}(A))$ 
		converging to a boundary point of $\Omega$.
	}%
\end{enumerate}
\ref{ass:1} implies the continuity and differentiability of $\Loss$, 
that the surrogate has a unique solution,
and that the natural and mean parameters are well defined.
It is the statistical equivalent to the assumption in the mirror descent literature 
that $\A$ essentially smooth,
which implies that the mappings $\nabla \A, \nabla \AS$ are well-defined.
\ref{ass:1} is satisfied in the most commons applications of \aEM in machine learning,
including Gaussian mixtures.

The next assumptions deal with 
a further subtle issue that arises when we attempt to apply results 
from the optimization literature to \aEM,
like the generic frameworks of 
\citet{xu2013block}, \citet{mairal2013optimization} or \citet{razaviyayn2014successive}.
The parameters of the distributions optimized by \aEM are typically 
constrained to a subset $\np \in \Omega$,
like that probabilities sum to one 
and that covariance matrices are positive-definite. 
To handle constraints, those analyses assume
access to a projection onto the constraint set $\Omega$. 
However, this does not hold for common settings of \aEM
like mixtures of Gaussians. 
When the boundaries of
the constraint set are open, the projection operator does not exist
(there is no ``closest positive-definite matrix'' to a
matrix that is not positive-definite).
An additional complication 
is related to the existence of a lower-bound on the objective. 
For example, in Gaussian mixtures, we can drive the objective 
to $\um{-}\infty$ by centering a Gaussian on a single data point and
shrinking the variance towards zero.
The existence of such degenerate solutions 
is challenging for non-asymptotic convergence rates,
as results typically depend on the optimality gap 
$\Loss(\np) - \Loss^*$
and are vacuous if it is unbounded.
To avoid those degenerate cases, we make the following assumptions.

\begin{enumerate}[label=\textbf{A\arabic*}]
	\setcounter{enumi}{1}
	\item \label{ass:2}
	\newcommand{\assumptionTwo}{%
	The objective function is lower-bounded by some 
	$\Loss^* > \um{-}\infty$ on the constraint set $\Omega$. 
	}
	\assumptionTwo{}
\end{enumerate}
\vspace{-.5em}
\begin{enumerate}[label=\textbf{A\arabic*}]
	\setcounter{enumi}{2}
	\item \label{ass:3}
	\newcommand{\assumptionThree}{%
	The sub-level sets $\Omega_{\np} = \{
	\phi \in \Omega : \Q{\np}{\phi} \leq \Q{\np}{\np} \}$
	are compact (closed and bounded).
	}
	\assumptionThree{}
\end{enumerate}

One approach to ensure the \aEM updates are well-defined 
is to add regularization, in the form of a proper conjugate prior.
If the parameters approach the boundary (or diverge in an unbounded direction),
the prior acts as a barrier and diverges to $\infty$ rather than $\scalebox{0.75}[1.0]{$-$}\infty$.
The minimum of the surrogate is then finite 
and in 
$\Omega$
at every iteration, 
without the need for projections.
This is illustrated in \cref{fig:barrier}.
For simplicity of presentation, 
we assume \ref{ass:2} and \ref{ass:3} hold
and discuss maximum a posteriori (\aMAP) estimation in \cref{app:priors}.

\begin{figure}[t]
	\centering
	\figscale{%
	\adjustbox{scale={1}{1}}{%
			\null
				\adjustbox{trim={.0\width} {.08\height} {.0\width} {.08\height},clip}{%
					\includegraphics[width=.8\columnwidth]{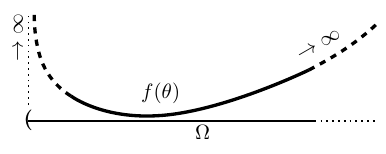}%
				}%
		}%
	}%
	\caption{
		Example of a barrier function with compact sub-level sets on an open set $\Omega$,
		satisfying \ref{ass:2} and \ref{ass:3}.
		Even if $\Omega$ is open, as $f$ goes to $\infty$ at the boundary
		and is convex, the minimum is guaranteed to be in $\Omega$.
	}
	\label{fig:barrier}
\end{figure}

\newcommand{\sectionNameConvergenceEM}{Convergence of EM for Exponential Families}
\makesection{\sectionNameConvergenceEM}
\label{sec:convergence-results}
\label{sec:em-convergence-expfam}

We now give the main results for the convergence 
of \aEM to stationary points for exponential families. This analysis takes advantage of existing tools for the analysis of mirror descent,
but in the less-common non-convex setting.
Detailed proofs are deferred to \cref{app:proof-em-expfam}.

\begin{restatable}{proposition}{convergenceEmAsMd}
\label{thm:convergence-em-as-md}
Under assumptions \ref{ass:1}--\ref{ass:3}, 
\aEM for exponential family distributions converges at the rate 
\aligns{
	\fullmin\,
	\KL{p(\data, \lat \cond \nptt)}{p(\data, \lat \cond \npt)}
	\leq 
	\frac{\Loss(\npStart) - \Loss^*}{T}.
}
\end{restatable}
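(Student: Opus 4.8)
The plan is to run the standard non-convex mirror descent argument, using \cref{prop:equivalence} as the relative-smoothness inequality that plays the role usually taken by the quadratic smoothness bound. The whole proof reduces to a per-step \emph{sufficient decrease} lemma of the form $\Loss(\nptt) \leq \Loss(\npt) - \D(\npt, \nptt)$, followed by a telescoping sum and a min-versus-average argument.

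First I would establish the sufficient decrease. Instantiating the upper bound \cref{eq:mirror-descent-upper-bound} at $\np = \npt$ and $\phi = \nptt$ gives $\Loss(\nptt) \leq \Loss(\npt) + \lin{\nablaLoss(\npt), \nptt - \npt} + \D(\nptt, \npt)$. The \Mstep chooses $\nptt$ as the exact minimizer of the right-hand side (equivalently of $\Q{\npt}{\phi} = -\lin{\s(\npt),\phi} + \A(\phi)$), so its first-order optimality condition is $\nabla\A(\nptt) = \s(\npt)$, i.e.\ moment matching. Combining this with the gradient-matching identity $\nablaLoss(\npt) = \nabla\A(\npt) - \s(\npt)$ from the proof of \cref{prop:equivalence} yields $\nablaLoss(\npt) = \nabla\A(\npt) - \nabla\A(\nptt)$. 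Substituting this into the linear and Bregman terms and regrouping, the two terms collapse exactly to $-\D(\npt,\nptt)$: by the definition of the Bregman divergence, $\lin{\nabla\A(\npt)-\nabla\A(\nptt),\,\nptt-\npt} + \D(\nptt,\npt) = \A(\nptt) - \A(\npt) - \lin{\nabla\A(\nptt),\nptt-\npt} = -\D(\npt,\nptt)$. This gives the clean decrease $\Loss(\nptt) \leq \Loss(\npt) - \D(\npt,\nptt)$.

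Next I would identify $\D(\npt,\nptt)$ with the quantity in the statement. Since $\D(\phi,\np) = \KL{p(\data,\lat\cond\np)}{p(\data,\lat\cond\phi)}$, taking $\phi=\npt$ and $\np=\nptt$ gives $\D(\npt,\nptt) = \KL{p(\data,\lat\cond\nptt)}{p(\data,\lat\cond\npt)}$, exactly the term being bounded. Summing the rearranged decrease $\KL{p(\data,\lat\cond\nptt)}{p(\data,\lat\cond\npt)} \leq \Loss(\npt) - \Loss(\nptt)$ over $t = 1,\dots,T$ telescopes to $\Loss(\npStart) - \Loss(\np_{T+1}) \leq \Loss(\npStart) - \Loss^*$, where the final step invokes \ref{ass:2}. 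Bounding the minimum by the average over the $T$ terms then delivers the claimed rate.

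The main obstacle is justifying the optimality condition $\nabla\A(\nptt)=\s(\npt)$, i.e.\ that the \Mstep minimizer is an interior stationary point rather than a boundary or projected point. This is precisely where \ref{ass:1}--\ref{ass:3} enter: \ref{ass:3} (compact sub-level sets) guarantees the minimizer exists, the barrier behavior on the open set $\Omega$ keeps it in the interior so the unconstrained gradient vanishes, and \ref{ass:1} (minimality, hence strict convexity of $\A$) makes the surrogate strictly convex so the stationary point is the unique global minimizer. These assumptions let me avoid the projection operators required by generic frameworks, which do not exist for open constraint sets such as positive-definite covariances.
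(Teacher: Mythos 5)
Your proposal is correct and follows essentially the same route as the paper: instantiate the relative-smoothness bound of \cref{prop:equivalence} at $(\npt,\nptt)$, use the \Mstep stationarity condition $\nabla\A(\nptt)=\s(\npt)$ together with gradient matching to collapse the linear and Bregman terms into $-\D(\npt,\nptt)=-\KL{p(\data,\lat\cond\nptt)}{p(\data,\lat\cond\npt)}$, then telescope and bound the minimum by the average, invoking \ref{ass:2} for the lower bound and \ref{ass:1}, \ref{ass:3} for well-posedness of the interior minimizer. The only cosmetic difference is that the paper takes the stationarity condition of the upper bound itself rather than of $\Q{\npt}$, which \cref{prop:equivalence} shows are the same thing.
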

\vspace{-.5em}
While this result implies the distribution fit by \aEM stops changing, 
it does not---in itself---guarantee progress toward a stationary point
as it is also satisfied by an algorithm that does not move, 
$\nptt \n!= \npt$. 
In the standard setting of gradient descent with constant step-size,
\cref{thm:convergence-em-as-md} 
is the equivalent of the statement that the distance between iterates
$\norm*{\nptt-\npt}$ converges.
As $\norm*{\nptt - \npt} \m!\propto\m! \norm*{\nablaLoss(\npt)}$,
it also implies that the gradient norm converges.
A similar result holds for \aEM,
where measuring distances between iterates with $\D$ 
leads to stationarity in the dual divergence $\DS$.

\clearpage
Recall that the \Mstep finds a stationary point 
of the upper-bound in \cref{eq:mirror-descent-upper-bound}. Setting its derivative to 0,
using $\nabla_\phi\, \D(\phi, \npt) = \nabla \A(\phi) - \nabla \A(\npt)$ yields
\aligns{
	\nablaLoss(\npt) - \nabla \A(\npt) + \nabla \A(\nptt) = 0.
}
Using the expansion of the gradient in terms of the observed statistics $\s(\npt)$
and the mean parametrization $\mpt = \nabla \A(\npt)$, we obtain the moment matching update;
finding the mean parameters $\mptt$ that generate the observed sufficient statistics $\s(\npt)$ in expectation;
\aligns{
	\mptt = \s(\npt) = \mpt - \nablaLoss(\npt).
}
Expressing the \aKL divergence as the dual Bregman divergence 
$\DS(\mptt, \mpt)$ \peqref{eq:dual-bregman} then gives
\aligns{\begin{aligned}
		\KL{\nptt}{\npt}
		&= \DS(\mptt, \mpt)%
		= \DS(\s(\npt), \mpt).
\end{aligned}}
This adds a measure of stationarity to \cref{thm:convergence-em-as-md};
\begin{restatable}{corollary}{convergenceEmAsMdTwo}
\label{cor:convergence-em-as-md}
Under assumptions \ref{ass:1}--\ref{ass:3}, 
\aligns{
	\fullmin\,
	\DS(\s(\npt), \mpt)
	\leq 
	\frac{\Loss(\npStart) - \Loss^*}{\maxiter}.
}
The observed sufficient statistics $s(\npt)$ and 
mean parameters $\mpt$ are the two parts of the gradient, 
$\nablaLoss(\npt) = \mpt - \s(\npt)$,
and $\DS(\s(\npt), \mpt) = 0$ implies $\nabla \Loss(\npt) = 0$.
\end{restatable}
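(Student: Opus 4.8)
The plan is to obtain this corollary as an essentially immediate consequence of \cref{thm:convergence-em-as-md}, by re-expressing the per-iterate \aKL divergence appearing there in the dual (mean-parameter) coordinates. Concretely, I would show that for every $t$,
\[
\KL{p(\data,\lat\cond\nptt)}{p(\data,\lat\cond\npt)} = \DS(\s(\npt), \mpt),
\]
after which the bound in \cref{thm:convergence-em-as-md} transfers term by term and hence carries over to the minimum over $t \leq \maxiter$.

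First I would establish the moment-matching form of the \Mstep. Setting the gradient of the upper bound in \cref{eq:mirror-descent-upper-bound} to zero and using $\nabla_\phi \D(\phi,\npt) = \nabla\A(\phi) - \nabla\A(\npt)$ gives $\nabla\A(\nptt) = \nabla\A(\npt) - \nablaLoss(\npt)$, i.e.\ $\mptt = \mpt - \nablaLoss(\npt)$. Combining this with the gradient decomposition $\nablaLoss(\npt) = \mpt - \s(\npt)$ already derived in the proof of \cref{prop:equivalence} yields the key identity $\mptt = \s(\npt)$. Next I would invoke the dual Bregman relation \peqref{eq:dual-bregman}, being careful about the argument ordering it flags: taking $\phi=\npt$ and $\np=\nptt$ it reads $\D(\npt,\nptt) = \DS(\nabla\A(\nptt), \nabla\A(\npt)) = \DS(\mptt,\mpt)$, and since $\D(\npt,\nptt)$ is exactly the \aKL divergence $\KL{p(\data,\lat\cond\nptt)}{p(\data,\lat\cond\npt)}$, substituting $\mptt=\s(\npt)$ gives the displayed identity. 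Plugging it into \cref{thm:convergence-em-as-md} proves the stated rate.

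For the second part, the decomposition $\nablaLoss(\npt) = \mpt - \s(\npt)$ is again the gradient identity from \cref{prop:equivalence}. To see that $\DS(\s(\npt),\mpt)=0$ forces $\nabla\Loss(\npt)=0$, I would use that minimality \ref{ass:1} makes $\A$ strictly convex, so its conjugate $\AS$ is strictly convex on its domain; the Bregman divergence of a strictly convex function vanishes only when its two arguments coincide. Hence $\DS(\s(\npt),\mpt)=0$ iff $\s(\npt)=\mpt$, which by the decomposition is exactly $\nablaLoss(\npt)=0$.

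The computations are routine, so the only genuinely delicate points are bookkeeping rather than substance: keeping the argument order straight in \peqref{eq:dual-bregman} (the \aKL is not symmetric, and swapping the roles of $\npt$ and $\nptt$ would invert the mean-parameter arguments of $\DS$), and justifying that $\DS$ separates points --- i.e.\ that $\AS$ is strictly convex and that the target $\mptt=\s(\npt)$ lands in the interior of the mean-parameter domain where $\DS$ is well defined. Both of these follow from \ref{ass:1} together with the compactness guaranteed by \ref{ass:3}, which is where I expect the small amount of real care to be needed.
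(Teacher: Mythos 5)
Your proposal is correct and follows essentially the same route as the paper: both derive the moment-matching identity $\mptt = \s(\npt) = \mpt - \nablaLoss(\npt)$ from the stationarity of the \Mstep, then apply the dual Bregman relation \peqref{eq:dual-bregman} (with the same careful argument ordering) to rewrite the per-iterate \aKL divergence of \cref{thm:convergence-em-as-md} as $\DS(\s(\npt),\mpt)$. Your added justification that $\DS$ separates points via the strict convexity of $\AS$ under \ref{ass:1} is a correct elaboration of the stationarity claim, which the paper states without proof.
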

\cref{cor:convergence-em-as-md} is the Bregman divergence analog of the standard
result for steepest descent in an arbitrary norm $\norm{\cdot}$,
giving convergence in the dual norm $\norm{\nablaLoss}_*$.
If the smoothness assumption is satisfied with constant $L$, 
we recover existing results in Euclidean norm,
\aligns{
	\fullmin\,
	\norm{\nablaLoss(\npt)}^2
	\leq 
	\frac{L}{T} \paren{\Loss(\npStart) - \Loss^*},%
}
as the $L$-smoothness of $\A$ implies 
the $\nicefrac{1}{L}$-strong convexity of $\AS$ 
and 
$\DS(\mptt, \mpt) \geq \nicefrac{1}{L} \, \norm*{\nablaLoss(\npt)}{}^2$.

The convergence in \aKL divergence, however, 
does not depend on additional smoothness assumptions and is a stronger guarantee
as it implies the probabilistic models being optimized stop changing.
This can not be directly guaranteed by small gradient norms, 
as differences in distributions do not only depend on the difference between parameters.
For example, how much a Gaussian distribution changes 
when changing the mean depends on its variance;
if the variance is small, the change will be big, 
but if the variance is large, the change will be comparatively smaller. 
This is illustrated in \cref{fig:grad-vs-kl},
and is not captured by gradient norms.

\begin{figure}[t]
\centering
	\centering
	\figscale{%
		\adjustbox{height=2.25cm}{%
			\adjustbox{scale={1}{.925}}{%
				\null
					\adjustbox{trim={.0\width} {.075\height} {.0\width} {.06\height},clip}{%
					\includegraphics[width=1.0\columnwidth]{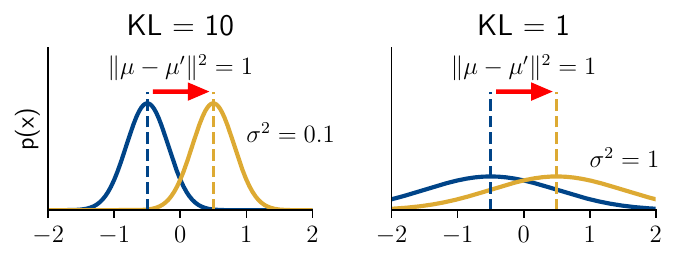}%
					}%
			}%
		}
	}
	\caption{
		The similarity between two Gaussians
		depends on their variance, even if it is fixed.
		The Euclidean distance between parameters, 
		and by extension gradient norms, 
		is a poor measure of stationarity as it ignores unchanged parameters. 
	}
	\label{fig:grad-vs-kl}
\end{figure}

\subsection{Connection to the Natural Gradient}
\label{sec:natural-decrement}
A useful simplification to interpret the divergence
is to consider the norm it is locally equivalent to.
By a second-order Taylor expansion, we have that 
\aligns{
	\DS(\mp + \delta, \mp) \approx \norm{\delta}_{\nabla^2 \AS(\mp)}^2,
}
\vspace{-1.70em}

where 
$\norm*{\delta}\raisebox{.5em}{$\scriptstyle2$}\m!\!\!%
\raisebox{-.25em}{$\scriptstyle\nabla^2 \AS(\mp)$} 
= \lin*{\delta, \nabla^2 \AS(\mp) \delta}$.
For exponential families, 
$\nabla^2 \AS$ 
is the inverse of the Fisher information matrix 
of the complete-data distribution $p(\data, \lat \cond \np)$,
\aligns{
	\nabla^2 \AS(\mp) = 
	I_{\data, \lat}(\np)^{-1} \m! = \ts\Expect[\data, \lat \sim p(\data, \lat \cond \np)]{\nabla^2 \log p(\data, \lat \cond \np)}.
}
The left side of \cref{cor:convergence-em-as-md}
is then, locally,
\alignn{\label{eq:natural-decrement}
	\DS(\mpt - \nablaLoss(\npt), \mpt) \approx 
	\norm{\nablaLoss(\npt)}_{I_{\data, \lat}(\npt)^{-1}}^2.
}
This quantity is the analog of the Newton decrement,
\aligns{
	\norm{\nablaLoss(\npt)}^2_{\nabla^2\Loss(\npt)^{-1}},	
}
used in the affine-invariant analysis of Newton's method
\citep{nesterov1994interior}. 
But \cref{eq:natural-decrement} is for the natural gradient 
in information geometry,
$I_{\data, \lat}(\npt)^{-1} \nablaLoss(\npt)$
\citep{amari2000methods}.
While the Newton decrement is invariant to affine reparametrizations, 
this ``natural decrement''
is also invariant to 
any homeomorphism.

\subsection{Invariant Local Linear Rates}
\label{sec:convex}

It was already established by \citet{dempster1977maximum}
that, asymptotically, the \aEM algorithm converges $r$-linearly, 
meaning that if it is in a convex region,\footnotemark
\footnotetext{The result of \citet{dempster1977maximum} concerned the convergence 
of the distance to the optimum, $\norm{\npt - \np^*}$,
but we use function values for simplicity, 
as it also applies.
}
\alignn{\label{eq:linear-rate-r}
	\Loss(\nptt)-\Loss^* \leq r(\Loss(\npt) - \Loss^*)
	&&
	\text{ for } r < 1,
}
near a strict minima $\np^*$, 
where the rate $r$ is determined by the amount of ``missing information''.
In this section, we strengthen \cref{cor:convex} to 
show this result extends to local but non-asymptotic rates.

The improvement ratio $r$ 
is determined by the eigenvalues 
of the missing information matrix $M$ 
at $\np^*$, defined as \citep{orchard1972missing}
\alignn{\label{eq:missing-information-matrix}
	M(\np^*) = I_{\data, \lat}(\np^*)^{-1} I_{\lat \cond \data}(\np^*),
}
where $I_{\data, \lat}(\np)$ and $I_{\lat\cond\data}(\np)$ are the Fisher information matrices 
of the complete-data distribution $p(\data, \lat \cond \np)$
and conditional missing-data distribution $p(\lat \cond \data, \np)$.
Intuitively this matrix measures how much information is missing
and how much easier the problem would be if we had access to the true 
values of the latent variables.
If $I_{\lat \cond \data}(\np)$ is small, there is little information 
to be gained from observing the latent variables 
as most of this information is already contained in $\data$.
But $I_{\lat \cond \data}(\np)$ is high if the known values of $\data$ 
do not constrain the possible values of $\lat$ and the problem 
is more difficult. 

The matrix $M(\np)$ is not a fixed quantity and evolves with the parameters $\np$.
In regions where we have a good model of the data, 
for example if we found well-separated clusters fit with Gaussian mixtures, 
there is little uncertainty about the latent variables (the cluster membership) and $M(\np)$ will be small.
However, $M(\np)$ is often large at the start of the optimization procedure.
The linear rate $r$ in \cref{eq:linear-rate-r}
is then determined 
by the maximum eigenvalue of the missing information,
$r = \lambda_{\max}\paren{M(\np^*)}$.
Linear convergence occurs if the missing information
at $\np^*$ is small, $M(\np^*) \prec 1$, otherwise $r$ can be larger than $1$.

This result, however, is only asymptotic and existing 
non-asymptotic linear rates rely on strong-convexity assumptions instead
\citep[e.g.][]{balakrishnan2017statistical}.
A twice-differentiable function $f$ is
$\alpha$-strongly convex if
\aligns{
	\nabla^2 f(\np) \succeq \alpha I
	&&\text{for } \alpha > 0,
}
which means the eigenvalues of $\nabla^2 f(\np)$
are bounded below by $\alpha$.
If $f$ is also $L$-smooth,
as defined in \cref{sec:em-and-mirror-descent},
a gradient-\aEM type of analysis gives that
\aligns{
	f(\nptt) - f^* \leq \paren{1-\frac{\alpha}{L}}\paren{f(\npt)-f^*}.
}
This implies \aEM 
converges linearly 
if it enters a 
smooth and strongly-convex
region.
However, in these works,
the connection to the ratio of missing information 
is lost and the rate is not invariant to reparametrization.

We showed in \cref{sec:em-and-mirror-descent}
that, instead of measuring smoothness in Euclidean norms, 
the \aEM objective is $1$-smooth relative to its 
log-partition function $\A$.
Likewise, we can characterize strong convexity relative to a reference function $h$ 
\citep{lu2018relatively}, requiring that 
\alignn{\label{eq:def-relative-sc}
	\nabla^2 \Loss(\np) \succeq \alpha \n!\nabla^2 h(\np)
	&& \text{ for }\alpha > 0.
}
For \aEM, where we care about 
strong convexity relative to the log-partition $\A$,
\emph{the relative strong-convexity parameter $\alpha$
is directly related to the missing information;}

\begin{restatable}{proposition}{restateStrongConvexity}
\label{prop:sc}
For exponential families, the \aEM objective is $\alpha$-strongly convex
relative to $\A$
on a %
region $\someset$ %
iff the missing information $M$ \peqref{eq:missing-information-matrix} 
satisfies
\aligns{
	\lambda_{\max}(M(\np)) \leq (1 - \alpha) && \text{ for all } \np \in \someset.
}
\end{restatable}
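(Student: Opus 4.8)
The plan is to reduce the relative strong-convexity condition to the \emph{missing-information principle}, which expresses the Hessian of $\Loss$ as a difference of two Fisher information matrices. I start from the decomposition $\Loss(\phi) = \Q{\np}{\phi} - \Hent{\np}{\phi}$ implied by \cref{eq:em-as-loss-and-entropy}, differentiate twice in $\phi$, and evaluate at $\phi = \np$. For exponential families the surrogate is $\Q{\np}{\phi} = -\lin{\s(\np), \phi} + \A(\phi)$ by \cref{eq:def-q-expfam}, so its $\phi$-Hessian is $\nabla^2 \A(\np)$, which equals the complete-data Fisher information $I_{\data,\lat}(\np)$. Differentiating the entropy term under the integral sign and using the definition of Fisher information gives $\nabla^2_\phi \Hent{\np}{\phi}\big|_{\phi=\np} = -\Expect[p(\lat\cond\data,\np)]{\nabla^2 \log p(\lat\cond\data,\np)} = I_{\lat\cond\data}(\np)$, the conditional missing-data Fisher information. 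Subtracting yields the identity I will rely on,
\[
	\nabla^2 \Loss(\np) = I_{\data,\lat}(\np) - I_{\lat\cond\data}(\np) = \nabla^2 \A(\np) - I_{\lat\cond\data}(\np).
\]

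Next I would substitute this into the relative strong-convexity definition \cref{eq:def-relative-sc} with $h = \A$. The requirement $\nabla^2 \Loss(\np) \succeq \alpha\,\nabla^2 \A(\np)$ becomes $\nabla^2 \A(\np) - I_{\lat\cond\data}(\np) \succeq \alpha\, \nabla^2\A(\np)$, that is $(1-\alpha)\,\nabla^2 \A(\np) \succeq I_{\lat\cond\data}(\np)$. Since \ref{ass:1} forces $\A$ to be strictly convex, $\nabla^2 \A(\np) = I_{\data,\lat}(\np) \succ 0$, so I may apply the congruence $I_{\data,\lat}(\np)^{-1/2}(\cdot) I_{\data,\lat}(\np)^{-1/2}$ to both sides while preserving the order, obtaining $(1-\alpha) I \succeq I_{\data,\lat}(\np)^{-1/2} I_{\lat\cond\data}(\np) I_{\data,\lat}(\np)^{-1/2}$. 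The symmetric matrix on the right is similar to $M(\np) = I_{\data,\lat}(\np)^{-1} I_{\lat\cond\data}(\np)$ of \cref{eq:missing-information-matrix}, hence shares its eigenvalues, so this holds exactly when $\lambda_{\max}(M(\np)) \leq 1-\alpha$.

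Every manipulation above is an equivalence, so imposing the matrix inequality at every $\np \in \someset$ is equivalent to $\lambda_{\max}(M(\np)) \leq 1-\alpha$ throughout $\someset$, which delivers both directions of the stated ``iff'' simultaneously. I expect the one genuinely delicate step to be the second identity: justifying that I may differentiate the entropy twice under the integral and that the averaged Hessian is precisely the conditional Fisher information $I_{\lat\cond\data}(\np)$. This is the classical missing-information identity of \citet{orchard1972missing}, and it is exactly the place where the relative strong-convexity constant acquires its meaning as one minus the missing-information eigenvalue. The subsequent congruence and similarity arguments are routine linear algebra once $I_{\data,\lat}(\np) \succ 0$ is in hand.
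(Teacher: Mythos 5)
Your proof is correct and takes essentially the same route as the paper's: both use the decomposition of \cref{eq:em-as-loss-and-entropy} to establish $\nabla^2 \Loss(\np) = \nabla^2 \Q{\np}{\np} - \nabla^2 \Hent{\np}{\np} = I_{\data,\lat}(\np) - I_{\lat\cond\data}(\np)$, substitute this into the relative strong-convexity condition $\nabla^2\Loss(\np) \succeq \alpha \nabla^2 \A(\np)$, and reduce the resulting Loewner inequality to the eigenvalue bound $\lambda_{\max}(M(\np)) \leq 1-\alpha$. The one place you diverge is the final linear-algebra step, where the paper simply ``multiplies by $I_{\data,\lat}(\np)^{-1}$'' whereas you pass through the symmetric congruence $I_{\data,\lat}(\np)^{-1/2}(\cdot)I_{\data,\lat}(\np)^{-1/2}$ and then invoke similarity; your version is the more careful one, since $M(\np)$ is not symmetric in general and the Loewner order is only guaranteed to be preserved under congruence, but this is a refinement of the same argument rather than a different approach.
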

We provide a detailed proof in \cref{app:proof-em-expfam}
and give here the main intuition.
For exponential families, the Hessian of the surrogate $\Q{\np}{\np}$ 
coincides with the Hessian of $\A(\np)$ \peqref{eq:def-q-expfam},
which is the Fisher information matrix 
of the complete-data distribution, $I_{\data, \lat}(\np)$.
Using the decomposition of \cref{eq:em-as-loss-and-entropy},
the Hessian of the objective can be shown to be equal to 
\aligns{
	\nabla^2 \Loss(\np) = I_{\data, \lat}(\np) - I_{\lat \cond \data}(\np).
}
The definition of $\alpha$-strong convexity relative 
to $\A$ \peqref{eq:def-relative-sc} 
for \aEM 
is then 
equivalent to 
\aligns{
	I_{\data, \lat}(\np) - I_{\lat \cond \data}(\np) \succeq \alpha I_{\data, \lat}(\np).
}
Multiplying by $I_{\data, \lat}(\np)^{-1}$ and rearranging terms yields
\aligns{
	M(\np) = I_{\data, \lat}(\np)^{-1} I_{\lat \cond \data}(\np)  \preceq (1- \alpha) I.
	\tag*{\qedsymbol}
}
Convergence results for mirror descent on relatively $1$-smooth and 
$\alpha$-strongly convex functions \citep{lu2018relatively}
then directly give the following local linear rate.

\begin{restatable}{corollary}{convergenceEmAsMdStronglyConvex}
\label{cor:convergence-sc}
Under \ref{ass:1}--\ref{ass:3}, 
if \aEM is initialized in a locally convex region $\Theta$
with minimum $\Loss^*$
and the ratio of missing information is bounded,
$\lambda_{\max}(M(\np)) \leq r$,
\aligns{
	\Loss(\nptt) - \Loss^*
	\leq 
	r \paren{\Loss(\npt) - \Loss^*}.
}
\end{restatable}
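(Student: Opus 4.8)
The plan is to obtain \cref{cor:convergence-sc} as a direct instance of the linear-convergence theory for mirror descent under relative smoothness and relative strong convexity, since both structural facts needed are already in hand. By \cref{prop:equivalence}, an \aEM step coincides with a mirror descent step of step-size $1$ for the mirror map $\A$, so $\Loss$ is $1$-smooth relative to $\A$; this supplies the relative-smoothness constant $L = 1$. By \cref{prop:sc}, the hypothesis $\lambda_{\max}(M(\np)) \leq r$ on $\someset$ is \emph{equivalent} to $\Loss$ being $\alpha$-strongly convex relative to $\A$ on $\someset$ with $\alpha = 1 - r$. Feeding $L = 1$ and $\alpha = 1-r$ into the usual per-step contraction factor $1 - \alpha/L$ for relatively smooth and strongly convex objectives \citep[e.g.][Theorem~3.1]{lu2018relatively} gives exactly $1 - (1-r) = r$, the claimed rate.

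To make the mechanism concrete, I would combine the three-point inequality for the mirror/\Mstep update with the two relative-curvature bounds. Let $\np^*$ denote the minimizer of $\Loss$ on $\someset$. Since $\nptt$ minimizes the surrogate and $\Loss$ is $1$-smooth relative to $\A$, comparing against $\np^*$ yields
\[
\Loss(\nptt) \leq \Loss(\npt) + \lin{\nablaLoss(\npt), \np^* - \npt} + \D(\np^*, \npt) - \D(\np^*, \nptt).
\]
Relative $\alpha$-strong convexity bounds the first two terms on the right by $\Loss^* - \alpha\,\D(\np^*, \npt)$, which after rearranging leaves the one-step Lyapunov contraction
\[
\big(\Loss(\nptt) - \Loss^*\big) + \D(\np^*, \nptt) \leq (1 - \alpha)\,\D(\np^*, \npt) = r\,\D(\np^*, \npt).
\]
Since $\Loss(\nptt) - \Loss^* \geq 0$, the divergence to $\np^*$ contracts by the factor $r$ at every step, and the $r$-linear decay of the optimality gap follows from this contraction as in \citet{lu2018relatively}. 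One minor bookkeeping point is that the contraction is naturally expressed through $\D(\np^*, \npt)$, whereas the optimality gap is squeezed between $\alpha\,\D(\npt, \np^*)$ and $\D(\npt, \np^*)$ by the relative strong-convexity and smoothness bounds; matching the two orderings of the (non-symmetric) Bregman divergence is what one has to track to land on the stated per-step form.

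The main obstacle, however, is the localization. \cref{prop:sc} and the convexity of $\Loss$ are assumed only on $\someset$, whereas the contraction above---and the comparison point $\np^*$ together with the segment from $\npt$ to $\np^*$ used implicitly in it---must lie in $\someset$ for every $t$. The global statements of \citet{lu2018relatively} assume these properties everywhere and so do not transfer verbatim. I would close this gap using that \aEM decreases $\Loss$ monotonically (\cref{eq:em-as-loss-and-entropy}) together with \ref{ass:3}: taking $\someset$ to be a sub-level set contained in the convex basin of $\np^*$, the descent property $\Loss(\nptt) \leq \Loss(\npt)$ keeps every iterate inside $\someset$, and convexity of $\someset$ keeps the comparison segments inside it as well. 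Verifying that the \Mstep update cannot escape this basin is the delicate step; once it is in place, the per-step contraction applies at each iteration and the corollary follows by induction.
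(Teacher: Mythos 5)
Your proposal takes essentially the same route as the paper: combine \cref{prop:sc} (missing information bounded by $r$ $\Leftrightarrow$ relative strong convexity with $\alpha = 1-r$) with the relative $1$-smoothness from \cref{prop:equivalence} and invoke Theorem~3.1 of \citet{lu2018relatively}, which the paper re-proves in \cref{app:subsec-convergence-em-convex} via the three-point property; the Lyapunov inequality you derive, $\paren{\Loss(\nptt)-\Loss^*} + \D(\np^*,\nptt) \leq r\,\D(\np^*,\npt)$, is exactly the inequality obtained there. The two caveats you flag---the Bregman asymmetry one must track to turn this contraction into the stated per-step function-value form, and keeping the iterates inside the locally convex region---are genuine, and the paper's own proof lives with them implicitly (it actually establishes $\Loss(\npT)-\Loss^* \leq r^{\maxiter}\D(\np^*,\npStart)$ and tacitly assumes the iterates remain in $\someset$), so your treatment is, if anything, the more careful one.
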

\vspace{-.5em}
If the ratio of missing information goes to zero, as in the case of well-separated clusters for Gaussian mixtures with suitable initialization, then \aEM converges
superlinearly in the neighborhood of a solution
\citep{salakhutdinov2003optimization,xu1996convergence,ma2000asymptotic}.

\subsection{Generalized \aEM}
We now consider generalized \aEM schemes, 
which do not optimize the surrogate exactly in the \Mstep
but output an approximate (possibly randomized) update.
Given $\npt$, we assume we can solve the surrogate 
problem with some expected guarantee on the optimality gap,
$\Expect{\Qt{\nptt} - \QtStar}$,
where $\QtStar$ is the minimum value of the surrogate.
The \Mstep achieve $\Qt{\nptt} = \QtStar$, 
but it might be more efficient to solve the problem only partially,
or the \Mstep might be intractable.
We consider two types of guarantees for those two cases, 
multiplicative and additive errors.
\begin{enumerate}[label=\textbf{A\arabic*}]
	\setcounter{enumi}{3}
	\item \label{ass:multiplicative-error}
	{\bf Multiplicative error:}
	The approximate solution $\thetatt$ satisfies
	the guarantee that, for some $c \n!\in\n! (0,1]$,
\end{enumerate}
\vspace{-.8em}
\aligns{%
	\Expect{\Qt{\thetatt} - \QtStar}
	\leq 
	(1-c) \paren{\Qt{\thetat} - \QtStar}.
}
\vspace{-.2em}%
If $c = 1$, the algorithm is exact and $\thetatt$ minimizes the surrogate,
while for $c = 0$ there is no guarantee of progress.
An example of an algorithm satisfying this condition for mixture models 
would be the exact optimization of only one of the mixture components, 
chosen at random, like the \aECM algorithm of \citet{meng1993maximum}.
As the surrogate problem is separable among components, 
the guarantee is satisfied with $c = \nicefrac{1}{k}$, 
where $k$ is the number of clusters.
\vspace{-.1em}
\begin{restatable}{theorem}{restateMultiplicativeError}
\label{thm:multiplicative-error}
Under assumptions \ref{ass:1}--\ref{ass:3},
if the \aM-steps are solved up to $c$-multiplicative error (\ref{ass:multiplicative-error}),
\vspace{-.2em}
\aligns{
	\fullmin\,
	\Expect{\DS(\s(\npt), \mpt)}
	\leq 
	\frac{1}{c} \frac{\Loss(\npStart) - \Loss^*}{\maxiter}.
}
\end{restatable}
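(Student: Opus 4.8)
The plan is to reproduce the energy–telescoping argument behind the exact rate (\cref{thm:convergence-em-as-md,cor:convergence-em-as-md}), but to carry the \emph{surrogate suboptimality} explicitly and use \ref{ass:multiplicative-error} to pay a constant factor $c$ in the per-step decrease. The first step is to note that EM's monotonic decrease survives inexact updates. From the decomposition $\Q{\npt}{\phi} = \Loss(\phi) + \Hent{\npt}{\phi}$ of \cref{eq:em-as-loss-and-entropy} and the fact that $\Hent{\npt}{\phi}$ is minimized at $\phi=\npt$, we get $\Hent{\npt}{\npt} - \Hent{\npt}{\nptt} \leq 0$ for \emph{any} output $\nptt$, so $\Loss(\nptt) - \Loss(\npt) \leq \Q{\npt}{\nptt} - \Q{\npt}{\npt}$. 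This uses no optimality of $\nptt$ and therefore applies verbatim to the approximate (randomized) \Mstep.

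The second step identifies the surrogate optimality gap with the dual Bregman divergence we want to bound. Let $\phi^*$ denote the exact minimizer of $\Q{\npt}{\cdot}$, so $\QtStar = \Q{\npt}{\phi^*}$. Re-running the computation in the proof of \cref{prop:equivalence}, the identity $\Q{\npt}{\phi} - \Q{\npt}{\npt} = \lin{\nablaLoss(\npt), \phi-\npt} + \D(\phi,\npt)$ together with the first-order condition $\nablaLoss(\npt) = \nabla\A(\npt) - \nabla\A(\phi^*)$ collapses the right-hand side at $\phi=\phi^*$ to $-\D(\npt,\phi^*)$. The dual relation \peqref{eq:dual-bregman} then rewrites $\D(\npt,\phi^*) = \DS(\nabla\A(\phi^*),\nabla\A(\npt)) = \DS(\s(\npt),\mpt)$, using $\nabla\A(\phi^*) = \s(\npt)$ and $\mpt = \nabla\A(\npt)$. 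Hence $\Q{\npt}{\npt} - \QtStar = \DS(\s(\npt),\mpt)$; the surrogate optimality gap is exactly the stationarity measure from \cref{cor:convergence-em-as-md}.

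The third step inserts the error guarantee and telescopes. Splitting $\Q{\npt}{\nptt} - \Q{\npt}{\npt} = \bigl(\Q{\npt}{\nptt} - \QtStar\bigr) + \bigl(\QtStar - \Q{\npt}{\npt}\bigr) = \bigl(\Q{\npt}{\nptt} - \QtStar\bigr) - \DS(\s(\npt),\mpt)$ and taking expectation conditional on $\npt$, assumption \ref{ass:multiplicative-error} bounds the first term by $(1-c)\,\DS(\s(\npt),\mpt)$. Combining with the first step gives the expected decrease $\Expect{\Loss(\npt) - \Loss(\nptt)} \geq c\,\DS(\s(\npt),\mpt)$ conditionally on $\npt$. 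Taking the outer expectation, summing over $t=1,\dots,\maxiter$, and telescoping yields $c\,\fullsum \Expect{\DS(\s(\npt),\mpt)} \leq \Loss(\npStart) - \Expect{\Loss(\np_{\maxiter+1})} \leq \Loss(\npStart) - \Loss^*$ by \ref{ass:2}. Dividing by $c$ and lower-bounding the average by the minimum over $t\leq\maxiter$ gives the claimed bound.

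The main obstacle I anticipate is the second step: one must check that the surrogate gap equals $\DS(\s(\npt),\mpt)$ and not $\D(\phi^*,\npt)$, which requires tracking the deliberately reversed argument order in \peqref{eq:dual-bregman}. The remaining subtlety, relative to the deterministic proofs, is purely probabilistic bookkeeping: since the iterates $\npt$ are now random, I must apply \ref{ass:multiplicative-error} conditionally on $\npt$ before taking the outer expectation via the tower property, so that the randomness of the approximate \Mstep is handled correctly and all quantities in the telescoped sum appear inside a single expectation.
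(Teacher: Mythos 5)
Your proposal is correct and follows essentially the same route as the paper's proof: the same entropy decomposition \peqref{eq:em-as-loss-and-entropy} to bound $\Loss(\nptt)-\Loss(\npt)$ by the surrogate decrease, the same key identity $\Q{\npt}{\npt}-\QtStar = \DS(\s(\npt),\mpt)$ (which the paper imports from \cref{thm:convergence-em-as-md,cor:convergence-em-as-md} and you re-derive explicitly, correctly tracking the argument reversal in \peqref{eq:dual-bregman}), and the same conditional-expectation-then-telescope bookkeeping. The only difference is cosmetic: you apply the decomposition before invoking \ref{ass:multiplicative-error}, whereas the paper rearranges the error assumption first.
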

\vspace{-.5em}%
This can give speedup in overall time if some iterations 
can be made more than than $c$ times faster by leveraging the structure of the problem.

Multiplicative error, however, 
is a strong assumption if the closed-form solution is intractable. 
Instead, additive error is almost always satisfied. %
For example, although suboptimal for the reasons mentioned earlier, 
running \aGD with a line-search on the surrogate 
guarantees additive error if the objective is (locally) smooth.
\begin{enumerate}[label=\textbf{A\arabic*}]
	\setcounter{enumi}{4}
	\item \label{ass:additive-error}
	{\bf Additive error:}
	The algorithm returns a solution $\thetatt$ 
	with the guarantee that, in expectation, 
\end{enumerate}
\vspace{-.8em}%
\aligns{%
	\Expect{\Qt{\thetatt} - \QtStar}
	\leq 
	\epsilon_t.
}
\vskip-.75em%
If $\epsilon_t = 0$, the optimization is exact. 
Otherwise, the algorithm might not guarantee progress 
and the sequence $\epsilon_t$ needs to converge to 0 
for the iterations to converge.

\begin{restatable}{theorem}{restateAdditiveError}
\label{thm:additive-error}
Under assumptions \ref{ass:1}--\ref{ass:3},
if the \aM-step at step $t$ is 
solved up to $\epsilon_t$-additive error (\ref{ass:additive-error}),
\aligns{
	\fullmin\,
	\Expect{\DS(\s(\npt), \mpt)}
	\leq 
	\frac{\Loss(\npStart) - \Loss^*}{\maxiter}
	+ \frac{1}{T} \fullsum \epsilon_t.
}
\end{restatable}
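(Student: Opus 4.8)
The plan is to follow the exact-case argument behind \cref{thm:convergence-em-as-md} and \cref{cor:convergence-em-as-md}, but to carry the surrogate suboptimality $\Qt{\thetatt} - \QtStar$ as an extra term throughout. The starting point is the decomposition \cref{eq:em-as-loss-and-entropy}, $\Q{\np}{\phi} = \Loss(\phi) + \Hent{\np}{\phi}$, applied at $\np = \npt$ and $\phi = \thetatt$. Since $\Ht{\cdot}$ is minimized at $\npt$, the entropy gap $\Ht{\thetatt} - \Ht{\npt}$ is nonnegative, which gives the descent inequality $\Loss(\npt) - \Loss(\thetatt) \geq \Qt{\npt} - \Qt{\thetatt}$ for \emph{any} update $\thetatt$, whether exact or approximate. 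This is the only place the structure of \aEM enters, and it holds pathwise.

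Next I would split the surrogate progress into the ideal progress minus the suboptimality, $\Qt{\npt} - \Qt{\thetatt} = (\Qt{\npt} - \QtStar) - (\Qt{\thetatt} - \QtStar)$. The first term is the progress the exact \Mstep would make, and the same conjugate-duality computation used for \cref{cor:convergence-em-as-md} identifies it as $\Qt{\npt} - \QtStar = \DS(\s(\npt), \mpt)$: the exact minimizer of the mirror-descent upper bound \cref{eq:mirror-descent-upper-bound} has mean parameter $\mu = \s(\npt)$, and substituting it and simplifying through the Fenchel relation \peqref{eq:dual-bregman} between $\D$ and $\DS$ yields exactly the stationarity measure in the claim. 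The second term is nonnegative because $\QtStar$ is the minimum of the surrogate. Combining with the descent inequality gives the per-step bound
\aligns{
	\DS(\s(\npt), \mpt)
	\leq
	\big(\Loss(\npt) - \Loss(\thetatt)\big)
	+ \big(\Qt{\thetatt} - \QtStar\big),
}
valid pathwise for each $t$.

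Finally I would sum this bound over $t = 1, \dots, \maxiter$, telescoping the objective differences to at most $\Loss(\npStart) - \Loss^*$ using \ref{ass:2}, and then take expectations. The error term is handled by the tower property: conditioning on $\npt$, the additive-error guarantee \ref{ass:additive-error} gives $\Expect{\Qt{\thetatt} - \QtStar \mid \npt} \leq \epsilon_t$, so $\fullsum \Expect{\Qt{\thetatt} - \QtStar} \leq \fullsum \epsilon_t$. Dividing by $\maxiter$ and bounding the minimum by the average, $\fullmin \Expect{\DS(\s(\npt), \mpt)} \leq \frac{1}{\maxiter}\fullsum \Expect{\DS(\s(\npt), \mpt)}$, delivers the stated rate. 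The step requiring the most care is the identity $\Qt{\npt} - \QtStar = \DS(\s(\npt), \mpt)$, as it rests on the duality relation \peqref{eq:dual-bregman} and the moment-matching form of the exact update; the other genuinely new point relative to the exact proof is that \ref{ass:additive-error} is a \emph{conditional} guarantee, so the expectations must be assembled via the tower property rather than applied naively, and the min-versus-expectation ordering must be kept on the correct side of the final inequality.
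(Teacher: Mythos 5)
Your proposal is correct and follows essentially the same route as the paper's proof: the entropy-gap descent inequality from \cref{eq:em-as-loss-and-entropy}, the split of surrogate progress into ideal progress minus suboptimality, the identity $\Qt{\npt} - \QtStar = \DS(\s(\npt), \mpt)$ via moment matching and \cref{eq:dual-bregman}, and the telescoping/averaging with the additive-error guarantee. The only (immaterial) difference is bookkeeping: you derive a pathwise per-step bound and take expectations at the end via the tower property, whereas the paper works with conditional expectations given $\npt$ from the start.
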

\vspace{-.5em}
For example, for $\epsilon_t = \bigO(1/t)$, the rate reduces to $\bigO(\log(T)/T)$, 
but we recover a $\bigO(1/T)$ rate if the errors decrease faster, $\epsilon_t = \bigO(1/t^2)$.
As in \cref{sec:convex}, 
the results can be extended to give convergence in function values in a locally 
convex region.
The proofs of \cref{thm:multiplicative-error,thm:additive-error}
are deferred to \cref{app:proof-em-expfam}.

\subsection{\aEM for General Models}
\label{sec:em-general}

While the exponential family covers many applications of \aEM,
some are not smooth, in Euclidean distance or otherwise.
For example, in a mixture of Laplace distributions the gradient of the
surrogate is discontinuous (the Laplace distribution is not an exponential family).
In this case,
the progress of the \Mstep need not be related to the gradient 
and results similar to \cref{cor:convergence-em-as-md} do not hold.
To the best of our knowledge, 
there are no general non-asymptotic convergence results
for the general non-differentiable, non-convex setting
and all we can guarantee is asymptotic convergence,
as in the works of \citet{chretien2000kullback,tseng2004analysis}.

The tools presented here can still obtain partial answers for 
the Laplace mixture and similar examples. 
The analyses in previous sections considered the progress of the \Mstep, 
as is common in non-asymptotic literature.
We can instead view the \Estep as the primary driver of progress, 
as is more common in the asymptotic literature.
Assuming relative smoothness on the conditional distribution 
$p(\lat \cond \data, \np)$ only,
we derive in \cref{app:estep-analysis}
an analog of \cref{cor:convergence-em-as-md}
for stationarity on the latent variables,
rather than the complete-data distribution.
This guarantee is weaker, but the assumption holds more generally. 
For example, it is satisfied by any finite mixture, 
even if the mixture components are non-differentiable,
as for the Laplace mixture.

\makesection{Discussion}

Instead of assuming that the objective is smooth in Euclidean norm
and applying the methodology for the convergence of gradient descent, 
which does not hold even for the standard Gaussian mixture examples found in textbooks, 
we showed that \aEM for exponential families 
always satisfies a notion of smoothness relative to a Bregman divergence. 
In this setting, \aEM and its stochastic variants 
are equivalent to mirror descent updates.
This perspective leads to 
convergence rates that hold without additional assumptions 
and that are invariant to reparametrization.
We also showed how the ratio of missing information can be integrated in non-asymptotic convergence rates, 
and analyzed the use of approximate \Mstep{}s.
Although we focused on the \aMLE, \cref{app:priors} discusses \aMAP estimation.
We show that results similar to \cref{thm:convergence-em-as-md} on the convergence to stationary points in \aKL divergence still hold, 
with minor changes to incorporate the prior.
Viewing \aEM as a mirror descent procedure also highlights that it is a first-order method. 
It is thus susceptible to similar issues as classical first-order methods, 
such as slow progress in ``flat'' regions. However, flatness is measured in a different geometry (\aKL divergence) rather than the Euclidean geometry of gradient descent.

Beyond non-asymptotic convergence, 
smoothness relative to a \aKL divergence 
could be applied to extend statistical results,
such as that of \citet{daskalakis2016ten} to settings other than well-separated mixtures of Gaussians.
In addition to the \aEM algorithm, our results could be extended to variational methods, 
such as the works of \citet{hoffman2013svi,khan2016faster},
due to the similarity between the \aEM surrogate and the evidence lower-bound.

Stochastic variants of \aEM are becoming increasingly relevant 
as they allow the algorithm to scale to large datasets,
and recent recent work by 
\citet{chen2018stochastic,karimi2019global}
combined stochastic \aEM updates with variance reduction methods like SAG, SVRG, and MISO 
\citep{leroux2012sag,johnson2013svrg,mairal2015miso}.
But the analysis in those works take the view of \aEM 
as a preconditioned gradient step.
The resulting worst-case analysis
not only depends
on the smoothness constant, but the prescribed step-size is proportional to $\nicefrac{1}{L}$. For Gaussian mixtures with arbitrary initialization, this implies using a step-size of 0. 
Our results highlights the gap between \aEM and gradient-\aEM methods, using a combination of classic and modern tools from a variety of fields, and we hope that the tools developed here may help to fix this and similar practical issues.

\clearpage

\subsubsection*{Acknowledgements}
We thank the anonymous reviewers, whose comments helped improve the clarity of the manuscript.
We thank Frank Nielsen for pointing out that \ref{ass:1} needed to require \emph{steep} exponential families.
We thank
Si Yi (Cathy) Meng, 
Aaron Mishkin, 
and
Victor Sanches Portella
for providing comments on the manuscript and earlier versions of this work,
and for suggesting related material.
We are also grateful to 
Jason Hartford, 
Jonathan Wilder Lavington,
and 
Yihan (Joey) Zhou
for conversations 
that informed the ideas presented here. 

This research was partially supported by the Canada \acro{CIFAR} \acro{AI} Chair Program, 
the Natural Sciences and Engineering Research Council of Canada
(\acro{NSERC}) Discovery Grants \acro{RGPIN-2015-06068} 
and the \acro{NSERC} Postgraduate Scholarships-Doctoral Fellowship \acro{545847-2020}.

\subsubsection*{References}
\AtNextBibliography{\normalsize}
\printbibliography[heading=none]

\clearpage
\thispagestyle{empty}
\onecolumn
\appendixtitle{\thetitle\\Supplementary Materials}
\appendix

\makeatletter
\def\paragraph{
\@startsection{paragraph}
  {4}
  {\z@}
  {1.5ex plus 0.5ex minus .2ex}
  {-.5em}
  {\normalsize\bfseries}
}
\makeatother

\section*{Organization of the supplementary material}
\vspace{.5em}
\begin{description}[labelwidth=2.5cm,labelindent=10pt,leftmargin=3.0cm,align=left]
	\item[\cref{app:recap}] \nameref{app:recap}
	\item[\cref{app:proof-equivalence-em-md}] \nameref{app:proof-equivalence-em-md}
	\item[\cref{app:priors}] \nameref{app:priors}
	\item[\cref{app:proof-em-expfam}] \nameref{app:proof-em-expfam}
	\item[\cref{app:estep-analysis}] \nameref{app:estep-analysis}
\end{description}

\vfill

\begin{table}[ht]
\centering
\caption{Summary of notation and acronyms}
\vspace{.5em}
\renewcommand{\arraystretch}{1.05}%
\begin{tabular}{p{.15\textwidth}p{.15\textwidth}p{.55\textwidth}}
\toprule
Context & Symbol & \\
\midrule
Data & $\data$, $\lat$ & 
Observed ($\data$) and missing $(\lat)$, or latent, variables.
\\[.5em]
Parameters & $\np, \phi \in \Omega$ & 
(Natural) Parameters of the model and set of valid parameters.
\\
& $\mp$ & Equivalent mean parameters.
\\[.5em]
\aEM & $\Loss(\np)$ & 
Objective function, the negative log-likelihood $\mneg\log p(\data\cond\np)$.
\\
 & $\Q{\np}{\phi}$ & 
Surrogate objective optimized by the \Mstep.
\\[.5em]
\multirow{2}{.15\textwidth}{Exponential families }
& $\stats(\data, \lat)$ & Sufficient statistics.
\\
 & $\A(\np)$, $\AS(\np)$ & Log-partition function and its convex conjugate.
\\
 & $\D(\phi, \np)$ & Bregman divergence induced by the function $A$.
\\[.5em]
\multirow{2}{.15\textwidth}{Fisher information }
& $I_{\data,\lat}(\np)$ & 
Fisher information matrix of 
the distribution $p(\data,\lat\cond\np)$.
\\
& $I_{\lat\cond\data}(\np)$ & 
Fisher information matrix of 
the distribution $p(\lat\cond\data,\np)$.
\\[.5em]
Optimization & $t = 1, \ldots, T$ & Iteration counter and total iterations.
\\
\bottomrule
\end{tabular}

\vspace{1em}

{Acronyms:}

\vspace{1em}

\begin{minipage}[t]{.8\textwidth}
\begin{multicols}{2}
\begin{description}[labelwidth=1.0cm,labelindent=10pt,leftmargin=1.0cm,align=left,format={\normalfont}]
	\item[\aMLE] maximum likelihood estimate
	\item[\aMAP] maximum a posteriori estimate
	\item[\aNLL] negative log-likelihood
	\item[\aEM] expectation-maximization
	\item[\aGD] gradient descent
	\item[\aFIM] Fisher information matrix
	\item[\aKL] Kullback-Leibler
\end{description}
\end{multicols}
\end{minipage}
\end{table}

\vfill

\clearpage
\section{Supplementary material for \cref{sec:em-and-expfam}:\newline 
\sectionNameEmAndExpFam{}}
\label{app:recap}

\begingroup
\setlength{\parskip}{.5em}
This section extends on the background given in \cref{sec:em-and-expfam}
and give additional details and properties on
\begin{description}[labelwidth=2.5cm,labelindent=10pt,leftmargin=3.0cm,align=left]
	\item[\cref{app:recap-em}] \nameref{app:recap-em}
	\item[\cref{app:recap-ef}] \nameref{app:recap-ef}
	\item[\cref{app:recap-bregman}] \nameref{app:recap-bregman}
	\item[\cref{app:recap-fisher}] \nameref{app:recap-fisher}
	\item[\cref{app:recap-md}] \nameref{app:recap-md}
\end{description}

\subsection{Expectation-Maximization}
\label{app:recap-em}

This section gives additional details on the derivation of the \aEM surrogate 
and some of the perspective taken on the algorithm in the literature.
\citet{lange2000optimization,mairal2013optimization} 
view \aEM as a majorization-minimization algorithm to develop a general analysis 
and extend it to other problems.
\citet{chretien2000kullback,tseng2004analysis} view it instead as a 
proximal point method in Kullback-Leibler divergence
to study its asymptotic convergence properties.
Finally, \citet{csiszar1984information,neal1998view}
take an alternating minimization procedure view of the algorithm.
\citeauthor{csiszar1984information} use it to analyze its convergence properties 
while \citeauthor{neal1998view} develop an incremental variant.
This last perspective is the one presented by \citet{wainwright2008graphical}, 
viewed as a variational method.

The form of the algorithm presented in the main text is the one used 
by \citet{dempster1977maximum}.
The negative log-likelihood (\aNLL) $\Loss(\phi)$, surrogate $\Q{\np}{\phi}$ and entropy term $\Hent{\np}{\phi}$
are defined as
\aligns{
	\Loss(\np) = -\log p(\data\cond\np),
	&&
	\Q{\np}{\phi} = -\medint\log p(\data,\lat\cond\phi) \, p(\lat\cond\data,\np)\dif{\lat},
	&&
	\Hent{\np}{\phi} = -\medint\log p(\lat\cond\data,\phi) \, p(\lat\cond\data,\np)\dif{\lat}.
}
They obey the decomposition $\Q{\np}{\phi} = \Loss(\phi) + \Hent{\np}{\phi}$.
To show this, we use the fact that $\int p(\lat\cond\data,\np) \dif{\lat} = 1$, and
\aligns{
	\Loss(\phi) 
	= - \log p(\data\cond\phi) 
	= 
	- \log p(\data\cond\phi) 
	\cdot
	\medint p(\lat\cond\data,\np)\dif{\lat}
	=
	- \medint \log p(\data\cond\phi) \, p(\lat\cond\data,\np)\dif{\lat}.
}
Along with the chain rule, $p(\data,\lat\cond\phi) = p(\lat\cond\data,\phi) \, p(\data\cond\phi)$, 
we get
\aligns{
	\Loss(\phi)
	&=
	-\medint \log p(\data\cond\phi) \, p(\lat\cond\data,\np)\dif{\lat}
	\\[-1.5em]
	&=
	-\medint \log\paren{\frac{p(\data,\lat\cond\phi)}{p(\lat\cond\data,\phi)}} \, 
	p(\lat\cond\data,\np)\dif{\lat}
	=
	\overbrace{
		-\medint \log p(\data,\lat\cond\phi) \, p(\lat\cond\data,\np) \dif{\lat}
	}^{\Q{\np}{\phi}}
	+ \overbrace{
		\medint \log p(\lat\cond\data,\phi) \, p(\lat\cond\data,\np) \dif{\lat}
	}^{-\Hent{\np}{\phi}}
}

\subsubsection*{From a Majorization-Minimization perspective}
A majorization-minimization procedure in the sense of \citet{lange2000optimization} 
is an iterative procedure to optimize the objective $\Loss$.
Given the current estimate of the parameters $\npt$, 
we first find a majorant, an upper bound $f_t$ that it is tight at $\npt$,
$\Loss(\phi) \leq f_t(\phi)$ and 
$\Loss(\npt) = f_t(\npt)$.
We then minimize $f_t$ to obtain the new estimate $\nptt$.
As $f_t$ is an upper bound on the objective, 
$\nptt$ is guaranteed to be an improvement if it is an improvement on $f_t$.

The typical derivation of \aEM in this setting involves
expressing the \aNLL
as the marginal of the complete-data likelihood, 
multipliying the integrand by $\frac{p(\lat\cond\data,\np)}{p(\lat\cond\data,\np)}$
and using Jensen's inequality, $\mneg\log(\Expect{x}) \leq \mneg\Expect{\log(x)}$,
\aligns{
	\Loss(\phi)
	&= - \log \medint p(\data, \lat \cond \phi) \dif{\lat}
	\\
	&= - \log \medint p(\lat\cond\data,\np) \frac{p(\data, \lat \cond \phi)}{p(\lat \cond\data,\np)} \dif{\lat}
	\\[-1.5em]
	&\leq
	- \medint \log\paren{\frac{p(\data, \lat \cond \phi)}{p(\lat \cond\data,\np)}}
	\, p(\lat\cond\data,\np)  \dif{\lat}
	= 
	\overbrace{
	- \medint \log p(\data, \lat \cond \phi)
	\, p(\lat\cond\data,\np)  \dif{\lat}}^{\Q{\np}{\phi}}
	+ \overbrace{
	\medint \log p(\lat \cond\data,\np) \, p(\lat \cond\data,\np) \dif{\lat}
	}^{- \Hent{\np}{\np}}.
}
It gives that the surrogate $\Q{\np}{\cdot}$ is an upper bound on the objective, up to a constant,
$\Loss(\phi) \leq \Q{\np}{\phi} + \const$.
The surrogate $\Q{\np}{\cdot}$ itself is not a majorant, 
as $\Q{\np}{\np} = \Loss(\np) + \Hent{\np}{\np}$.
The difference, however, is not relevant for optimization as it does not depend on $\phi$.
If we define instead the surrogate as $\Qprime{\np}{\phi} = \Q{\np}{\phi} - \Hent{\np}{\np}$, 
we get
\aligns{
	\Qprime{\np}{\phi} = \Loss(\phi) + \Hent{\np}{\phi} - \Hent{\np}{\np}.
	&&\text{ and }&&
	\Loss(\np) = \Qprime{\np}{\np}
}
The two formulations of the surrogate share the same minimizers as they differ by an additive constant.
\endgroup

\subsubsection*{From a proximal point perspective}
The definition of $\Qprime{\np}{\cdot}$ also gives the proximal point perspective
used by \citet{chretien2000kullback,tseng2004analysis} 
to discuss the asymptotic convergence properties of \aEM.
The differences of entropy terms is a \aKL divergence;
\aligns{
	\Hent{\np}{\phi} - \Hent{\np}{\np}
	&= -\int \log p(\lat\cond\data,\phi) \, p(\lat\cond\data,\np)\dif{\lat}
	+ \medint \log p(\lat\cond\data,\np) \, p(\lat\cond\data,\np)\dif{\lat},
	\\[-.25em]
	&= - \medint \log \paren{\frac{p(\lat\cond\data,\phi)}{p(\lat\cond\data,\np)}} \, p(\lat\cond\data,\np) \dif{\lat}
	\quad = \KL{p(\lat\cond\data,\np)}{p(\lat\cond\data,\phi)}.
}
The \aEM iterations can then be expressed as minimizing $\Loss$ and a \aKL proximity term,
\aligns{\ts
	\nptt 
	= \arg\min_\theta Q'_{\npt}(\np)
	=
	\arg\min_\np \braces{
		\Loss(\np) + \KL{p(\lat \cond \data, \npt)}{p(\lat\cond\data, \np)}
	}.
}
\citet{amid2020divergence} used this view
to extend stochastic versions of \aEM beyond exponential families.

\subsubsection*{From an alternating minimization perspective}
The expression in terms of a \aKL divergence 
also gives the alternating minimization approach used by 
\citet{csiszar1984information} to show asymptotic convergence,
and by \citet{neal1998view} to justify partial updates.
This is the variational approach presented by \citet{wainwright2008graphical}.
For a distribution $q$ on the latent variables, 
parametrized by $\phi$, the objective function is equivalent to
\aligns{\textstyle
	\Loss(\np)
	= - \log p(\data \cond \np)
	= - \log p(\data \cond \np) + \min_\phi \KL{q(\lat \cond \phi)}{p(\lat \cond \data, \np)}
}
if $q$ is sufficiently expressive and we can minimize the \aKL divergence exactly.
The parameters $\phi$ and $\np$ need not be defined on the same space, 
as $\phi$ only controls the conditional distribution over the latent variables 
and $\np$ controls the complete-data distribution.
We can write the \aEM algorithm as alternating optimization 
on the augmented objective function
\aligns{
	\Aug(\np,\phi) = - \log p(\data \cond \np) + \KL{p(\lat\cond\data, \phi)}{p(\lat \cond \data, \np)}
	&&\text{ such that }&& \ts \Loss(\np) = \min_\phi \Aug(\np,\phi).
}
The \aE and \aM steps then correspond to 
\aligns{
	\text{\aE-step:}&&\ts
	\phi_{t+1} = \arg\min_\phi \Aug(\np_t, \phi),
	&&
	\text{\aM-step:}&&\ts
	\np_{t+1} = \arg\min_\np \Aug(\np, \phi_{t+1}).
}
We will return to this perspective in \cref{app:estep-analysis}
to analyse the progress of the \Estep.

\subsubsection*{Gradients and Hessians}
From the equivalence between $\Q{\np}{\phi}$ and $\Qprime{\np}{\phi}$ up to constants, 
they share the same gradient as the \aNLL at $\np$, as
\aligns{
	\nabla \Q{\np}{\phi}\cond_{\phi=\np}
	= \nabla \Qprime{\np}{\phi}\cond_{\phi=\np}
	= \nablaLoss(\np)
	+ 
	\underbrace{\nabla_\phi \KL{p(\lat\cond\data,\np)}{p(\lat\cond\data,\phi)} \cond_{\phi=\np}}_{=0},
}
\vspace{-2em}

if they are differentiable. Similarly, their Hessian is 
\aligns{
	\nabla^2 \Q{\np}{\phi}\cond_{\phi=\np}
	= \nabla^2 \Qprime{\np}{\phi}\cond_{\phi=\np}
	= \nabla^2 \Loss(\np)
	+ 
	\nabla_\phi^2 \KL{p(\lat\cond\data,\np)}{p(\lat\cond\data,\phi)} \cond_{\phi=\np}.
}

\subsubsection*{Invariance to homeomorphisms}

The invariance of the \aEM update to homeomorphisms 
is a direct result of the exactness of the \Mstep. 
A homeomorphism between two parametrizations $(\np, \mp)$
is a continuous bijection $f$ with continous inverse $f^{-1}$, such that 
$\np = f(\mp)$ and $\mp = f^{-1}(\np)$.
Although we use the same notation as the mean and natural parameters, 
$\np$ and $\mp$ can be any parametrization.
Letting $(\npt,\mpt)$ be the current iterates,
the \aEM update in parameters $\np$ or $\mp$ yields
\aligns{\ts
	\nptt \in \arg\min_\np \Q{\npt}{\np}
	&&\ts
	\mptt \in \arg\min_\mp \Q{f(\mpt)}{f(\mp)}.
}
If $\Q{\np}{\cdot}$ is strictly convex, it has a unique minimum and 
$\nptt = f(\mptt)$, $\mptt = f^{-1}(\nptt)$.
Otherwise, $(f, f^{-1})$ defines a bijection between the possible updates.
While the update in some parametrizations might be 
easier to implement, the update to the probabilistic model is the same 
regardless of the parametrization.

\clearpage
\subsection{Exponential families}
\label{app:recap-ef}

For a detailed introduction on exponential families, 
we recommend the work of \citet[]{wainwright2008graphical}.

An distribution $p(x\cond\np)$ is in the exponential family 
with natural parameters $\np$
if it has the form 
\aligns{
	p(x \cond \np)
	= h(x) \exp\paren{\lin{\stats(x), \np} - \A(\np)}
	&&\text{}&&
	\mneg\log p(x \cond \np)
	= \A(\np) - \lin{\stats(x), \np} - \log h(x),
}
where $h$ is the base measure, $\stats$ are the sufficient statistics, 
and $\A$ is the log-parition function.
We did not discuss the base measure $h$ in the main text;
it is necessary to define the distribution
but does not influence the optimization as it does not depend on $\np$.
This can be seen from the gradient and Hessian of the \aNLL;%
\aligns{
	\nabla \mneg\log p(x \cond \np)
	= \dA(\np) - \stats(x)
	&&\text{ and }&&
	\nabla^2 \mneg\log p(x\cond\np) = \nabla^2 \A(\np).
}
\subsubsection*{Examples: Bernoulli and univariate Gaussian}
For a binary $\data \in \{0,1\}$, 
the Bernoulli distribution $p(x\cond\pi) = \pi^x(1-\pi)^x$
is an exponential family distribution with
\aligns{
	&h(x) = 1
	&&
	\stats(x) = x
	&&
	\np = \log\paren{\frac{\pi}{1-\pi}}
	&&
	\A(\np) = \log(1 + e^\np) = -\log(1-\pi).
\intertext{For $\data \in \R$, 
the Gaussian $p(x\cond\mu,\sigma^2) = \frac{1}{\sqrt{2\pi\sigma^2}}\exp(-(x-\mu)^2/2\sigma^2)$
is an exponential family distribution with}
	&h(x) = \frac{1}{\sqrt{2\pi}}
	&&
	\stats(x) = \brackets{x, x^2}
	&&
	\np = \brackets{\frac{\mu}{\sigma^2}, -\frac{1}{2\sigma^2}}
	&&
	\A(\np) = -\frac{\theta_1^2}{4\theta_2}
	- \frac{1}{2}\log\nleft\vert-\frac{1}{2\theta_2}\nright\vert
	= \frac{\mu^2}{2\sigma^2} + \log\sigma.
}
\subsubsection*{The log-partition function and mean parameters}
Given the base measure $h$ and sufficient statistics function $\stats$, 
the log-partition function $\A$ is defined 
such that the probability distribution is valid and integrates to 1,
\aligns{
	\begin{aligned}
	1 = \medint p(x\cond\np)\dif{x}
	&= \medint h(x) \exp\paren{\lin{\stats(x), \np} - \A(\np)} \dif{x} ,
	\\[-.5em]
	&= \exp\paren{-\A(\np)} \medint h(x) \exp\paren{\lin{\stats(x), \np}} \dif{x}
	\end{aligned}&&\implies&&
	\A(\np) &= \log \medint h(x) \exp\paren{\lin{\stats(x), \np}} \dif{x}
}
This formulation gives that the log-partition function is convex 
and its gradient yields the expected sufficient statistics produced by the model, 
$\dA(\np) = \Expect[p(x\cond\np)]{\stats(x)}$
\aligns{
	\dA(\np)
	&= 
	\nabla \log \medint h(x) \exp\paren{\lin{\stats(x), \np}} \dif{x}
	= 
	\frac{1}{\int h(x) \exp\paren{\lin{\stats(x), \np}} \dif{x}} 
	\nabla \medint h(x) \exp\paren{\lin{\stats(x), \np}} \dif{x},
	\\
	&=
	\exp(-\A(\np))
	\nabla \medint h(x) \exp\paren{\lin{\stats(x), \np}} \dif{x}
	= \exp(-\A(\np)) \medint h(x) 
	\exp \stats(x) \paren{\lin{\stats(x), \np}} \dif{x}
	= 
	\medint \stats(x) \, p(x\cond\np) \dif{x}.
}
If the log-partition function $\A$ is strictly convex, 
the exponential family is said to be minimal 
and there is a bijection between $\np$ and the expected sufficient statistics.
The expected sufficient statistics give an equivalent way to parametrize the model, 
called the mean parameters, which are denoted $\mp$. 
The gradient $\dA$ maps the natural to 
the mean parameters, $\mp = \dA(\np)$.
The inverse mapping is the gradient of the convex conjugate of $\A$,
\aligns{\ts
	\AS(\mp) = \sup_\np \braces{\lin{\np, \mp} - \A(\np)}.
}
We then get the bijection $\mp = \dA(\np)$ and $\np = \dAS(\mp)$.
The Hessians of $\A$ and $\AS$ are also inverses of each other.
This can be seen from the fact that the composition 
$\np = \dAS(\dA(\np))$ is the identity, and 
\aligns{
	\nabla \brackets{\dAS(\dA(\np))} =
	\nabla^2 \AS(\mu) \nabla^2 \A(\np) = I.
}
The minimality of the exponential family, or the strict convexity of $\A$,
ensures both $\nabla^2\A$ and $\nabla^2\AS$ are invertible.

\subsubsection*{For Expectation-Maximization}
When the complete-data distribution $p(\data,\lat\cond\np)$ is in the exponential family, 
the \Mstep has a simple expression as the surrogate $\Q{\np}{\cdot}$
depends on the data only through the expected sufficient statistics at $\np$, 
\aligns{
	\Q{\np}{\phi}
	=
	- \medint\log p(\data, \lat \cond \phi) \, p(\lat \cond \data, \np) \dif{\lat}
	=
	- \lin{\textstyle\Expect[p(\lat \cond \data, \np)]{\stats(\data, \lat)}, \phi} + \A(\phi).
}
Writing the expected sufficient statistics as $\statso(\np) = \Expect[p(\lat \cond \data, \np)]{\stats(\data, \lat)}$,
the gradients of the surrogate and \aNLL are
\aligns{
	\nabla \Q{\np}{\phi} 
	=
	- \statso(\np) + \nabla \A(\phi)
	&&\text{ and }&&
	\nablaLoss(\np) 
	= \nabla \Q{\np}{\np} = - \statso(\np) + \nabla \A(\np).
}

\clearpage
\subsection{Bregman divergences}
\label{app:recap-bregman}
For an overview of Bregman divergences 
in clustering algorithms
and their relation with exponential families, 
we recommend the work of \citet{banerjee2005clustering}.

\begin{minipage}[t]{.48\textwidth}
Bregman divergence are a generalization of squared Euclidean 
distance based on convex functions.
For a function $h$, $\Breg{h}(\np, \phi)$ 
is the difference between the function at $\np$ 
and its linearization constructed at $\phi$,
\aligns{
	\Breg{h}(\np, \phi) 
	= h(\np) - h(\phi) - \lin{\nabla h(\phi), \np - \phi}.
}
This is illustrated in \cref{fig:bregman}.
The simplest example of a Bregman divergence is the Euclidean distance, 
which is generated by setting $h(\np) = \frac{1}{2}\norm{\np}^2$, 
such that %
\end{minipage}\hfill%
\begin{minipage}[t]{.48\textwidth}
\null~
\vspace{-3em}
\begin{center}
\includegraphics[width=.9\textwidth]{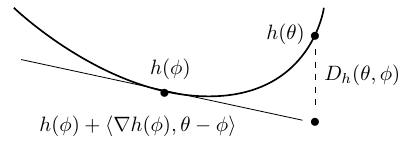}
\end{center}
\vspace{-1.4em}
\captionof{figure}{Illustration of the Bregman divergence of a convex function $h$ as the difference between the linearization of the function and its value.}
\label{fig:bregman}
\end{minipage}
\aligns{
	\hspace{-5em}
	\Breg{h}(\np, \phi) 
	&= h(\np) \,\,\,\,\, - h(\phi) \,\,\,\,\, - \lin{\nabla h(\phi), \np - \phi},
	\\
	&= \frac{1}{2}\norm{\np}^2 - \frac{1}{2}\norm{\phi}^2 
	- \lin{\phi, \np - \phi}
	\quad\quad\quad = \frac{1}{2}\norm{\np}^2 - \lin{\phi, \np} + \frac{1}{2}\norm{\phi}^2
	\quad = \frac{1}{2}\norm{\np - \phi}^2.
}
Other examples of Bregman divergences include
\aligns{
	&\text{Weighted Euclidean/Mahalanobis distance:}
	&&x \in \R^d
	&&\ts
	h(x) = \frac{1}{2} \lin{x, Ax}
	&&\ts
	D_h(x, y) = \frac{1}{2}\norm{x-y}_A^2
	\\
	&\text{Kullback-Leibler divergence on the simplex:}
	&&\pi \in \Delta^{\m!d}
	&&\ts
	h(\pi) = \sum_{i=1}^d \pi_i \log \pi_i
	&&\ts
	D_h(\tau,\pi) = \sum_{i=1}^d \pi_i \log\paren{\frac{\pi_i}{\tau_i}}
}
\subsubsection*{General properties}
The Euclidean example is not representative of general Bregman divergences, 
as they lack some properties of metrics.
They are not necessarily symmetric (in general, $\Breg{h}(\np, \phi) \neq \Breg{h}(\phi, \np)$)
and do not satisfy the triangle inequality.
The Bregman divergence is convex in its first argument, as it reduces to $h(\np)$ 
and a linear term, but needs not be convex in its second argument.
The gradients and Hessian with respect to the first argument are 
\aligns{
	\begin{aligned}
	\nabla_\np \Breg{h}(\np, \phi) 
	= \nabla_\np \brackets{h(\np) - h(\phi) - \lin{\nabla h(\phi), \np - \phi}}
	= \nabla h(\np) - \nabla h(\phi)
	\end{aligned}
	&&\text{ and }&&
	\nabla_\np^2 \Breg{h}(\np, \phi) &= \nabla^2 h(\np).
}
Bregman divergences statisfy a generalization of the Euclidean decomposition, 
called the three point-property;
\aligns{
	\text{Euclidean:}
	&&
	\norm{a - c}^2 = \norm{a - b + b - c}^2
	&= \norm{a - b}^2 + 2 \lin{a - b, b - c} + \norm{b - c}^2,
	\\
	\text{Bregman divergence:}
	&&
	\Dh(a, c) &= \Dh(a,b) + \lin{a-b, \dh(b) - \dh(c)} + \Dh(b,c).
}
This property can be directly verified 
by expanding $\Dh(a,b) = h(a) - h(b) - \lin{\nabla h(b), a -b}$,
\aligns{
	\begin{array}{llcccllllllllllllllllllll}
	&\Dh(a,b) 
	&\!\!\!\!\!\!\!\!+\!\!\!\!\!\!\!\!& \lin{a-b, \dh(b) - \dh(c)} 
	&\!\!\!\!\!\!\!\!+\!\!\!\!\!\!\!\!& \Dh(b,c)
	\\
	=\quad 
	&h(a) - h(b) - \lin{\dh(b), a-b}
	&\!\!\!\!\!\!\!\!+\!\!\!\!\!\!\!\!& \lin{\dh(b), a- b} - \lin{\dh(c), a- b}
	&\!\!\!\!\!\!\!\!+\!\!\!\!\!\!\!\!& h(b) - h(c) - \lin{\dh(c), b-c}
	\\
	=\quad 
	&h(a) 
	&&
	&&\hphantom{h(b)} - h(c) - \lin{\dh(c), a-c}
	= \Dh(a, c)
	\end{array}
}
The Bregman divergence induced by $h$ and its convex conjugate $h^*$ satisfy the following relation,
\aligns{
	D_h(x, y) = D_{h^*}(\nabla h(y), \nabla h(x)).
}
The convex conjugate of a function $h$ is $h^*(\mu) = \sup_{\theta} \braces{\lin{\theta, \mu} - h(\theta)}$,
and if $h$ is strictly convex and differentiable, the supremum is attained at $\mu = \nabla h(\theta)$,
creating a mapping from the domain of $h$ to the range of its gradient. 
The inverse mapping can be found by taking the bi-conjugate (the conjugate of the conjugate), which recovers $h = (h^*)^*$;
$h(\theta) = \sup_{\mu}\braces{\lin{\mu, \theta} - h^*(\mu)}$, and the supremum is attained at $\theta = \nabla h^*(\mu)$.

\subsubsection*{For exponential families}
For an exponential family $p(x\cond\np)$, 
the Bregman divergence induced by the log-partition function $\A$ 
is the Kullback-Leibler divergence between the distributions given by the parameters
\aligns{
	\KL{p(x\cond\phi)}{p(x\cond\np)}
	= \medint \log\paren{\frac{p(x\cond\phi)}{p(x\cond\np)}} \, p(x\cond\phi) \dif{\lat}
	&= 
	\medint \paren{\lin{\stats(x), \phi} - \A(\phi) - \lin{\stats(x), \np} + \A(\np)} p(x\cond\phi) \dif{\lat},
	\\[-.5em]
	&= \ts
	\A(\np) - \A(\phi)
	+
	\lin{\Expect[p(x\cond\phi)]{\stats(x)}, \phi - \np},
	\\[.5em]
	&=
	\A(\np) - \A(\phi)
	+
	\lin{\dA(\phi), \phi - \np} 
	\quad = \D(\np, \phi).
}

\clearpage
\subsection{Fisher information matrices}
\label{app:recap-fisher}

For an introduction to Fisher information 
in the context of the \aEM algorithm 
and its connection to the ratio of missing information, 
we recommend the work of \citet[\S3.8--3.9]{mclachlan2007em}.

For a probability distribution parametrized by $\np$,
$p(x \cond \np)$, the Fisher information 
is a measure of the information 
that observing some data $x$ would provide about the parameter $\np$.
The Fisher information matrix (\aFIM) is
\aligns{\ts
	I(\np) 
	\;=\;
	\nabla^2_{\phi} \KL{p(x \cond \np)}{p(x \cond \phi)} \cond_{\phi = \np}
	\;=\;
	\Expect[p(x \cond \np)]{\nabla^2 \mneg\log p(x \cond \np)}
	\;=\;
	\Expect[p(x \cond \np)]{\nabla \log p(x \cond \np) \nabla \log p(x \cond \np)^\top},
}
where all expressions are equivalent.
As we will have to distinguish between the information of different distributions, 
we define the following 
notation for the distributions 
$p(x \cond \np)$,
$p(\data,\lat\cond \np)$, and 
$p(\lat \cond \data, \np)$;
\aligns{
	I_{\data}(\np) \m!= 
	\!\!\!\Expect[p(\data \cond \np)]{\nabla^2 \mneg\log p(\data \cond \np)},
	&&
	I_{\data,\lat}(\np) \m!= 
	\!\!\!\Expect[p(\data, \lat \cond \np)]{\nabla^2 \mneg\log p(\data, \lat \cond \np)},
	&&
	I_{\lat\cond\data}(\np) \m!= 
	\!\!\!\Expect[p(\lat \cond \data, \np)]{\nabla^2 \mneg\log p(\lat \cond \data, \np)}.
}
The first two do not depend on data
as $\data$ and $\lat$ are sampled from the probabilistic model. 
The conditional \aFIM $I_{\lat\cond\data}(\np)$ depends on the observed data $x$
as the expectation is with respect to $p(\lat\cond\data,\np)$.

The Fisher information depends on the parametrization of the distribution.
Let us write $I_{x\cond\np}$ and $I_{x\cond\mp}$ for the Fisher information of two equivalent parametrizations, $(\np, \mp)$,
and $(f, f^{-1})$ be the homeomorphism such that $\np = f(\mp)$ and $\mp = f^{-1}(\np)$.
The information matrices obey
\aligns{
	I_{x\cond\mp}(\mp) = \jac f(\mp)^\top \, I_{x\cond\np}(\np) \, \jac f(\mp),
}
where $\jac f$ is the Jacobian of $f$. 
Although we use $\np$ and $\mp$, those parametrizations need not be the natural and mean parametrization for this property to hold.
This is shown most easily by using the outer-product form;
\aligns{
	I_{x\cond\mp}(\mp) 
	&= \ts
	\Expect[p(x\cond f(\mp))]{\nabla_\mp \log p(x\cond f(\mp)) \nabla_\mp \log p(x\cond f(\mp))^\top},
	\\
	&= \ts
	\Expect[p(x\cond f(\mp))]{\jac f(\mp)^\top \nabla_\np \log p(x\cond \np) \nabla_\np \log p(x\cond\np)^\top \jac f(\mp)},
	\\
	&= \ts
	\jac f(\mp)^\top \Expect[p(x\cond \np)]{\nabla \log p(x\cond \np) \nabla \log p(x\cond\np)^\top} \jac f(\mp)
	= \jac f(\mp)^\top \, I_{x\cond\np}(\np) \, \jac f(\mp).
}

{\bf For an exponential family distribution} $p(x\cond\np)$, the \aFIM is also equal to the Hessian of the \aNLL, as
\aligns{\ts
	I(\np) 
	\;=\;
	\Expect[p(x \cond \np)]{\nabla^2 \mneg\log p(x \cond \np)}
	\;=\;
	\Expect[p(x \cond \np)]{\nabla^2 \A(\np)}
	\;=\; \nabla^2 \A(\np).
}
For the natural and mean parameters $(\np, \mp)$, 
applying the reparametrization property to $(\dA, \dAS)$
along with the fact that $I(\np) = \nabla^2 \A(\np)$ and $\dA(\np) = [\dAS(\mp)]^{-1}$ 
gives that $I_{x\cond\mp}(\mp) = I_{x\cond\np}(\np)^{-1}$, as
\aligns{
	I_{x \cond \mp}(\mp) = \nabla^2 \AS(\mp) I_{x\cond\np}(\np) \nabla^2 \AS(\mp)
	= \nabla^2 \AS(\mp) \nabla^2 \A(\np) \nabla^2 \AS(\mp) = \nabla^2 \AS(\mp).
}

{\bf For Expectation-Maximization,} if the complete-data distribution $p(\data,\lat\cond\np)$ 
is in the exponential family, 
the Hessian of the surrogate and objective are 
\vspace{-.5em}
\aligns{
	\nabla^2 \Q{\np}{\np} = \nabla^2 \A(\np) = I_{\data,\lat}(\np),
	&&
	\begin{aligned}\nabla^2 \Loss(\np) 
	&= \nabla^2 \Q{\np}{\np} - \nabla_\phi^2 \KL{p(\lat\cond\data,\np)}{p(\lat\cond\data,\phi)} \cond_{\phi=\np}
	\\&=
	I_{\data,\lat}(\np) - I_{\lat\cond\data}(\np).
	\end{aligned}
}
This follows from the definition of $\Q{\np}{\cdot}$ (\cref{app:recap-em})
and the properties of exponential families (\cref{app:recap-ef}). 

\subsubsection*{Natural gradients}
The gradient is a measure of the direction of steepest increase, 
where steepest is defined with respect to the Euclidean distance between the parameters.
When the parameters of a function also define a probability distribution, 
the natural gradient \citep{amari2000methods}
is the direction of steepest increase, where steepest is instead measured
by the \aKL divergence between the induced distributions.
The natural gradient is obtained by preconditioning the gradient with the inverse of the \aFIM 
of the relevant distribution, $I(\np)^{-1}\nablaLoss(\np)$.

For exponential families, the gradient with respect to the natural parameters $\np$ 
is the natural gradient with respect to the mean parameters $\mp$.
Letting $\Loss_d(\mp) = \Loss(\dAS(\mp))$ be the objective express in mean parameters, 
we have
\aligns{
	\nablaLoss_d(\mp) = \nabla^2\AS(\mp) \nablaLoss(\np) 
	= [\nabla^2\A(\np)]^{-1} \nablaLoss(\np)
	&&
	\nabla \Loss(\np) 
	= \nabla^2 \A(\np) \nablaLoss_d(\mp)
	= [\nabla^2\AS(\mp)]^{-1} \nablaLoss_d(\mp)
}
This implies the mirror descent update $\mptt = \mpt - \nablaLoss(\npt)$
is a natural gradient descent step in mean parameters 
when the mirror map $\A$ 
is the log-partition function of an exponential family 
\citep{raskutti2015information}.
The view of \aEM as a natural gradient update
was already used by \citet{sato1999fastlearning}
to justify a stochastic variant.

\clearpage
\subsection{Mirror descent, convexity, smoothness and their relative equivalent}
\label{app:recap-md}

For a more thorough coverage of mirror descent, 
we recommend the works of \citet{nemirovski1983,beck2003mirror}.
For an introduction on convexity, smoothness and strong convexity, 
we recommend the work of \citet{nesterov2013introductory}.
For their relative equivalent, see \citet{bauschke2017descent,lu2018relatively}.

The traditional gradient descent algorithm to optimize a function $f$
can be expressed as the minimization of the linearization of 
$f$ at the current iterates $\npt$ and a Euclidean distance proximity term
depending on the step-size $\gamma$,
\aligns{\ts
	\nptt = \arg\min_\np \braces{f(\npt) + \lin{\nabla f(\npt), \np - \npt} 
	+ \frac{1}{2 \gamma} \norm{\np - \nptt}^2}.
}
As the surrogate objective is convex, the update is found by taking the derivative 
and setting it to zero;
\aligns{\ts
	\nabla f(\npt) + \frac{1}{\gamma} (\nptt - \npt) = 0
	&&\implies&&
	\nptt = \npt - \gamma \nabla f(\npt).
}
The mirror descent algorithm is an extension where the Euclidean distance is 
replaced by a Bregman divergence,
\aligns{\ts
	\np' = \arg\min_\phi \braces{f(\np) + \lin{\nabla f(\np), \phi - \np} + \frac{1}{\gamma} \Breg{h}(\phi, \np)}.
}
Setting $h(\np) = \frac{1}{2}\norm{\np}^2$ recovers the gradient descent surrogate.
The stationarity condition gives the update
\aligns{\ts
	\nabla f(\npt) + \frac{1}{\gamma} \paren{\nabla h(\nptt) - \nabla h(\npt)} = 0
	&&\implies&&
	\nabla h(\nptt) = \nabla h(\npt) - \gamma \nabla f(\npt).
}
Or, equivalently, the update can be written in the dual parametrization $\mp = \nabla h(\np)$,
\aligns{
	\mptt = \mpt - \gamma \nabla f(\npt).
}
The mirror descent update applies the gradient step to the dual parameters 
instead of the primal parameters $\np$.
In the mirror descent literature, 
the reference function $h$ is called the mirror function or mirror map.

\subsubsection*{Smoothness and strong convexity}
The gradient descent update with an arbitrary constant step-size $\gamma$ 
is not guaranteed to make progress on the original function $f$,
at least not without additional assumptions. 
A common assumption is that the function $f$ is smooth, 
meaning that its gradient is Lipschitz with constant $L$,
\aligns{
	\norm{\nabla f(\np) - \nabla f(\phi)} \leq L\norm{\np - \phi},
	\quad \text{ for any } \np,\phi.
}
The $L$-smoothness of $f$ implies the following upper bound holds, 
\aligns{
	f(\phi) 
	\leq 
	f(\np) + \lin{\nabla f(\np), \phi - \np)} + \frac{L}{2}\norm{\np-\phi}^2
	\quad \text{ for any } \np,\phi.
}
Setting $\gamma \leq \frac{1}{L}$ ensures the surrogate optimized by gradient descent 
is an upper bound on $f$ and leads to progress.
If the objective function is also $\alpha$-strongly convex, 
meaning the following lower bound holds,
\aligns{
	f(\np) + \lin{\nabla f(\np), \phi - \np} - \frac{\alpha}{2}\norm{\np-\phi}^2
	\leq
	f(\phi) 
	\quad \text{ for } \alpha > 0 \text{ and any } \np,\phi,
}
gradient descent converges at a faster, linear rate.
This definition recovers convexity in the case $\alpha = 0$ 
and is otherwise stronger.
If $f$ is twice differentiable, $\alpha$-strong convexity and $L$-smoothness 
are equivalent to 
\aligns{
	\alpha I \preceq \nabla^2 f(\np) \preceq L I 
	\quad \text{ for all } \np.
}
Here, $\preceq$ is the Loewner ordering on matrices, 
where $A \preceq B$ if $B - A$ is positive semi-definite,
meaning the minimum eigenvalue of $B - A$ is larger than or equal to zero. 

\subsubsection*{Relative smoothness and strong convexity}
Relative $L$-smoothness and $\alpha$-strong convexity 
provide an analog of smoothness and strong-convexity for mirror descent.
They are defined relative to a reference function $h$, 
such that the following lower and upper bound hold
\aligns{
	f(\np) + \lin{\nabla f(\np), \phi - \np)} - \alpha \Breg{h}(\phi, \np)
	\leq 
	f(\phi) 
	\leq 
	f(\np) + \lin{\nabla f(\np), \phi - \np)} + L \Breg{h}(\phi, \np)
	\quad 
	\text{ for } 0 < \alpha \leq L \text{ and any } \np, \phi.
}
Alternatively, if $f$ and $h$ are twice differentiable,
those conditions are equivalent to 
\aligns{
	\alpha \nabla^2 h(\np) \preceq \nabla^2 f(\np) \preceq L \nabla^2 h(\np)
	\quad \text{ for all } \np.
}
In the case $h(\np) = \frac{1}{2}\norm{\np}^2$, 
we recover the standard definition 
of Euclidean smoothness and strong-convexity.

\newpage
\section{Supplementary material for \cref{sec:em-and-mirror-descent}:\newline 
\sectionNameEmAndMD{}}
\label{app:proof-equivalence-em-md}
\vspace{-.5em}
The connection between \aEM and mirror descent presented in the main paper, and in more details here,
is similar to the insights of previous works that describe \aEM using exponential families or Bregman divergences.

This sufficient statistics update of \aEM
predates the work of \citet{dempster1977maximum} 
and was used in a variety of scenarios such as
online \citep{sato1999fastlearning,cappe2009online},
incremental \citep{neal1998view} and 
variance reduced \citep{chen2018stochastic,karimi2019global} variants of \aEM.
The view of \aEM as minimizing divergences was developped by \citet{csiszar1984information,amari1995em}.
This apprach was used by \citet{chretien2000kullback,tseng2004analysis}
to derive asymptotic convergence results for \aEM, while \citet{amid2020divergence} 
provide a divergence-based description of online \aEM, beyond exponential families.
Closest to our work is the generalization of clustering algorithms of \citet{banerjee2005clustering}, 
based on minimizing Bregman divergences.

Although a result of the same ideas, the equivalence between mirror descent and \aEM does not seem to have been formally stated.
The key distinction from previous work is our focus on non-asymptotic convergence rates,
where current proofs use the machinery of gradient descent (Euclidean smoothness) to describe \aEM.
Our contribution is to show that, viewed as mirror descent,
the relative smoothness framework of \citet{bauschke2017descent,lu2018relatively} 
yields non-asymptotic convergence rates for \aEM without additional, unrealistic assumptions.

The connection between the progress of \aEM in \aKL divergence
and the ``natural decrement'' (Eq.~\ref{eq:natural-decrement})
builds on the connection between mirror and natural gradient descent 
in natural and mean parameterization, noted by \citet{sato1999fastlearning} and \citet{raskutti2015information} 
in the context of \aEM and mirror descent, respectively.

This section gives additional details on 
the relationship between \aEM and mirror descent 
and the 1-relative smoothness of \aEM.
We restate in longer form the proof of \cref{prop:equivalence};
\begin{thmbox}
\restateEmAsMd*
\end{thmbox}
\vspace{-.5em}
\begin{proof}[Proof of \cref{prop:equivalence}]
Recall the decomposition of the surrogate
in terms of the objective and entropy term,
$\Q{\np}{\phi} = \Loss(\phi) + \Hent{\np}{\phi}$
in \cref{eq:em-as-loss-and-entropy}. 
It gives
\aligns{
	\Loss(\phi) - \Loss(\np)
	= \Q{\np}{\phi} - \Q{\np}{\np} + 
	\Hent{\np}{\np} - \Hent{\np}{\phi},
}
where $\Hent{\np}{\np} - \Hent{\np}{\phi} \leq 0$
as $\Hent{\np}{\phi}$ is minimized at $\phi=\np$.
We will show that for exponential families, 
\aligns{
	\Q{\np}{\phi} - \Q{\np}{\np}
	=
	\lin{\nablaLoss(\np), \phi - \np} + \D(\phi, \np),
}
which implies the upper-bound in 
\cref{eq:mirror-descent-upper-bound}
and that its minimum matches that of $\Q{\np}{\phi}$.

If the complete-data distribution is in the exponential family,
the surrogate in natural parameters is 
\aligns{
	\Q{\np}{\phi}
	&=
	- \medint\log p(\data, \lat \cond \phi) \, p(\lat \cond \data, \np) \dif{\lat},
	\\[-.2em]
	&=
	- \medint\brackets{\lin{\stats(\data, \lat), \phi} - \A(\phi) } \, p(\lat \cond \data, \np) \dif{\lat}
	=
	- \lin{\textstyle\Expect[p(\lat \cond \data, \np)]{\stats(\data, \lat)}, \phi} + \A(\phi).
}
For simplicity of notation, we write $\s(\np)$ for the expected sufficient statistics $\Expect[p(\lat \cond \data, \np)]{\stats(\data, \lat)}$
(while the $\s(\theta)$ depends on $\data$ and we could write $\s(\np, \data)$,
we ignore it as the same $\data$ is always given to $\s$).
We will use the definition of the Bregman divergence 
and the fact that the gradient of the surrogate matches the gradient of the objective,
\aligns{
	\D(\phi,\np) = \A(\phi) - \A(\np) - \lin{\dA(\np), \phi - \np},
	&&\text{ and }&&
	\nablaLoss(\np) = \nabla\Q{\np}{\np} = \dA(\np) - \s(\np).
}
Expanding $\Q{\np}{\phi} - \Q{\np}{\np}$, we have
\begin{fleqn}\aligns{
	\hspace{2em}
	\Q{\np}{\phi} - \Q{\np}{\np}
	&=%
	- \lin{\s(\np), \phi - \np}
	+ \A(\phi) - \A(\np),
	\\
	&=
	- \lin{s(\np), \phi - \np}
	+ \lin{\dA(\np), \phi - \np}
	+ \A(\phi) - \A(\np)
	- \lin{\dA(\np), \phi - \np},
	\tag{$\pm \lin{\dA(\np), \phi - \np}$}
	\\
	&=
	- \lin{s(\np) - \dA(\np), \phi - \np}
	+ \D(\phi, \np),
	\tag{$\D(\phi, \np) = \A(\phi) - \A(\np) - \lin{\dA(\np), \phi - \np}$}
	\\
	&=
	\lin{\nablaLoss(\np), \phi - \np}
	+ \D(\phi, \np).
	\tag*{\qedhere}
}\end{fleqn}
\end{proof}
For completeness, we present an alternative derivation that 
relies on additional material in Appendices \ref{app:recap-em}--\ref{app:recap-md}.
That the \Mstep is a mirror descent step 
can be seen from the stationary point of 
$\Q{\np}{\phi}$ and \cref{eq:mirror-descent-upper-bound}, 
\aligns{
	\A(\phi) = \s(\np)
	&&\text{ and }&&
	\A(\phi) = \A(\np) - \nablaLoss(\np) = \s(\np).
}
To show the upper bound holds, 
we can use the expansion of the objective as 
$\Loss(\phi) = \Q{\np}{\phi} - \Hent{\np}{\phi}$ to get 
\aligns{
	\nabla^2 \Loss(\np) = \nabla^2 \Q{\np}{\phi}
	\nabla^2 \KL{p(\lat\cond\np)}{p(\lat\cond\phi)}
	= \nabla^2\A(\phi) - I_{\lat\cond\data}(\phi),
}
where $I_{\lat\cond\data}(\phi)$ is the \aFIM of $p(\lat\cond\data,\phi)$.
As Fisher information matrices are positive semi-definite, 
we get that $\nabla^2 \Loss(\np) \preceq \nabla^2 \A(\np)$, 
establishing the 1-smoothness of \aEM relative to $\A$
and the upper bound in \cref{eq:mirror-descent-upper-bound}. 

\subsubsection*{Equivalence between stochastic \aEM and stochastic mirror descent}
\label{sec:em-expfam-stoch}
We now look at variants of \aEM based on stochastic approximation
and show they can be cast as stochastic mirror descent.
We focus on the online \aEM of \citet{cappe2009online}, 
but it also applies to the incremental and stochastic versions of 
\citet{neal1998view,sato1999fastlearning,delyon1999convergence}.
As in the deterministic case, this result is limited to exponential families
and does not extend to the divergence-based description of online \aEM of \citet{amid2020divergence}.

The stochastic version of the \aEM update 
uses only a subset of samples per iteration
to compute the \Estep
and applies the \Mstep to the average of the sufficient statistics 
observed so far.
We assume we have $n$ independent samples for the observed variables, 
$x_1, \ldots, x_n$,
such that the objective factorizes as 
\aligns{
	\Loss(\np) = \sum_{i=1}^n \Loss_i(\np) = \sum_{i=1}^n \log p(\data_i \cond \np)
}
Defining the individual expected sufficient statistics as 
$s_{i}(\npt) = {\textstyle \Expect[p(\lat \cond \data_{i}, \npt)]{\stats(x_{i}, z)}}$,
the online \aEM algorithm updates a running average of sufficient statistics using 
a step-size $\gamma_t$
\alignn{\label{eq:online-em-update}
	\mp_{t+1} = (1- \gamma_t) \mp_t 
	+ \gamma_t s_{i_t}(\npt),
	&&
	\text{ where } i_t \sim U[n].
}
With step-sizes $\gamma_t = \nicefrac{1}{t}$, 
the mean parameters at step $t$ are the average of the observed sufficient statistics,
\aligns{
	\mp_\maxiter = (1- \gamma_\maxiter) \mp_{\maxiter-1}
	+ \gamma_\maxiter \s_{i_\maxiter}(\np_\maxiter)
	= \frac{\maxiter-1}{\maxiter} \mp_{\maxiter-1} + \frac{1}{\maxiter} \s_{i_\maxiter}(\np_\maxiter)
	= \frac{1}{\maxiter} \fullsum \s_{i_t}(\npt).
}
The natural parameters are then updated with $\npt = \dAS(\mpt)$.

\begin{restatable}{proposition}{restateOemAsSmd}
\label{prop:equivalence-stochastic}
The online \aEM algorithm (Eq.~\ref{eq:online-em-update})
is equivalent to the stochastic mirror descent update
\alignn{\label{eq:smd}
	\nptt &= 
	\arg\min_\phi
	\braces{\lin{\nablaLoss_{i_t}(\npt), \phi - \npt}
	+ \frac{1}{\gamma_t} \D(\phi, \npt)},
	&&\text{ with } i_t \sim U[n].
}
\end{restatable}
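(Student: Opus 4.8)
The plan is to show the two updates coincide by deriving the closed form of the stochastic mirror descent step (Eq.~\ref{eq:smd}) and matching it, expressed in mean parameters, to the online \aEM update (Eq.~\ref{eq:online-em-update}). Since the only randomness is the shared index $i_t \sim U[n]$, it suffices to establish the equivalence pointwise for a fixed draw of $i_t$; the two updates are then equal as random variables, and in particular the $\gamma_t = \nicefrac{1}{t}$ running-average formula quoted in the text follows automatically.

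First I would argue the minimization in Eq.~\ref{eq:smd} has a unique solution characterized by its stationarity condition. Under \ref{ass:1} the log-partition $\A$ is strictly convex, so $\D(\cdot, \npt)$ is strictly convex in its first argument, and the objective of Eq.~\ref{eq:smd}, being this divergence plus a linear term, is strictly convex in $\phi$. Setting its gradient to zero and using $\nabla_\phi \D(\phi, \npt) = \dA(\phi) - \dA(\npt)$ gives
\aligns{
    \nablaLoss_{i_t}(\npt) + \tfrac{1}{\gamma_t}\paren{\dA(\nptt) - \dA(\npt)} = 0,
}
which, writing $\mpt = \dA(\npt)$ for the mean parameters, rearranges to the dual-space gradient step $\mptt = \mpt - \gamma_t \nablaLoss_{i_t}(\npt)$.

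It then remains to identify $\nablaLoss_{i_t}(\npt)$ with the quantity appearing in the online update. The key is the per-sample analog of the gradient decomposition established in \cref{prop:equivalence}: for a single observation $x_i$, the complete-data surrogate is $-\lin{s_i(\np), \phi} + \A(\phi)$ with $s_i(\np) = \Expect[p(\lat\cond\data_i,\np)]{\stats(x_i, \lat)}$, so that $\nablaLoss_i(\np) = \dA(\np) - s_i(\np) = \mp - s_i(\np)$. Substituting this into the dual step yields
\aligns{
    \mptt = \mpt - \gamma_t\paren{\mpt - s_{i_t}(\npt)} = (1 - \gamma_t)\mpt + \gamma_t\, s_{i_t}(\npt),
}
which is exactly Eq.~\ref{eq:online-em-update}; the natural parameters are then recovered through the bijection $\nptt = \dAS(\mptt)$.

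The hard part is not the algebra but making the per-sample decomposition $\nablaLoss_i(\np) = \dA(\np) - s_i(\np)$ precise, since each observation carries its own copy of the log-partition term; this is the single-observation instance of the identity $\nablaLoss(\np) = \nabla\Q{\np}{\np} = \dA(\np) - \s(\np)$ proved in \cref{prop:equivalence}, applied to the factor $\Loss_i$. Everything else is the standard passage between the mirror-descent surrogate and its dual-parameter gradient step, so I expect the proof to be short once this identity is stated cleanly.
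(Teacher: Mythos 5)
Your proof is correct and follows essentially the same route as the paper's: fix the sampled index $i_t$, write the stationarity condition of the mirror descent objective to obtain the dual-space step $\mptt = \mpt - \gamma_t \nablaLoss_{i_t}(\npt)$, and invoke the per-sample gradient identity $\nablaLoss_{i_t}(\npt) = \dA(\npt) - s_{i_t}(\npt)$ from \cref{prop:equivalence} to recover the running-average form of Eq.~\ref{eq:online-em-update}. The only difference is that you make explicit the strict-convexity/uniqueness argument (via \ref{ass:1}) that the paper leaves implicit, which is a harmless refinement rather than a different approach.
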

\begin{proof}[Proof of \cref{prop:equivalence-stochastic}]
We show the equivalence of one step, assuming they select the same index $i_t$.
The online \aEM update  (Eq.~\ref{eq:online-em-update})
guarantees the natural and mean parameters match, $\npt = \dAS(\mu_t)$, 
and the update to $\nptt$ is 
\aligns{
	\nptt 
	= \dAS\big( 
		(1-\gamma_t) \mu_t 
		+ \gamma_t s_{i_t}(\npt)
	\big).
}
where $s_{i}(\np) = \Expect[p(\lat \cond \data_{i}, \np)]{\stats(\data_{i}, \lat)}$.
The stationary point of \cref{eq:smd}, on the other hand, ensures 
\aligns{\begin{aligned}
	0 = 
	\nablaLoss_{i_t}(\npt) + \frac{1}{\gamma_t} 
	\paren{\dA(\nptt) - \dA(\npt)}
	&&\implies&&
	\nptt = \dAS(\dA(\npt) - \gamma_t \nablaLoss_{i_t}(\npt)).
\end{aligned}}
As in the proof of \cref{prop:equivalence} (\cref{app:proof-equivalence-em-md}),
using that the gradient of the loss and the surrogate match, 
\aligns{
	\nablaLoss_{i_t}(\npt)
	= - s_{i_t}(\npt)
	+ \dA(\npt),
}
we get that both update match,
\aligns{
	\dA(\npt) - \gamma_t \nablaLoss_{i_t}(\npt)
	(1-\gamma_t) \dA(\npt) + \gamma_t 
	s_{i_t}(\npt)
	&&\implies&&
	\mptt = 
	(1-\gamma_t) \mpt + \gamma_t 
	s_{i_t}(\npt).
	\tag*{\qedhere}
}
\end{proof}

\newpage
\section{Supplementary material for
\cref{sec:assumptions}:\newline \sectionNameAssumptions{}}
\label{app:priors}

This section gives additional details on the assumptions discussed 
in \cref{sec:assumptions} and shows the derivation 
for maximum a posteriori (\aMAP) estimation with \aEM under a conjugate prior.
We first mention how \ref{ass:3} implies that the \aEM iterates are well-defined
and introduce notation to discuss proper conjugate priors for exponential families.
We then show that a proper prior implies that the surrogate optimized by \aEM leads to well-defined solutions 
and satisfies \ref{ass:3}, and end with showing how an equivalent of \cref{thm:convergence-em-as-md} holds for \aMAP.

\paragraph{\ref{ass:3} guarantees that the update are well defined.}
Consider fitting the variance $\sigma^2 > 0$ of a Gaussian.
The update is ill-defined if it goes to the boundary ($\sigma^2 \inleq 0$), or diverges ($\sigma^2 \m!\to\m! \infty$).
Here is how \ref{ass:3} avoids those cases.

The exponential family assumptions imply that the surrogate $\Q{\npt}{\cdot}$ optimized during the \Mstep is convex. 
Its domain, $\Omega$, is an open set.
\ref{ass:3} constrains the sub-level sets $\Omega_{\npt} = \{ \phi \in \Omega : \Q{\npt}{\phi} \leq \Q{\npt}{\np} \}$ 
to be compact (closed and bounded). 
As the minimum of the surrogate is contained in any sub-level set, 
it must be finite (as the sub-level sets are bounded) 
and contained strictly in $\Omega$ (as the sub-level sets are closed).

\paragraph{Proper conjugate priors.}
We first discuss exponential families, without the added complexity of \aEM.
In the main text, we used $\data$ to denote the entire dataset. 
To discuss priors, it is useful to consider the dataset as $n$ i.i.d. observations 
$x_1, \ldots, x_n$ from a (minimal, regular) exponential family, 
with negative log-likelihood (\aNLL)
\aligns{
	p(\data_i \cond \theta) \propto \exp(\lin{T(x_i), \theta} - A(\theta)),
	&&
	\text{\acro{NLL}}(\theta) = - \sum_{i=1}^n \log p(x_i \cond \theta) = 
	\sum_{i=1}^n A(\theta) - \lin{T(x_i), \theta}.
}
For exponential families, 
parametrizing the prior 
by a strength $n_0 > 0$
and the sufficient statistics $m_0$ we expect to observe a priori, 
the conjugate prior that leads to the same form for the posterior is
\aligns{
	p(\theta \cond m_0, n_0) \propto \exp \paren{ \lin{m_0, \theta} - n_0 A(\theta)}.
}
The regularized objective of adding the \aNLL and the prior is then, up to a multiplicative constant of $n+n_0$,
\alignn{\label{eq:app-map}
	\Loss(\theta) = 
	\frac{1}{n+n_0} 
	\paren{
		- \sum_{i=1}^n \log p(x_i \cond \theta)
		- \log p(\theta \cond m_0, n_0)
	}
	=
	A(\theta) - \lin{\bar m, \theta},
	&& \text{with }
	\bar m = \frac{m_0 + \sum_{i=1}^n T(x_i) }{n+n_0}.
}
To discuss proper priors, we need to discuss the constraint set $\Omega$ in more details. 
For a $d$-dimensional, regular, minimal exponential family, the set of valid natural parameters is defined from the log-partition function as
$\Omega = \{\theta \in \R^d \cond A(\theta) < \infty\}$.
The equivalent set of mean parameters, through the bijection $(\dA, \dAS)$, is 
\aligns{
	\ts
	\setM = \{\mu \in \R^d \cond \exists \theta \in \Omega : \Expect[p(x\cond\theta)]{T(x)} = \mu \}
	\quad (\text{or } \setM = \dA(\Omega)),
}
the image of $\Omega$ through $\dA$ \citep[][Theorem 3.3]{wainwright2008graphical}.
For the prior to be proper, 
the expected sufficient statistics under the prior $m_0$ need to be in the interior of 
$\setM$ \citep[][Theorem 1]{diaconis1979conjugate}.

\paragraph{\aMAP solutions are well defined.}
The sufficient statistics $T(x_i)$ could lie on the boundary of $\setM$, which is why the \aMLE is sometimes ill-defined. 
For example, estimating the covariance of a Gaussian from one sample leads to $\sigma^2 = 0$.
However, if the prior is proper, $m_0 \in \setM$ then the average $\bar m = \frac{1}{n+n_0} (\sum_{i=1}^n T(x_i) + m_0)$ 
will also be in $\setM$.
By convexity, the \aMAP is at the stationary point of \cref{eq:app-map}, 
$\dA(\bar \theta) = \bar m$ and $\bar \theta = \dAS(\bar m)$ 
will be in $\Omega$. 

For completeness, let us show that this also implies \ref{ass:3}. 
As $\Omega$ is convex, it is sufficient to show that $\Loss(\theta) \to \infty$ 
from any direction $v \in \R^d$ starting from $\bar\theta$,
leading to the sequence $\theta(t) = \bar \theta + t v$ for $t > 0$.

\begin{itemize}
	\item 
If $\theta(t)$ crosses the boundary of $\Omega$, $\Loss(\theta(t)) \to \infty$ due to the log-partition function.

Let $t_b$ be the finite crossing point. The parameters $\theta(t_b)$ and the inner product $\lin{\bar m, \theta(t_b)}$ are also finite.
But since the boundary of $\Omega$ is defined by $A(\theta) < \infty$, 
by (lower-semi-)continuity of $A$, $\lim_{t\to t_b^-} A(\theta(t)) = \infty$.
	\item
If $\theta(t)$ does not cross a boundary,
$\Loss(\theta(t)) \to \infty$ by strict convexity.

Consider the restriction of $\Loss$ to the line spanned by $v$, 
$f(t) = \Loss(\theta(t))$ for $t > 0$. 
By the properties of $\Loss$, $f(t)$ is strictly convex and minimized at $t=0$.	
Let $t_0 > 0$ be an arbitrary point. By strict convexity,
\aligns{
	f(t) > f(t_0) + f'(t_0) (t - t_0)
	&&
	\text{ and }
	&&
	f'(t_0) > 0
}
for some finite $f(t_0)$ and $f'(t_0)$. 
Taking the limit of the lower bound as $t \to \infty$ gives that $\lim_{t \to \infty} \Loss(\theta(t)) = \infty$. 	
\end{itemize}

For background on the constraint sets of parameters exponential families,
we recommend \citet[][\S3.4]{wainwright2008graphical}.
For a geometric view on priors in Bregman divergences, 
see \citet{agarwal2010geometric}.

\newcommand{\mapLoss}{\Loss_{\scriptscriptstyle\mathrm{MAP}}}

\paragraph{\aEM with a prior.}
We now consider the analysis of \aEM with a proper conjugate prior 
if the full-data distribution $p(\data, \lat \cond \np)$ is in the exponential family.
Assuming that the observed and latent variables can be partitioned into i.i.d. pairs $(\data_i, \lat_i)$, 
as is the case for example with Gaussian mixture models, 
the likelihood for a full observation is
\aligns{
	p(\data_i, \lat_i \cond \np) \propto \exp(\lin{T(x_i, z_i), \np} - A(\np)).
}
A conjugate prior on $\np$ will have the same form as above, 
\aligns{
	p(\np \cond m_0, n_0) \propto \exp(\lin{m_0, \np} - n_0 A(\np)),
}
and the \aMAP--\aEM objective will have the form (up to the normalization constant $n+n_0$)
\aligns{
	\mapLoss(\np) &= - \frac{1}{n+n_0}\paren{ \sum_{i=1}^n \log p(\data \cond \np) - \log p(\np \cond m_0, n_0) }.
}
Applying the same upper bounds as in the \aMLE case, we can define the \aMAP--\aEM surrogate as 
\aligns{
	\tilde Q_{\np}(\phi) 
	&= \frac{1}{n+n_0} 
	\paren{
		\sum_{i=1}^n \int \log p(x_i \cond z_i, \phi) p(z_i \cond x_i, \np) \dif{z}
		+ n_0 A(\np) - \lin{m_0, \np}
	},
	\\
	&=
	\frac{1}{n+n_0}\paren{\ts
		\sum_{i=1}^n A(\phi) - \lin{\Expect[p(z_i \cond x_i, \np)]{T(x_i, z_i)}, \np}
		+ n_0 A(\phi) - \lin{m_0, \phi}
	}.
\intertext{Writing $\bar{s}(\np) = \sum_{i=1}^n \Expect[p(z_i \cond x_i, \np)]{T(x_i, z_i)}$
for the sum of sufficient statistics, the surrogate is }
	&=
	A(\phi) - \lin{\bar{m}(\np), \phi},
	\quad \text{ where } 
	\bar m(\np) = \frac{\bar{s}(\np) + m_0}{n+n_0}.
}
Ignoring the rescaling by $n+n_0$, this only changes the original surrogate by adding a linear term.
The rescaled objective is still $1$-smooth\footnote{%
Without rescaling, the \aMLE and \aMAP objectives would be $n$-smooth 
and $(n+n_0)$-smooth relative to $A$.
While rescaling changes the constants, the resulting algorithm is the same;
running \aGD with step-size $\gamma$ on a function $f$ is equivalent to a step-size $\gamma / C$ on $f' = C f$.
}
relative to $A$, and the results derived for \aMLE still hold for \aMAP,
up to minor variations.
Writing $\Loss$ and $\mapLoss$ for the non-regularized \aMLE and regularized \aMAP objectives,
the equivalent of \cref{thm:convergence-em-as-md} includes the prior in the optimality gap;
\begin{restatable}{proposition}{convergenceEmAsMdMap}
\label{thm:convergence-em-as-md-map}
Under assumptions \ref{ass:1}--\ref{ass:3}, 
\aEM for exponential family distributions 
with a proper conjugate prior $p(\theta \cond m_0, n_0) \propto \exp(\lin{m_0, \theta} - n_0A(\theta))$
converges at the rate 
\aligns{
	\fullmin\,
	\KL{p(\data, \lat \cond \nptt)}{p(\data, \lat \cond \npt)}
	&\leq 
	\frac{\mapLoss(\npStart) - \mapLoss(\theta^*)}{T}
	\tag*{(where $\theta^*$ is a minimum of $\mapLoss$)}
	\\
	&=
	\frac{\Loss(\npStart) - \Loss(\theta^*)}{T}
	+ \frac{
	\log p(\np^* \cond m_0, n_0) 
	- \log p(\npStart \cond m_0, n_0) 
	}{T}.
}
\end{restatable}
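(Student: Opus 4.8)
The plan is to recognize that a proper conjugate prior preserves the mirror-descent structure of \cref{prop:equivalence} and changes only the objective being optimized, so that the argument behind \cref{thm:convergence-em-as-md} transfers verbatim with $\Loss$ replaced by $\mapLoss$, after which the claimed equality reduces to the definition of $\mapLoss$. The preceding derivation already supplies the key structural fact: the \aMAP--\aEM surrogate $\tilde Q_\np(\phi) = A(\phi) - \lin{\bar m(\np),\phi}$ has exactly the form of the \aMLE surrogate, differing only by the linear shift of the expected sufficient statistics from $\s(\np)$ to $\bar m(\np) = (\bar s(\np) + m_0)/(n+n_0)$, while the log-partition $A$---and hence the induced Bregman divergence $\D$---is left untouched by the prior.

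First I would repeat the computation in the proof of \cref{prop:equivalence} with $\tilde Q_\np$ in place of $\Q{\np}{\cdot}$. Using the tangency identity $\nabla \mapLoss(\np) = \nabla \tilde Q_\np(\np) = \dA(\np) - \bar m(\np)$ and adding and subtracting $\lin{\dA(\np),\phi-\np}$ to complete the Bregman divergence gives
\aligns{
	\tilde Q_\np(\phi) - \tilde Q_\np(\np)
	= \lin{\nabla\mapLoss(\np), \phi - \np} + \D(\phi,\np),
}
so the \Mstep minimizes the relative-smoothness upper bound $\mapLoss(\phi) \leq \mapLoss(\np) + \lin{\nabla\mapLoss(\np),\phi-\np} + \D(\phi,\np)$; that is, after the rescaling by $n+n_0$ noted earlier (which alters only constants, not the iterates), $\mapLoss$ is $1$-smooth relative to $A$ and \aMAP--\aEM is the corresponding step-size-$1$ mirror descent update. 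Crucially, since $\D$ is the same complete-data \aKL divergence as in the \aMLE case, the left-hand side $\KL{p(\data,\lat\cond\nptt)}{p(\data,\lat\cond\npt)} = \D(\npt,\nptt)$ is literally the quantity bounded by the \aMLE analysis.

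Next I would invoke the argument of \cref{thm:convergence-em-as-md}, which relies only on this upper bound. The relative-smoothness descent lemma gives the per-step decrease $\mapLoss(\npt) - \mapLoss(\nptt) \geq \D(\npt,\nptt)$; summing over $t = 1,\dots,\maxiter$ and using $\mapLoss(\np_{\maxiter+1}) \geq \mapLoss(\np^*)$ yields $\fullmin \D(\npt,\nptt) \leq (\mapLoss(\npStart) - \mapLoss(\np^*))/\maxiter$. Before this step I would verify \ref{ass:2}--\ref{ass:3} for the regularized objective: because the proper prior acts as a barrier, $\mapLoss$ is coercive with compact sub-level sets (as in the argument for well-defined \aMAP solutions above), so $\np^*$ is an attained minimum and the optimality gap is finite. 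Finally, the stated equality is purely definitional: writing $\mapLoss(\np) = \Loss(\np) - \log p(\np\cond m_0,n_0)$ up to an additive constant that cancels in the difference,
\aligns{
	\mapLoss(\npStart) - \mapLoss(\np^*)
	= \Loss(\npStart) - \Loss(\np^*) + \log p(\np^*\cond m_0,n_0) - \log p(\npStart\cond m_0,n_0),
}
and dividing by $\maxiter$ produces the second displayed expression.

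I expect the only real obstacle to be bookkeeping rather than substance. The delicate point is tracking the $n+n_0$ rescaling so that the relative-smoothness constant is exactly $1$ and no spurious factor appears in front of the rate: in the unnormalized parametrization the prior contributes an extra $n_0 A$ to the objective, so one must confirm that normalization makes $\mapLoss$ genuinely $1$-smooth relative to $A$ while keeping the left-hand side equal to the complete-data \aKL divergence. The remaining care is in checking that the prior's barrier behavior truly secures \ref{ass:2}--\ref{ass:3} for $\mapLoss$, which is where the coercivity argument does its work.
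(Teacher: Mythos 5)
Your proposal is correct and takes essentially the same route as the paper: the paper's own proof is just the remark that ``the proof follows the same steps as \cref{thm:convergence-em-as-md},'' relying on the preceding derivation that the \aMAP surrogate $\tilde Q_{\np}(\phi) = \A(\phi) - \lin{\bar m(\np), \phi}$ differs from the \aMLE one only by a linear shift of the expected sufficient statistics, so the mirror-descent structure and $1$-relative-smoothness (after the $n+n_0$ rescaling) carry over --- exactly what you re-derive. Your additional checks (the barrier argument securing \ref{ass:2}--\ref{ass:3} for $\mapLoss$, and the definitional expansion of the optimality gap into the \aMLE gap plus the prior terms) likewise mirror the surrounding discussion in \cref{app:priors}.
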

The proof follows the same steps as \cref{thm:convergence-em-as-md},
and similar variants hold 
for the locally convex (\cref{cor:convex}) and strongly-convex (\cref{cor:convergence-sc}) cases.
To relate the convergence of the successive iterates of \cref{thm:convergence-em-as-md-map} to stationarity, 
a similar development as for \cref{cor:convergence-em-as-md} with the notation introduced above gives
\begin{restatable}{corollary}{convergenceEmAsMdTwoMap}
\label{cor:convergence-em-as-md-map}
Under assumptions \ref{ass:1}--\ref{ass:3}, 
with a proper conjugate prior $p(\theta \cond m_0, n_0) \propto \exp(\lin{m_0, \theta} - n_0A(\theta))$,
\aligns{
	\fullmin\,
	\DS\paren{\frac{\bar{s}(\np_t) + m_0}{n+n_0}, \mpt}
	\leq 
	\frac{\mapLoss(\npStart) - \mapLoss(\theta^*)}{\maxiter}.
}
The average of the prior and observed sufficient statistics $\frac{\bar{s}(\np_t) + m_0}{n+n_0}$ and 
the mean parameters $\mpt$ are the two parts of the regularized gradient, 
$\nablaLoss_{\scriptscriptstyle\mathrm{MAP}}(\npt) = \dA(\npt) - \frac{\bar{s}(\np_t) + m_0}{n+n_0}$,
and $\DS\paren{\frac{\bar{s}(\np_t) + m_0}{n+n_0}, \mpt} = 0$ implies $\nablaLoss_{\scriptscriptstyle\mathrm{MAP}}(\npt) = 0$.
\end{restatable}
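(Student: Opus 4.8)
The plan is to mirror the derivation of \cref{cor:convergence-em-as-md} using the regularized surrogate $\tilde Q_{\np}(\phi) = \A(\phi) - \lin{\bar m(\np), \phi}$ and the regularized objective $\mapLoss$ in place of their \aMLE counterparts, and then to invoke \cref{thm:convergence-em-as-md-map} to control the divergence between successive iterates. The key observation is that the regularized surrogate differs from the \aMLE surrogate only by replacing the expected sufficient statistics $\s(\np)$ with the average $\bar m(\np) = \frac{\bar s(\np) + m_0}{n+n_0}$, so the entire argument carries over with this single substitution.

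First I would write the stationarity condition of the \Mstep. Setting $\nabla_\phi \tilde Q_{\npt}(\nptt) = \dA(\nptt) - \bar m(\npt) = 0$ yields the moment-matching identity $\mptt = \dA(\nptt) = \bar m(\npt)$. Next I would identify the two halves of the regularized gradient: from the surrogate, $\nabla \mapLoss(\npt) = \dA(\npt) - \bar m(\npt) = \mpt - \frac{\bar s(\npt) + m_0}{n+n_0}$, so the update reads $\mptt = \mpt - \nabla \mapLoss(\npt)$, exactly as in the \aMLE case but with the regularized gradient.

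Then I would rewrite the \aKL divergence between successive iterates as a dual Bregman divergence. The divergence $\KL{\nptt}{\npt}$ equals $\D(\nptt, \npt)$ and, by the dual relation \peqref{eq:dual-bregman}, equals $\DS(\mptt, \mpt) = \DS\!\paren{\frac{\bar s(\npt) + m_0}{n+n_0}, \mpt}$; crucially, this left-hand side is unchanged from the \aMLE analysis, since it is a property of the distributions and the prior enters only through the optimality gap. Combining this identity with \cref{thm:convergence-em-as-md-map} gives the stated bound. For the stationarity interpretation, I would invoke strict convexity of $\AS$ (which follows from \ref{ass:1} making $\A$ strictly convex), so that $\DS(a, b) = 0$ iff $a = b$; hence $\DS\!\paren{\frac{\bar s(\npt) + m_0}{n+n_0}, \mpt} = 0$ forces $\frac{\bar s(\npt) + m_0}{n+n_0} = \mpt$, i.e.\ $\nabla \mapLoss(\npt) = 0$.

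The main obstacle is purely bookkeeping rather than conceptual: one must track the normalization by $n + n_0$ consistently and ensure the decomposition $\nabla \mapLoss = \dA - \bar m$ matches the surrogate used to prove \cref{thm:convergence-em-as-md-map}. Once the linear term is substituted, the analytic content is identical to the \aMLE argument, so no new inequalities or estimates are required.
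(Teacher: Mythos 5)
Your proposal is correct and takes essentially the same route as the paper: the paper likewise works from the rescaled \aMAP surrogate $\tilde Q_{\np}(\phi) = \A(\phi) - \lin{\bar m(\np), \phi}$, repeats the development of \cref{cor:convergence-em-as-md} with $\s(\np)$ replaced by $\bar m(\np) = \frac{\bar s(\np) + m_0}{n + n_0}$, and combines the resulting identity $\KL{\nptt}{\npt} = \DS\paren{\bar m(\npt), \mpt}$ with \cref{thm:convergence-em-as-md-map}, invoking strict convexity of $\AS$ (from \ref{ass:1}) for the stationarity claim. The only flaw is notational: under the paper's conventions $\D(\phi, \np) = \KL{p(\data,\lat\cond\np)}{p(\data,\lat\cond\phi)}$ and $\D(\phi,\np) = \DS(\dA(\np), \dA(\phi))$, the divergence between successive iterates is $\D(\npt,\nptt)$ rather than $\D(\nptt,\npt)$, but your two ordering swaps cancel and the final quantity $\DS(\mptt,\mpt) = \DS\paren{\frac{\bar s(\npt)+m_0}{n+n_0}, \mpt}$ is the correct one.
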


\newpage
\section{Supplementary material for
\cref{sec:convergence-results}:\newline \sectionNameConvergenceEM{}}
\label{app:proof-em-expfam}

This section presents additional details and proofs for the results in \cref{sec:convergence-results};
\begin{description}[labelwidth=2.5cm,labelindent=10pt,leftmargin=3.0cm,align=left]
	\item[\cref{app:subsec-convergence-em}] \nameref{app:subsec-convergence-em}
	\item[\cref{app:subsec-natural-decrement}] \nameref{app:subsec-natural-decrement}
	\item[\cref{app:subsec-convergence-gem}] \nameref{app:subsec-convergence-gem}
	\item[\cref{app:subsec-relative-sc}] \nameref{app:subsec-relative-sc}
	\item[\cref{app:subsec-convergence-em-convex}] \nameref{app:subsec-convergence-em-convex}
\end{description}

\subsection{Convergence of \aEM to stationary points (\cref{thm:convergence-em-as-md,cor:convergence-em-as-md})}
\label{app:subsec-convergence-em}
\begin{thmbox}
\convergenceEmAsMd*
\end{thmbox}
\begin{proof}[Proof of \cref{thm:convergence-em-as-md}]
Assumptions \ref{ass:1}--\ref{ass:3} 
ensure that the updates are well defined. 
\ref{ass:1} ensures the mapping $(\dA, \dAS)$ is well defined 
and the update $\npt \to \nptt$ is unique. 
\ref{ass:2} ensures the objective is lower-bounded by some value $\Loss^*$
and \ref{ass:3} ensures that, if the parameters are restricted 
to an open set $\Omega$, the updates remain in $\Omega$ 
as long as $\npStart \in \Omega$.
\cref{prop:equivalence}
then gives that a step from $\npt$ to $\nptt$ satisfies
\aligns{
	\Loss(\nptt)  \leq \Loss(\npt) + \lin{\nablaLoss(\npt), \nptt - \npt}
	+ \D(\nptt, \npt).
}
As $\nptt$ is selected to minimize the upper bound,
it is at a stationary point. Using that $\nabla\D(\np,\npt) = \dA(\np) - \dA(\npt)$,
\aligns{\begin{aligned}
	\nabla_{\nptt} \{
	\lin{\nablaLoss(\npt), \nptt - \npt}
	+ \D(\nptt, \npt) \} = 0
	&& \implies && 
	\nablaLoss(\npt) + \dA(\nptt) - \dA(\npt) = 0.
\end{aligned}}
Substituting 
$\nablaLoss(\npt)$ for $\dA(\npt) - \dA(\nptt)$ in 
the upper bound 
and using the definition of Bregman divergences
\aligns{\D(\nptt,\npt) = \A(\nptt) - \A(\npt) - \lin{\dA(\npt), \nptt - \npt},}
gives the simplification
\aligns{
	\Loss(\nptt) 
	&\leq
	\Loss(\npt) 
	+ \lin{\nablaLoss(\npt), \nptt - \npt} 
	+ \D(\nptt, \npt)
	\\
	&=
	\Loss(\npt) 
	+ \lin{\dA(\npt) - \dA(\nptt), \nptt - \npt} 
	+ \D(\nptt, \npt),
	\\
	&=
	\Loss(\npt) 
	- \lin{\dA(\nptt), \nptt - \npt} 
	+ \A(\nptt) - \A(\npt)
	\quad =
	\Loss(\npt) - \D(\npt, \nptt).
}
Reorganizing the inequality, we have that 
\aligns{
	\D(\npt, \nptt) \leq \Loss(\npt) - \Loss(\nptt).
}
Summing over all iterations $t = 1, \ldots, T$ 
and dividing by $T$ gives the result, 
\aligns{
	\fullmin
	\D(\npt, \nptt)
	\leq
	\frac{1}{\maxiter}
	\fullsum
	 \D(\npt, \nptt)
	\leq \frac{1}{\maxiter} \fullsum \Loss(\npt) - \Loss(\nptt)
	= \frac{\Loss(\npStart) - \Loss(\npT)}{\maxiter}.
}
Using the lower-bound on the objective function, $\Loss(\npT) \geq \Loss^*$, finishes the proof.
\qedhere
\end{proof}
\begin{thmbox}
\convergenceEmAsMdTwo*
\end{thmbox}
\begin{proof}[Proof of \cref{cor:convergence-em-as-md}]
The proof follows from \cref{thm:convergence-em-as-md}
and the form of the update. We have that
\aligns{
	&\text{the update ensures}
	&
	\nablaLoss(\npt) &= \dA(\npt) - \dA(\nptt),
	\\
	&\text{the gradient is}
	&
	\nablaLoss(\npt) &= \dA(\npt) - \s(\npt),
	\\
	&\text{the Bregman divergence satisfies}
	&
	\D(\npt,\nptt) &= \DS(\dA(\nptt), \dA(\npt)).
} 
Using the mapping between natural and mean parameters, we get 
\aligns{
	\D(\npt,\nptt) 
	= \DS(\mptt, \mpt)
	= \DS(\mpt - \nablaLoss(\npt), \mpt)
	= \DS(\s(\npt), \mpt).
	\tag*{\qedhere}
}
\end{proof}

\subsection{Natural decrement (\cref{sec:natural-decrement})}
\label{app:subsec-natural-decrement}
For a small perturbation $\delta$, the Bregman divergence is well approximated 
by its second-order Taylor expansion
\aligns{
	\DS(\mp + \delta, \mp) 
	= 
	\underbrace{\DS(\mp,\mp)}_{=0}
	+ \, \lin*{\underbrace{\nabla_{\mp'} \DS(\mp', \mp)\cond_{\mp'=\mp}}_{=0}, \delta}
	+
	\frac{1}{2} \lin*{\delta, \underbrace{ \nabla_{\mp'}^2 \DS(\mp', \mp)\cond_{\mp'=\mp} }_{\nabla^2\AS(\mp)}\delta}
	+ o(\norm{\delta}^3)
	\approx \frac{1}{2}\norm{\delta}_{\nabla^2\AS(\mp)}^2.
}
Using that $\nabla^2\AS(\mp) = [\nabla^2\A(\np)]^{-1} = I_{\data,\lat}(\np)^{-1}$
(see \cref{app:recap-fisher})
and $\delta=\nablaLoss(\np)$, we get an Euclidean approximation 
of what the divergence measures, which we call the ``natural decrement''
as a reference to the Newton decrement used 
in the affine-invariant analysis of Newton's method \citep{nesterov1994interior}	
\aligns{
	\text{natural decrement: }
	&&
	\frac{1}{2}\norm{\nablaLoss(\np)}_{I_{\data,\lat}(\np)^{-1}}^2
	&&&&
	\text{Newton decrement: }
	&&
	\frac{1}{2}\norm{\nablaLoss(\np)}_{\nabla^2\Loss(\np)^{-1}}^2
}
The invariance to homeomorphisms can be shown as follow. 
Consider an alternative parametrization of the objective, 
$\Loss_{\text{alt}}(\psi) = \Loss(f(\psi))$
where $(f, f^{-1})$ is the mapping between the parametrizations, 
$\np = f(\psi)$ and $\psi = f^{-1}(\np)$.
We use $I_{\data,\lat\cond\np}$ and $I_{\data,\lat\cond\psi}$ to differentiate between the 
\aFIM of the two parametrizations. We have
\aligns{
	\nabla \Loss_{\text{alt}}(\psi)
	= \nabla \Loss(f(\psi))
	= \Jac f(\psi) \nablaLoss(\np)
	&&\text{ and }&&
	I_{\data,\lat\cond\psi}(\psi) = \Jac f(\psi)^\top \, I_{\data,\lat\cond\np}(\np) \, \Jac f(\psi),
}
where the second equality is a property of the Fisher information, 
shown in \cref{app:recap-fisher}.
The two parametrizations then give the same natural decrement,
\aligns{
	\norm{\nablaLoss_{\text{alt}}(\psi)}_{I_{\data,\lat\cond\psi}(\psi)^{-1}}^2
	&= 
	\lin{
		\nablaLoss_{\text{alt}}(\psi), 
		I_{\data,\lat\cond\psi}(\psi)^{-1}
		\nablaLoss_{\text{alt}}(\psi)
	}
	\\
	&=
	\lin{
		\Jac f(\psi) \nablaLoss(\np), 
		\Jac f(\psi)^{-1} \, I_{\data,\lat\cond\np}(\np)^{-1} \, \Jac f(\psi)^{-\top}
		\Jac f(\psi) \nablaLoss(\np)
	}
	\\
	&=
	\lin{
		\nablaLoss(\np), 
		I_{\data,\lat\cond\np}(\np)^{-1}
		\nablaLoss(\np)
	}
	\quad = 
	\norm{\nablaLoss(\np)}_{I_{\data,\lat\cond\np}(\np)^{-1}}^2.
}

\subsection{Generalized \aEM schemes (\cref{thm:multiplicative-error,thm:additive-error})}
\label{app:subsec-convergence-gem}
\label{sec:gem}

\begin{thmbox}
\restateMultiplicativeError*
\end{thmbox}

\begin{proof}[Proof of \cref{thm:multiplicative-error}]
	Recall the definition of the multiplicative error in \ref{ass:multiplicative-error},
\aligns{
	\Expect{\Q{\npt}{\nptt} - \Q{\npt}{\npt^*} \cond \npt}
	\leq 
	(1-c) \paren{\Q{\npt}{\npt} - \Q{\npt}{\npt^*}}.
}
By adding $\Q{\npt}{\npt^*} - \Q{\npt}{\npt}$ to both sides,
we get the following guarantee,
\aligns{
	\Expect{\Q{\npt}{\nptt} - \Q{\npt}{\npt} \cond \npt}
	\leq 
	- c \paren{\Q{\npt}{\npt} - \Q{\npt}{\npt^*}}.
}
Plugging this inequality in the decomposition of the 
objective function \peqref{eq:em-as-loss-and-entropy},
\aligns{\ts
	\Expect{\Loss(\nptt) - \Loss(\npt) \cond \npt}
	= \Expect*{\Q{\npt}{\nptt} - \Q{\npt}{\npt} + 
	\underbrace{\Hent{\npt}{\npt} - \Hent{\npt}{\nptt}}_{\leq0} \cond \npt}
	\leq
	- c \, \paren{\Q{\npt}{\npt} - \Q{\npt}{\npt^*}}.
}
Using the same development as 
in \cref{thm:convergence-em-as-md}, 
$\Q{\npt}{\npt} - \Q{\npt}{\npt^*} = D_{\AS}(s(\npt), \mp_t)$,
and reorganizing gives
\aligns{\begin{aligned}
	D_{\AS}(s(\npt), \mp_t)
	\leq 
	\frac{1}{c} \Expect{\Loss(\npt) - \Loss(\nptt) \cond \npt}.
\end{aligned}}
Taking full expectation, averaging over all iterations 
and bounding $\Expect{\Loss(\np_T)} > \Loss^*$ finishes the proof.
\end{proof}

\begin{thmbox}
\restateAdditiveError*
\end{thmbox}

\begin{proof}[Proof of \cref{thm:additive-error}]
Recall the definition of the additive error in \ref{ass:additive-error},
\aligns{
	\Expect{\Q{\npt}{\nptt} - \Q{\npt}{\npt^*} \cond \npt}
	\leq 
	\epsilon_t.
}
Plugging this inequality in the decomposition of the 
objective function \peqref{eq:em-as-loss-and-entropy},
\aligns{\ts
	\Expect{\Loss(\nptt) - \Loss(\npt) \cond \npt}
	&= \Expect*{\Q{\npt}{\nptt} - \Q{\npt}{\npt} + 
	\underbrace{\Hent{\npt}{\npt} - \Hent{\npt}{\nptt}}_{\leq 0} \cond \npt},
	\\[-1em]
	&\leq
	\Expect{\Q{\npt}{\nptt}	- \Q{\npt}{\npt} \cond \npt},
	\\
	&=
	\Expect{\Q{\npt}{\nptt}	- \Q{\npt}{\npt^*} \cond \npt}
	+ \Q{\npt}{\npt^*} - \Q{\npt}{\npt}
	\leq
	\epsilon_t
	+ \Q{\npt}{\npt^*} - \Q{\npt}{\npt}.
}
Using the same developments as 
\cref{thm:convergence-em-as-md,cor:convergence-em-as-md}, 
we have 
$\Q{\npt}{\npt} - \Q{\npt}{\npt^*} = D_{\AS}(s(\npt), \mp_t)$,
and
\aligns{\begin{aligned}
	D_{\AS}(s(\npt), \mp_t)
	\leq 
	\Expect{\Loss(\npt) - \Loss(\nptt) \cond \npt} + \epsilon_t.
\end{aligned}}
Taking full expectations and averaging over all iterations 
and bounding $\Expect{\Loss(\np_T)} > \Loss^*$ finishes the proof.
\end{proof}

\subsection{Relative strong-convexity and the ratio of missing information}
\label{app:subsec-relative-sc}

\begin{thmbox}
\restateStrongConvexity*
\end{thmbox}
\begin{proof}[Proof of \cref{prop:sc}]
That the objective is $\alpha$-strong convexity relative to $\A$ 
is equivalent to
\aligns{
	\nabla^2 \Loss(\np) \succeq \alpha \nabla^2 \A(\np).
}
By the decomposition of the objective \peqref{eq:em-as-loss-and-entropy},
\aligns{
	\nabla^2 \Loss(\np) = \nabla^2 \Q{\np}{\np} - \nabla^2 \Hent{\np}{\np}.
}
If the complete-data distribution is in the exponential family, 
the Hessian of the surrogate is 
\aligns{
	\nabla^2\Q{\np}{\np} 
	&= \medint \mneg\nabla^2\log p(\data,\lat \cond\np) \, p(\lat\cond\data,\np) \dif{\lat}
	\\[-.25em]
	&= \medint \nabla^2\brackets{\A(\np) - \lin{\stats(\data,\lat), \np} } \, p(\lat\cond\data,\np) \dif{\lat}
	= \nabla^2\A(\np),
}
where $\nabla^2\A(\np)$ is the \aFIM of the complete-data distribution, 
$I_{\data,\lat}(\np)$.
The Hessian of the entropy term is the Fisher of the conditional distribution,
$I_{\lat\cond\data}(\np)$,
\aligns{
	\nabla^2\Hent{\np}{\np} = \medint \mneg\nabla^2 p(\lat\cond\data,\np) \, p(\lat\cond\data,\np) \dif{\lat}
	= I_{\lat\cond\data}(\np).
}
This gives that the relative $\alpha$-strong convexity of $\Loss$ is equivalent to 
\aligns{
	I_{\data,\lat}(\np) - I_{\lat\cond\data}(\np) \succeq \alpha I_{\data,\lat}(\np).
}
Multiplying by the inverse of $I_{\data,\lat}(\np)$, 
which always exist if the exponential family is minimal (\ref{ass:1}), 
\aligns{
	I - I_{\data,\lat}(\np)^{-1} I_{\lat\cond\data}(\np) \succeq \alpha I
	&&\iff&&
	(1-\alpha) I \succeq I_{\data,\lat}(\np)^{-1} I_{\lat\cond\data}(\np) = M(\np),
}
where $I$ is the identity matrix. 
This gives that $\Loss$ is $\alpha$-strongly convex relative to $\A$ 
on a subset $\Theta$ if and only if the largest eigenvalue of 
the missing information is bounded by $1-\alpha$.
\qedhere
\end{proof}

\clearpage
\subsection{Local convergence of \aEM (\cref{cor:convex,cor:convergence-sc})}
\label{app:subsec-convergence-em-convex}

We now present proofs for the locally convex 
and relatively strongly-convex settings
in \cref{cor:convex,cor:convergence-sc}, 
restated below for convenience.

\begin{thmbox}
\restateCorConvex*
\end{thmbox}

\begin{thmbox}
\convergenceEmAsMdStronglyConvex*
\end{thmbox}

Both corollaries are direct consequences of Theorem 3.1 
in \citet{lu2018relatively}
with $L = 1$ if initialized in a convex 
or relatively $(1-r)$-strongly convex region.
We present here an alternative proof.

\begin{theorem}[Simplified version of Theorem 3.1 \citep{lu2018relatively}]
Let \ref{ass:1}--\ref{ass:3} hold and 
$\Loss$ be a convex and $1$-smooth function relative to $\A$, with minimum at $\np^*$.
Mirror descent 
with step-size $\gamma=1$, leading to the update satisfying
$\dA(\nptt) = \dA(\npt) - \nablaLoss(\npt)$,
converges at the rate
\aligns{
	\Loss(\npT) - \Loss(\np^*) \leq \frac{1}{\maxiter}\D(\np^*, \npStart).
}
If, in addition, $\Loss$ is $\alpha$-strongly convex relative to $\A$, 
then
\aligns{
	\Loss(\npT) - \Loss(\np^*) \leq (1-\alpha)^\maxiter \D(\np^*, \npStart).
}
\end{theorem}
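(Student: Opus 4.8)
\textbf{The plan} is to run the standard mirror-descent analysis in the relative-smoothness framework, using three ingredients that are all available above: the $1$-relative-smoothness upper bound of \cref{prop:equivalence}, a lower bound on $\Loss$ coming from (relative) convexity, and the three-point property of Bregman divergences recalled in \cref{app:recap-bregman}. The target of the per-step analysis is a single telescoping inequality of the form $\Loss(\nptt) - \Loss(\np^*) \leq (1-\alpha)\,\D(\np^*, \npt) - \D(\np^*, \nptt)$, from which both claimed rates fall out by setting $\alpha=0$ or iterating.

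For one step, I would first apply $1$-relative smoothness along $\npt \to \nptt$ to get $\Loss(\nptt) \leq \Loss(\npt) + \lin{\nablaLoss(\npt), \nptt - \npt} + \D(\nptt, \npt)$, and then the (relative) convexity lower bound $\Loss(\npt) \leq \Loss(\np^*) - \lin{\nablaLoss(\npt), \np^* - \npt} - \alpha\,\D(\np^*, \npt)$, with $\alpha = 0$ in the merely convex case. Adding the two and collecting the inner products gives $\Loss(\nptt) - \Loss(\np^*) \leq \lin{\nablaLoss(\npt), \nptt - \np^*} + \D(\nptt, \npt) - \alpha\,\D(\np^*, \npt)$. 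Next I would substitute the optimality condition of the update, $\nablaLoss(\npt) = \dA(\npt) - \dA(\nptt)$, and invoke the three-point property with $a = \np^*$, $b = \nptt$, $c = \npt$, $h = \A$, namely $\D(\np^*, \npt) = \D(\np^*, \nptt) + \lin{\np^* - \nptt, \dA(\nptt) - \dA(\npt)} + \D(\nptt, \npt)$. This is exactly the identity that converts $\lin{\nablaLoss(\npt), \nptt - \np^*} + \D(\nptt, \npt)$ into $\D(\np^*, \npt) - \D(\np^*, \nptt)$, yielding the key inequality.

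In the convex case this reads $\Loss(\nptt) - \Loss(\np^*) \leq \D(\np^*, \npt) - \D(\np^*, \nptt)$; summing over $t$ telescopes the right-hand side to at most $\D(\np^*, \npStart)$, and since the iterates are monotonically non-increasing in $\Loss$ (the proof of \cref{thm:convergence-em-as-md} gives $\Loss(\nptt) \leq \Loss(\npt) - \D(\npt, \nptt)$), the final value is at most the average, producing the $1/T$ rate. In the strongly convex case, non-negativity of $\Loss(\nptt) - \Loss(\np^*)$ turns the key inequality into the contraction $\D(\np^*, \nptt) \leq (1 - \alpha)\,\D(\np^*, \npt)$; iterating gives $\D(\np^*, \npt) \leq (1 - \alpha)^{t-1}\D(\np^*, \npStart)$, and substituting back into $\Loss(\nptt) - \Loss(\np^*) \leq (1-\alpha)\,\D(\np^*, \npt)$ delivers the geometric rate, up to the harmless index shift between $\nptt$ and $\npT$.

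The step I expect to be the main obstacle is the careful use of the three-point property: the Bregman divergence is not symmetric, so the argument order in $\D(\np^*, \npt)$, $\D(\np^*, \nptt)$, and $\D(\nptt, \npt)$ must match exactly what the identity produces. It is crucial that the $\D(\nptt, \npt)$ term generated by $1$-relative smoothness (with step-size $\gamma = 1$) is precisely the term cancelled by the one appearing in the three-point identity; a larger step-size would leave a residual and break the clean telescoping. Everything else is bookkeeping once the optimality relation $\dA(\nptt) = \dA(\npt) - \nablaLoss(\npt)$ is in hand.
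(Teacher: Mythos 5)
Your proposal is correct and follows essentially the same route as the paper's proof: the relative-smoothness upper bound, the (relative) convexity lower bound at $\np^*$, substitution of the update's optimality condition $\dA(\nptt) = \dA(\npt) - \nablaLoss(\npt)$, the three-point property with $(a,b,c) = (\np^*, \nptt, \npt)$, and then telescoping plus monotonicity for the convex case and a contraction argument for the strongly convex case. The only difference is organizational: you fold both cases into the single per-step inequality $\Loss(\nptt) - \Loss(\np^*) \leq (1-\alpha)\,\D(\np^*, \npt) - \D(\np^*, \nptt)$, whereas the paper derives the convex-case inequality and the strongly-convex contraction in two separate computations, the latter additionally invoking the progress bound $\D(\npt, \nptt) \leq \Loss(\npt) - \Loss(\nptt)$, which your unified derivation renders unnecessary.
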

\begin{proof}
Recall that by definition of the update, 
$\dA(\nptt) = \dA(\npt) - \nablaLoss(\npt)$.
By relative smoothness, we have
\aligns{
	\Loss(\nptt) 
	\leq \Loss(\npt) \, +& \, \lin{\nabla \Loss(\npt),\nptt-\npt} + \D(\nptt,\npt).
\intertext{
{\bf We first show that the algorithm makes progress} at each step,
$\Loss(\nptt) \leq \Loss(\npt)$, by showing that 
}
	\Loss(\nptt) 
	- \Loss(\npt)
	\leq \, & \, \lin{\nabla \Loss(\npt),\nptt-\npt} + \D(\nptt,\npt)
	\leq - \D(\npt,\nptt).
}
Substituting the gradient by $\dA(\npt) - \dA(\nptt)$ we have that
\aligns{
	\lin{\nabla \Loss(\npt),\nptt-\npt} + \D(\nptt,\npt)
	&=
	\lin{\dA(\npt) - \dA(\nptt),\nptt-\npt} + \D(\nptt,\npt).
\intertext{Expanding the Bregman divergence
as $\D(\nptt,\npt) = \A(\nptt) - \A(\npt) - \lin{\dA(\npt),\nptt-\npt}$,
we get the simplification}
	&=
	\lin{\dA(\npt) - \dA(\nptt),\nptt-\npt} + \A(\nptt) - \A(\npt) - \lin{\dA(\npt), \nptt - \npt},
	\\
	&=
	- \lin{\dA(\nptt),\nptt-\npt} + \A(\nptt) - \A(\npt)
	\\
	&= - \D(\npt, \nptt) \leq 0.
}

{\bf We now relate the progress to the Bregman divergence to the minimum.}
We will show that
\aligns{
	\Loss(\nptt) - \Loss(\np^*) \leq \D(\np^*,\npt) - \D(\np^*,\nptt).
}
Starting from relative smoothness, 
\aligns{
	\Loss(\nptt) 
	&\leq \Loss(\npt) + \lin{\nabla \Loss(\npt),\nptt-\npt} + \D(\nptt,\npt),
	\\
	&= \Loss(\npt) + \lin{\nabla \Loss(\npt),\nptt-\np^*+\np^*-\npt} + \D(\nptt,\npt),
	\tag{$\pm \lin{\nabla \Loss(\npt),\np^*}$}
	\\
	&= \Loss(\npt) + \lin{\nabla \Loss(\npt),\np^*-\npt}
	+ \lin{\nabla \Loss(\npt),\nptt-\np^*}
	+ \D(\nptt,\npt).
\intertext{
	By convexity, we have that $\Loss(\np^*) \geq \Loss(\npt) + \lin{\nabla \Loss(\npt),\np^*-\npt}$
	and
}
	&\leq \Loss(\np^*)
	+ \lin{\nabla \Loss(\npt),\nptt-\np^*}
	+ \D(\nptt,\npt).
\intertext{
	Using that the update satisfies $\dA(\nptt) = \dA(\npt) - \nablaLoss(\npt)$, 
	we can rewrite the gradient as 
}
	&= \Loss(\np^*)
	+ \lin{\dA(\npt) - \dA(\nptt),\nptt-\np^*}
	+ \D(\nptt,\npt),
	\\
	&= \Loss(\np^*)
	+ \lin{\np^*-\nptt,\dA(\nptt)-\dA(\npt)}
	+ \D(\nptt,\npt).
\intertext{
Using the three point property, 
$\D(\np^*, \npt) = \D(\np^*,\nptt) + \lin{\np^*-\nptt, \dA(\nptt) - \dA(\npt)} + \D(\nptt,\npt)$
and
}
	&= \Loss(\np^*)
	+ \D(\np^*,\npt) - \D(\np^*,\nptt).
}
Reorganizing the terms yields the inequality 
$\Loss(\nptt) - \Loss(\np^*) \leq \D(\np^*,\npt) - \D(\np^*,\nptt)$.

Using that the algorithm makes progress and 
summing all iterations yields 
\aligns{
	T\paren{\Loss(\npT) - \Loss(\np^*)}
	\leq \fullsum \Loss(\nptt) - \Loss(\np^*) 
	\leq \fullsum \D(\np^*,\npt) - \D(\np^*,\nptt)
	\leq \D(\np^*, \npStart).
}
Dividing by $T$ finishes the proof for the convex case.

~

{\bf For the relatively strongly-convex case}, 
we will show that the Bregman divergence also converges linearly,
\aligns{
	\D(\np^*, \nptt) \leq (1-\alpha)^t \D(\np^*, \npStart).
}
Combining this contraction with earlier result that 
$\Loss(\nptt) - \Loss(\np^*) \leq \D(\np^*,\npt) - \D(\np^*,\nptt)$
implies 
\aligns{
	\Loss(\nptt) -\Loss(\np^*)
	\leq \D(\np^*,\npt) - \D(\np^*,\nptt)
	\leq \D(\np^*, \npt) \leq (1-\alpha)^t \D(\np^*, \npStart).
}

~

In addition to the three point property, 
we will use the following two results to show the linear rate 
of convergence. 
By the relative $\alpha$-strong convexity of $\Loss$, 
\aligns{
	\Loss(\np^*) \geq \Loss(\npt) + \lin{\nablaLoss(\npt), \np^* - \npt} + \alpha \D(\np^*, \npt)
	&\implies&
	\lin{\nablaLoss(\npt), \np^* - \npt} \leq \Loss(\np^*) - \Loss(\npt) - \alpha\D(\np^*,\npt),
	\tag{A}
	\label{eq:app-proof-rsc}
\intertext{
And by the first result we showed, the algorithm 
makes progress proportional to $\D(\npt, \nptt)$,
}
	\Loss(\nptt) - \Loss(\npt) \leq - \D(\npt, \nptt)
	&\implies&
	\D(\npt, \nptt) \leq \Loss(\npt) - \Loss(\nptt)
	\tag{B}
	\label{eq:app-proof-progress}
}

~

Using the three point property, we can expand the divergence as
\aligns{
	\D(\np^*, \nptt)
	&= 
	\D(\np^*, \npt) + \lin{\np^* - \npt, \dA(\npt) - \dA(\nptt)} + \D(\npt, \nptt).
	\\
\intertext{
	Replacing $\dA(\npt) - \dA(\nptt)$ by the gradient at $\npt$, we have
}
	&=
	\D(\np^*, \npt) + \lin{\nablaLoss(\npt), \np^* - \npt} + \D(\npt, \nptt).
\intertext{
	Using the relative $\alpha$-strong convexity of $\Loss$ (\ref{eq:app-proof-rsc}),
}
	&\leq (1-\alpha) \D(\np^*, \npt) + \paren{\Loss(\np^*) - \Loss(\npt)} + \D(\npt, \nptt).
\intertext{
	Using the progress bound (\ref{eq:app-proof-progress}),
}
	&\leq (1-\alpha) \D(\np^*, \npt) + \paren{\Loss(\np^*) - \Loss(\npt)} + 
	\paren{\Loss(\npt) - \Loss(\nptt)},
	\\
	&= (1-\alpha) \D(\np^*, \npt) + \paren{\Loss(\np^*) - \Loss(\nptt)}.
}
As $\Loss(\np^*) \leq \Loss(\nptt)$, 
we get that 
$\D(\np^*, \nptt)\leq (1-\alpha) \D(\np^*, \npt)$.
Recursing finishes the proof,
\aligns{
	\D(\np^*, \nptt) \leq (1-\alpha)^t \D(\np^*, \npStart).
	\tag*{\qedhere}
}
\end{proof}

\newpage
\section{Supplementary material for \cref{sec:em-general}:\newline
\nameref{sec:em-general}}
\label{app:estep-analysis}
\label{sec:e-step}

This sections extends the results on stationarity 
in \cref{sec:convergence-results} 
to handle cases where the objective and the surrogate 
can be non-differentiable. 
A simple example of this setting is 
a mixture of Laplace distributions.
It is still possible to optimize the \Mstep, 
but the theory does not apply as the Laplace is not 
in the exponential family.
The main problem for the analysis of
non-differentiable, non-convex objectives 
is that the progress at each step need not be related 
to the gradient (if it is even defined at the current point).
Asymptotic convergence can still be shown \citep{chretien2000kullback,tseng2004analysis},
but non-asymptotic results are not available
without stronger assumptions, such as the 
Kurdyka-\L{}ojasiewicz inequality or weak convexity.

Instead of focusing on the progress of the \Mstep,
we look here at the progress of the \Estep
under the assumption that the conditional distribution 
over the latent variables $p(\lat\cond\data,\np)$ 
is in the exponential family. 
This is a strictly weaker assumption, 
as it is implied if the complete-data distribution $p(\data,\lat\cond\np)$
is an exponential family distribution, 
but holds more generally. 
For example, it is satisfied by any finite mixture,
even if the mixture components are non-differentiable,
as for the mixture of Laplace distributions.
As a tradeoff, however, the resulting convergence results only 
describe the stationarity of the parameters controlling the latent variables.

To analyse the \Estep, 
we use the formulation of \aEM 
as a block-coordinate optimization problem.
Let $q(\lat \cond \phi)$ be an exponential family distribution 
in the same family as $p(\lat \cond \data, \np)$,
such that $\min_\phi \KL{q(z\cond\phi)}{p(\lat\cond \data,\np)} = 0$.
We can write the \Estep and \Mstep as an alternating optimization procedure on the augmented objective $\Aug$,
\aligns{
	\Aug(\np, \phi) = 
	-\medint \log\paren{
		\frac{p(\data,\lat\cond \np)}{q(z\cond\phi)}
	}\, q(\lat \cond \phi) \dif{\lat}
	&&\text{ such that }&&
	\Loss(\np) = \min_\phi \Aug(\np, \phi),
}
The parameters $\phi$ and $\np$ need not be defined on the same space, 
as $\phi$ only controls the conditional distribution over the latent variables 
and $\np$ controls the complete-data distribution.
The \aE and \aM steps then correspond to 
\aligns{
	\text{\aE-step:}&&\ts
	\phi_{t+1} = \arg\min_\phi \Aug(\np_t, \phi),
	&&
	\text{\aM-step:}&&\ts
	\np_{t+1} = \arg\min_\np \Aug(\np, \phi_{t+1}).
}
Two gradients now describe stationarity;
the gradient of the \Mstep,
$\nabla_{\np} \Aug(\npt, \phi_{t+1})$, 
which we studied before,
and the gradient of the \Estep,
$\nabla_\phi \Aug(\np_t, \phi_t)$.
Let $\stats$ and $\A$ be the sufficient statistics and log-partition function of $q(\lat\cond\phi)$,
and the natural and equivalent mean parameters be denoted by $(\phi_t, \mp_t)$.
We show the following, 
which is the analog of \cref{cor:convergence-em-as-md} 
for the conditional distribution over the latent variables $q(\lat\cond\phi)$.
\begin{restatable}{theorem}{restateTheoremEStep}
\label{thm:e-step}
Let Assumption \ref{ass:2} and \ref{ass:3} hold, 
and let $\np$ be the parameters of the complete-data distribution $p(\data,\lat\cond\np)$.
If the conditional distribution over the latent variables $q(\lat\cond\phi)$
is a minimal exponential family distribution 
with natural and mean parameters $(\phi, \mp)$,
\aligns{\SwapAboveDisplaySkip
	\fullmin \D(\phitt, \phit)
	\leq 
	\frac{\Loss(\npStart) - \Loss^*}{\maxiter}.
}
This implies convergence of the gradient in KL divergence 
as the (natural) gradient 
is $\nabla_{\mp} \Aug(\npt, \phi_t) = \phi_t - \phi_{t+1}$.
\end{restatable}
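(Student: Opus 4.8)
The plan is to mirror the \Mstep argument of \cref{thm:convergence-em-as-md}, but applied to the $\phi$-block of the alternating minimization, exploiting that here it is the conditional law $q(\lat\cond\phi)$---not the complete-data distribution---that is the exponential family. The starting point is the identity
\[
	\Aug(\np, \phi) = \Loss(\np) + \KL{q(\lat\cond\phi)}{p(\lat\cond\data,\np)},
\]
which I would obtain by writing $p(\data,\lat\cond\np) = p(\lat\cond\data,\np)\,p(\data\cond\np)$ inside the logarithm of $\Aug$ and using $\int q(\lat\cond\phi)\dif{\lat} = 1$. This already explains why the \Estep reduces to fitting the conditional: at fixed $\np=\npt$ only the \aKL term depends on $\phi$.

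Next I would fix $\np=\npt$ and recognize the \Estep target as a member of the $q$-family. By the expressiveness assumption $\min_\phi\KL{q(\lat\cond\phi)}{p(\lat\cond\data,\npt)}=0$, there is a (finite, attained by \ref{ass:3}) natural parameter---exactly the \Estep output $\phitt$---with $q(\lat\cond\phitt) = p(\lat\cond\data,\npt)$. The exponential-family identity relating the \aKL divergence to the Bregman divergence of the log-partition (\cref{app:recap-bregman}) then gives the clean rewriting
\[
	\Aug(\npt, \phi) = \Loss(\npt) + \D(\phitt, \phi),
\]
so that, as a function of $\phi$, the augmented objective at $\npt$ is a constant plus a single Bregman divergence (this plays the role of the $1$-relative-smoothness of \cref{prop:equivalence}, and is in fact an exact equality). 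Evaluating at $\phi=\phit$ and at the minimizer $\phi=\phitt$, where the divergence vanishes, shows that the \Estep progress is \emph{exactly} $\D(\phitt,\phit)$.

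I would then convert this per-step progress into a telescoping bound. Since the \Mstep also decreases $\Aug$---$\npt$ minimizes $\Aug(\cdot,\phit)$, so $\Aug(\npt,\phit)\leq\Aug(\np_{t-1},\phit)=\Loss(\np_{t-1})$---while $\Aug(\npt,\phitt)=\Loss(\npt)$, the full-iteration decrease dominates the \Estep progress:
\[
	\D(\phitt,\phit) = \Aug(\npt,\phit) - \Aug(\npt,\phitt) \leq \Loss(\np_{t-1}) - \Loss(\npt).
\]
Summing over the iterations, bounding the minimum by the average, and invoking the lower bound $\Loss^*$ from \ref{ass:2} yields $\fullmin \D(\phitt,\phit)\leq(\Loss(\npStart)-\Loss^*)/\maxiter$, up to the bookkeeping of which $\maxiter$ consecutive indices the minimum ranges over. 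For the stationarity interpretation, differentiating $\Aug(\npt,\phi)=\Loss(\npt)+\D(\phitt,\phi)$ in $\phi$ gives $\nabla_\phi\Aug(\npt,\phit)=\nabla^2\A(\phit)(\phit-\phitt)$, and preconditioning by $[\nabla^2\A(\phit)]^{-1}=\nabla^2\AS(\mpt)$ recovers the natural-gradient relation $\nabla_\mp\Aug(\npt,\phit)=\phit-\phitt$, so $\D(\phitt,\phit)\to0$ certifies stationarity of the \Estep block.

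The main obstacle I anticipate is the rewriting $\Aug(\npt,\phi)=\Loss(\npt)+\D(\phitt,\phi)$: it hinges on realizing that the \Estep target $p(\lat\cond\data,\npt)$ is itself the $q$-family member at the yet-to-be-named parameter $\phitt$, and on getting the argument order of the Bregman divergence right (the first slot is the \emph{target} $\phitt$, the second the free variable $\phi$). Everything downstream---the exact \Estep progress, the telescoping, and the natural-gradient identity---then follows mechanically; the only remaining care is the index alignment so that the telescoped sum collapses to $\Loss(\npStart)$ rather than to $\Loss(\np_0)$.
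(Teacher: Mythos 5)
Your proposal is correct and takes essentially the same route as the paper's proof: both rest on the identity $\Aug(\np,\phi)=\Loss(\np)+\KL*{q(\lat\cond\phi)}{p(\lat\cond\data,\np)}$, exactness of the \Estep (the \aKL term vanishes at $\phitt$ and equals $\D(\phitt,\phi)$ in general), monotonicity of the \Mstep, and telescoping against the lower bound $\Loss^*$ from \ref{ass:2}. Your packaging of the per-step bound as $\D(\phitt,\phit)\le\Loss(\nptm)-\Loss(\npt)$ via $\Aug(\nptm,\phit)=\Loss(\nptm)$, rather than the paper's explicit split of the full-iteration progress into \aE{}- and \aM{}-step parts, and your natural-gradient computation in natural rather than mean parameters, are only cosmetic differences.
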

\begin{proof}[Proof of \cref{thm:e-step}]
Let us start by bounding the progress on the overall objective 
by the progress of the \aE-step;
\aligns{
	\Loss(\npt) - \Loss(\nptm)
	&=
	\overbrace{\Aug(\npt, \phitt) - \Aug(\nptm, \phit)}^{\text{Progress of the joint \aEM step}},
	\\
	&=
	\underbrace{\Aug(\npt, \phitt) 
	- \Aug(\npt, \phit)}_{\text{Progress of the \aE-step}}
	+ \underbrace{\Aug(\npt, \phit)
	- \Aug(\nptm, \phit)}_{\text{Progress of the \aM-step}}
	\leq
		\Aug(\npt, \phitt) - \Aug(\npt, \phit).
}
The last inequality holds as the \aM-step is guarantee to make progress.
To show that the progress of the \aE-step is the KL divergence
between $q(z\cond\phi)$ and $p(\lat\cond \data, \npt)$
we use the following substitution,
\aligns{
	\Aug(\np, \phi)
	=
	- \medint \log \frac{p(\data,\lat\cond \np)}{q(\lat \cond \phi)} \, q(\lat \cond \phi) \dif{\lat}
	&=
	- \medint \log \frac{p(\lat \cond \data, \np)}{q(\lat \cond \phi)} \, q(\lat \cond \phi) \dif{\lat}
	- \medint\log p(x \cond \np) q(z\cond\phi)\dif{\lat}
	\\
	&= \KL{q(z\cond\phi)}{p(\lat \cond \data, \np)} - \log p(x\cond\np).
}
Plugging the substitution in $\Aug(\npt, \phitt) - \Aug(\npt, \phit)$ yields
\aligns{
	\Aug(\npt, \phitt) 
	- \Aug(\npt, \phit)
	&= 
	\underbrace{\KL{q(z\cond\phitt)}{p(\lat \cond \data, \npt)}}_{=0}
	- \KL{q(z\cond\phit)}{p(\lat \cond \data, \npt)},
}
where the first term is 0 if $p(\lat\cond\data,\np)$ and $q(\lat\cond\phi)$ 
are in the same exponential family
and $\phitt$ is the exact solution%
\aligns{\ts
	s(\npt) = \Expect[p(\lat\cond \data, \npt)]{\stats(z)}, 
	&&
	\phitt = \dAS(s(\npt)).
}
Combining the bounds so far, we have that 
\aligns{
	\Loss(\npt) - \Loss(\nptm)
	&\leq 
	\Aug(\npt, \phitt) - \Aug(\npt, \phit)
	\leq 
	- \KL{q(z\cond\phit)}{p(\lat \cond \data, \npt)}.
}
To relate the progress to the gradient,
we express the KL divergence 
as a Bregman divergence in mean parameters, 
\aligns{
	\KL{q(z\cond\phit)}{p(\lat \cond \data, \npt)}
	= \DS(\mpt, \s(\npt)).
}
The gradient with respect to the mean parameters at $(\phit, \mpt)$ is then
\aligns{
	\nabla_{\mp} \Aug(\npt, \dAS(\mp)) \cond_{\mp=\mpt}
	&=
	\nabla_\mp 
		\KL{q(z\cond\dAS(\mp))}{p(\lat \cond \data, \npt)}
	\cond_{\mp=\mpt},
	\\
	&=
	\nabla_\mp D_{\AS}(\mp, \s(\npt)) \cond_{\mp=\mpt},
	\\
	&= 
	\nabla_{\mp} \brackets{\,
			\AS(\mp) - \AS(s(\npt)) - \lin{\dAS(s(\npt)), \mp - \s(\npt)}
	\,} \cond_{\mp=\mpt},
	\\
	&=
	\dAS(\mpt)
	- 
	\dAS(s(\npt))
	= \phit - \phitt.
}
We can then express the update of the \Estep from $\phit$ to $\phitt$ 
as a mirror descent step, 
updating the natural parameters using the gradient with respect to the natural parameters,
\aligns{
	\phitt
	&= \phit - \nabla_{\mp} \Aug(\npt, \dAS(\mpt)).
}
To express this update in natural parameters only, 
recall from \cref{app:recap-ef,app:recap-fisher}
that $\nabla^2 \AS(\mpt) = [\nabla^2 \A(\phit)]^{-1}$
and that $\nabla^2 \A(\phit)$ is the Fisher information matrix 
of the distribution $q(\lat\cond\phi)$, $I_{\lat\cond\phi}(\phit)$.
The update is then equivalent to a natural gradient update in natural parameters, 
as 
\aligns{
	\nabla_{\mp} \Aug(\npt, \dAS(\mpt))
	= \nabla^2 \AS(\mpt) \nabla_{\phi} \Aug(\npt, \phit)
	= [I_{\lat\cond\phi}(\phit)]^{-1} \nabla_{\phi} \Aug(\npt, \phit).
}
Using those expression for the \aKL divergence and parameter updates yields the bound
\aligns{
	\Loss(\npt) - \Loss(\nptm)
	\leq 
	- \KL{q(z\cond\phit)}{p(\lat \cond \data, \npt)}
	= - \D(\phit - \nabla_{\mp} \Aug(\nptt, \dAS(\mpt)), \phit).
}
Reorganizing terms gives
\aligns{
	\D(\phit - \nabla_{\mp} \Aug(\npt, \dAS(\mpt)), \phit)
	\leq 
	\Loss(\nptm) - \Loss(\npt).
}
And averaging over all iterations 
and bounding $\Loss(\np_\maxiter) > \Loss^*$ finishes the proof.
\end{proof}

\end{document}